\newcommand{\abs}[1]{\left\vert#1\right\vert}
\newcommand{\set}[1]{\left\{#1\right\}}
\newcommand{\norm}[1]{\left\Vert#1\right\Vert}
\newcommand{\eproof}{\hfill $\square$}
\newcommand{\dom}[1]{\mathrm{dom}(#1)}
\renewcommand{\vec}{\mathbf{vec}}
\newcommand{\trace}{\mathrm{tr}}
\newcommand{\cov}{\boldsymbol{\Sigma}}
\newcommand{\invcov}{\boldsymbol{\Sigma}^{-1}}
\newtheorem{assumption}{Assumption A.\!\!}
\newcommand{\vectornorm}[1]{\|#1\|}
\newcommand{\T}{\mathbf{\Theta}}
\newcommand{\DT}{\boldsymbol{\Delta}}
\newcommand{\xb}{\mathbf{x}}
\newcommand{\phib}{\mbox{${\mathbf \phi}$}}
\newcommand{\yb}{\mathbf{y}}
\newcommand{\zb}{\mathbf{z}}
\newcommand{\ub}{\mathbf{u}}
\newcommand{\vb}{\mathbf{v}}
\renewcommand{\sb}{\mathbf{s}}
\newcommand{\db}{\mathbf{d}}
\newcommand{\pb}{\mathbf{p}}
\newcommand{\eb}{\mathbf{e}}
\newcommand{\rb}{\mathbf{r}}
\newcommand{\Hb}{\mathbf{H}}
\newcommand{\tr}{\text{tr}}
\DeclareMathOperator*{\argmin}{arg\,min}
\begin{document}

\title{Composite Self-Concordant Minimization}

 \author{\name Quoc Tran-Dinh$^{\dagger}$ \email quoc.trandinh@epfl.ch
        \AND
        \name Anastasios Kyrillidis$^{\dagger}$ \email anastasios.kyrillidis@epfl.ch
	\AND
	\name Volkan Cevher$^{\dagger}$ \email volkan.cevher@epfl.ch\\
	\addr $^{\dagger}$Laboratory for Information and Inference Systems (LIONS)\\
        \'{E}cole Polytechnique F\'{e}d\'{e}rale de Lausanne (EPFL)\\
        CH1015-Lausanne, Switzerland
	}
	
\editor{Unknown}

\maketitle	

\begin{abstract}
We propose a variable metric framework for minimizing the sum of a self-concordant function and a possibly non-smooth convex function, endowed with an easily computable proximal operator. We theoretically establish the convergence of our framework without relying on the usual Lipschitz gradient assumption on the smooth part. An important highlight of our work is a new set of analytic step-size selection and correction procedures based on the structure of the problem. 
We describe concrete algorithmic instances of our framework for several interesting applications and demonstrate them numerically on both synthetic and real data.
\end{abstract}

\begin{keywords}
Proximal-gradient/Newton method, composite minimization, self-concordance, sparse convex optimization, graph learning.
\end{keywords}

\section{Introduction}\label{sec:intro}
The literature on the formulation, analysis, and applications of \textit{composite convex minimization}  is ever expanding due to its broad applications in machine learning, signal processing, and statistics. By composite minimization, we refer to the following optimization problem:
\begin{align}\label{eq:COP} 
F^{*} := \min_{\xb \in \mathbb{R}^n} \left \{ F(\xb)~\left|~~ F(\xb) := f(\xb) + g(\xb) \right.\right\},
\end{align}
where $f$ and $g$ are both closed and convex, and $n$ is the problem dimension. In the canonical setting of the composite minimization problem \eqref{eq:COP}, the functions $f$ and $g$ are assumed to be smooth and non-smooth, respectively \citep{Nesterov2007}. Such composite objectives naturally arise, for instance, in maximum a posteriori model estimation, where we regularize a model likelihood function as measured by a data-driven smooth term $f$ with a non-smooth model prior $g$, which carries some notion of  model complexity (e.g., sparsity, low-rankness, etc.).

\emph{In theory}, many convex problem instances of the form \eqref{eq:COP} have a well-understood structure, and hence high accuracy solutions can be efficiently obtained with polynomial time methods, such as interior point methods (IPM) after transforming them into conic quadratic programming or semidefinite programming formulations \citep{BenTal2004,Grant2006,Nesterov1994}. 
\emph{In practice}, however, the curse-of-dimensionality renders these methods impractical for large-scale problems. Moreover, the presence of a non-smooth term $g$ prevents direct applications of scalable smooth optimization techniques, such as sequential linear or quadratic programming. 
\begin{figure}
\begin{minipage}[l]{0.39\textwidth}\centering
\includegraphics[width=5.6cm, height=3.0cm]{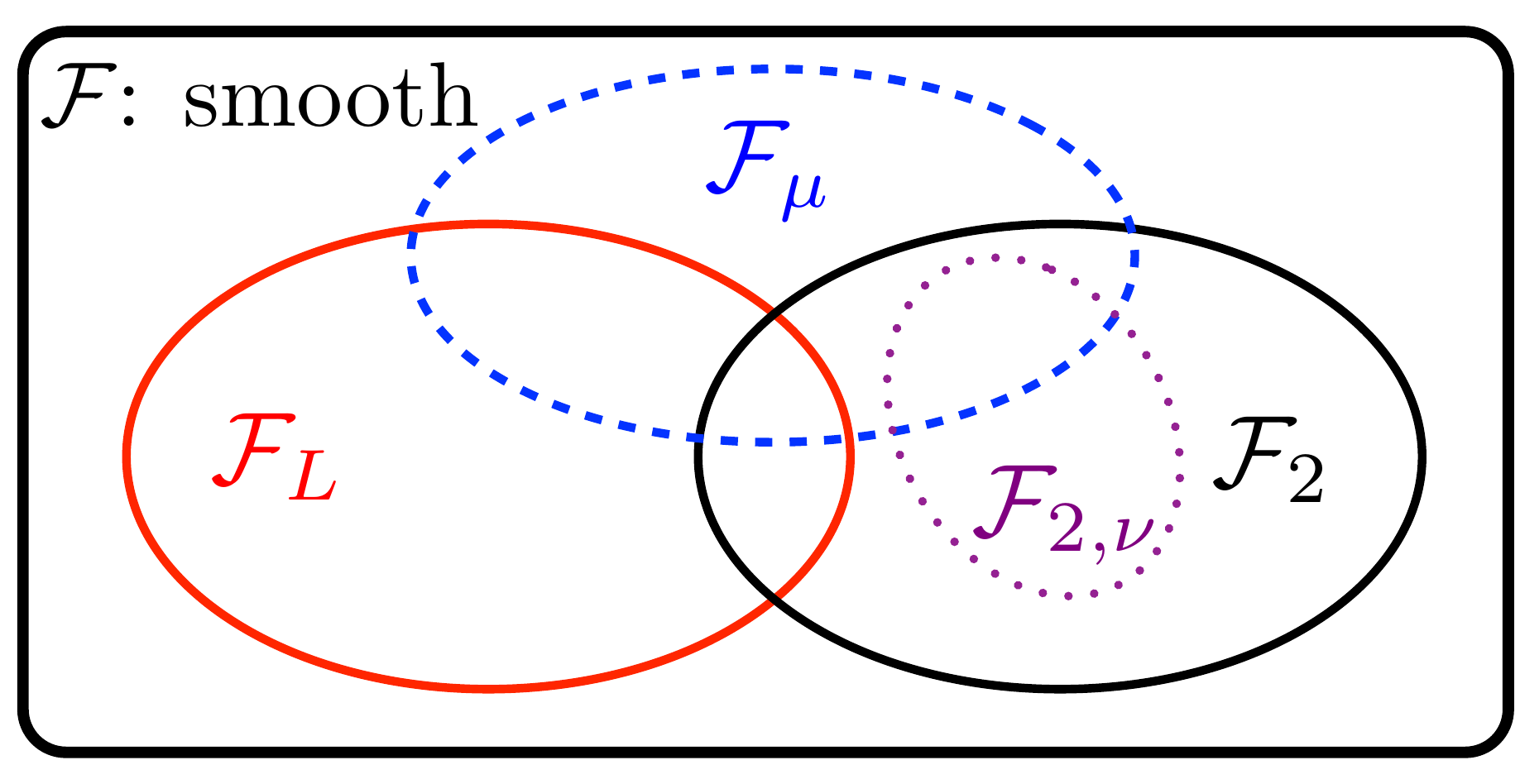}
\end{minipage}
\begin{minipage}[r]{0.50\textwidth}\centering
\begin{footnotesize}
\begin{tabular}{c|c} \toprule
\textbf{Class}              &  \textbf{Property} \\
                                   &  $\xb, \yb \in \dom{f}, ~\vb \in\mathbb{R}^n, 0\leq \mu \leq L <+\infty$ \\ \cmidrule{1-2}
$\mathcal{F}_L$         &  $\norm{\nabla f(\xb) - \nabla f(\yb)}_{*} \leq L \norm{\xb-\yb} $\\ \cmidrule{1-2}
$\mathcal{F}_\mu$     & $\frac{\mu}{2}\norm{\xb-\yb}^2 + f(\xb) + \nabla f(\xb)^T(\yb-\xb)  \leq f(\yb)$\\ \cmidrule{1-2}
$\mathcal{F}_2$         &  $\abs{\varphi^{\prime\prime\prime}(t)} \leq 2 \varphi^{\prime\prime}(t)^{3/2}$: $\varphi(t) = f(\xb+t\mathbf{v}),~ t\in\mathbb{R}$\\ \cmidrule{1-2}
$\mathcal{F}_{2,\nu}$ &  $\mathcal{F}_2$ and  $\sup_{\vb \in\mathbb{R}^n}\set{2\nabla{f}(\xb)^T\vb - \norm{\vb}_{\xb}^2} \leq \nu$ \\ \bottomrule
\end{tabular}
\end{footnotesize}
\end{minipage}
\caption{Common structural assumptions on the smooth function $f$.}\label{fig: smooth function classes}
\end{figure} 

Fortunately, we can provably trade-off accuracy with computation by further exploiting the individual structures of $f$ and $g$. Existing methods invariably rely on two structural assumptions that particularly stand out among many others. First, we often assume that $f$ has Lipschitz continuous gradient (i.e., $f \in \mathcal{F}_L$: cf., Fig.~\ref{fig: smooth function classes}). Second, we assume that the proximal operator of $g$ ($\text{prox}^{\Hb}_g(\yb):= \arg\displaystyle\min_{\xb\in\mathbb{R}^n} \left\{g(\xb)+ 1/2\|\xb -\yb \|_{\Hb}^2 \right\}$) is, in a user-defined sense, easy to compute for some $\Hb \succ 0$ (e.g., $\Hb$ is diagonal); i.e., we can computationally afford to apply the proximal operator in an iterative fashion. In this case, $g$ is said to be ``\textit{tractably proximal}''.
On the basis of these structures, we can design algorithms featuring a full spectrum of (nearly) dimension-independent, global convergence rates with well-understood analytical complexity (see Table \ref{table: taxonomy}). 
\begin{table}[!h]
\begin{center}
\caption{Taxonomy of [accelerated] [proximal]-gradient methods when $f \in \mathcal{F}_L$ or proximal-[quasi]-Newton methods when $f\in\mathcal{F}_L \cap\mathcal{F}_{\mu}$  to reach an $\varepsilon$-solution (e.g., $F(\xb^k)-F^*\le \epsilon$).}\label{table: taxonomy}
\vskip0.1cm
\begin{tabular}{c|c|c|c}\toprule
  Order     &   Method example                &                             Main oracle                               &  Analytical complexity \\ \cmidrule{1-4}\cmidrule{1-4}
  $1$-st      &   [Accelerated]$^{a}$-[proximal]-gradient$^{b}$      &  $\nabla f, \text{prox}^{L\mathbb{I}_n}_g$            &  [$\mathcal{O}(\epsilon^{-1/2})$] $\mathcal{O}(\epsilon^{-1})$ \\ \cmidrule{1-4}
$1^+$-th  &   Proximal-quasi-Newton$^{c}$      &  $\Hb_k, \nabla f, \text{prox}^{\Hb_k}_g$              &  $\mathcal{O}(\log \epsilon^{-1})$ or faster \\ \cmidrule{1-4}
$2$-nd     &   Proximal-Newton$^d$               &  $\nabla^2 f, \nabla f, \text{prox}^{\nabla^2 f}_g$   &  $\mathcal{O}(\log\log \epsilon^{-1}) \textrm{[local]}$  \\ \bottomrule
\end{tabular}
\vskip0.1cm
\begin{footnotesize}See \citep{Beck2009}$^{a,b}$,\citep{Becker2012b}$^{c}$,\citep{Lee2012}$^{d}$,\citep{Nesterov2004,Nesterov2007}$^{a,b}$.\end{footnotesize} 
\end{center}
\vskip-0.5cm
\end{table}

Unfortunately,  existing algorithms have become inseparable with the Lipschitz gradient assumption on $f$ and are still being applied to solve \eqref{eq:COP} in applications where this assumption does not hold. For instance, when $\text{prox}^{\mathbf{H}}_g(\yb)$ is not easy to compute, it is still possible to establish convergence---albeit slower---with smoothing, splitting or primal-dual decomposition techniques \citep{Chambolle2011,Eckstein1992,Nesterov2005c,Nesterov2005d,TranDinh2012a}. However, when $f \notin \mathcal{F}_L$, the composite problems of the form \eqref{eq:COP} are not within the full theoretical grasp. In particular, there is no known global convergence rate. One kludge to handle $f \notin \mathcal{F}_L$ is to use sequential quadratic approximation of $f$ to reduce the subproblems to the Lipschitz gradient case. For local convergence of these methods, we need \textit{strong regularity} assumptions on $f$ (i.e., $\mu\mathbb{I} \preceq \nabla^2f(\xb) \preceq L\mathbb{I}$) near the optimal solution. Attempts at global convergence require a \textit{globalization strategy} such as line search procedures (cf., Section \ref{sec: related}). However, neither the strong regularity nor the line search assumptions can be certified \emph{a priori}. 

To this end, we address the following question in this paper: ``Is it possible to efficiently solve non-trivial instances of \eqref{eq:COP} for non-global Lipschitz continuous gradient $f$ with rigorous global convergence guarantees?'' The answer is positive (at least for a broad class of functions): We can still cover a full spectrum of global convergence rates with well-characterizable computation and accuracy trade-offs (akin to Table \ref{table: taxonomy} for $f\in \mathcal{F}_L$) for self-concordant $f$ (in particular, self-concordant barriers) \citep{Nemirovski2009,Nesterov1994}:

\begin{definition}[Self-concordant (barrier) functions]\label{de:concordant}
A convex function $f: \mathbb{R}^n \rightarrow \mathbb{R} $ is  said to be self-concordant $($i.e., $f\in \mathcal{F}_{M}$ $)$ with  parameter $M \geq 0$, if $\abs{\varphi'''(t)} \leq M\varphi''(t)^{3/2}$, where $\varphi(t) := f(\xb + t\vb)$ for all $ t \in\mathbb{R}$, $\xb \in \dom{f}$ and $\vb \in \mathbb{R}^{n}$ such that $\xb + t\vb \in \dom{f}$. 
When $M = 2$, the function $f$ is said to be a standard self-concordant, i.e., $f \in \mathcal{F}_2$.\footnote{We use this constant for convenience in the derivations since if $f \in \mathcal{F}_M$, then $(M^2/4)f \in \mathcal{F}_2$.}
A standard self-concordant function $f \in \mathcal{F}_2$ is a $\nu$-self-concordant barrier of  a given convex set $\Omega$ with parameter  $\nu> 0$, i.e., $f\in \mathcal{F}_{\nu}$, when $\varphi$ also satisfies $\abs{\varphi'(t)} \leq \sqrt{\nu}\varphi''(t)^{1/2}$ and $f(\xb)\to+\infty$ as $\xb\to\partial{\Omega}$, the boundary of $\Omega$. 
\end{definition}
While there are other definitions of self-concordant functions and self-concordant barriers \citep{Boyd2004,Nemirovski2009,Nesterov1994,Nesterov2004}, we use Definition \ref{de:concordant} in the sequel, unless otherwise stated. 

\subsection{Why is the assumption $f \in \mathcal{F}_2$ interesting for composite minimization?}
The assumption $f \in \mathcal{F}_2$ in \eqref{eq:COP} is quite natural for two reasons. First, several important applications directly feature a self-concordant $f$, which does not have global Lipschitz continuous gradient. Second, self-concordant composite problems can enable approximate solutions of general constrained convex problems where the constraint set is endowed with a $\nu$-self-concordant barrier function.\footnote{Let us consider a constrained convex minimization $\xb^{*}_C := \arg\min_{\xb \in C} g(\xb)$, where the feasible convex set $C$ is endowed with a $\nu$-self-concordant barrier $\Psi_C(\xb)$. If we let $f(\xb) := \frac{\epsilon}{\nu} \Psi_C(\xb)$, then the solution $\xb^{*}$ of the composite minimization problem \eqref{eq:COP} well-approximates $\xb^{*}_C$ as  $g(\xb^{*}) \leq g(\xb^{*}_C) + (\nabla f(\xb^{*}) +\partial g(\xb^{*}))^T(\xb^{*} - \xb^{*}_C) + \epsilon$. The middle term can be controlled by accuracy at which we solve the composite minimization problem \citep{Nesterov2007,Nesterov2011c}.} 
Both settings clearly benefit from scalable algorithms. Hence, we now highlight three examples below, based on compositions with the $\log$-functions. Keep in mind that this list of examples is not meant to be exhaustive. 

\paragraph{Log-determinant:} 
The matrix variable function $f(\mathbf{\Theta}) := - \log \det \mathbf{\Theta}$ is self-concordant with $\dom{f} := \set{ \mathbf{\Theta}\in \mathbb{S}^p  ~|~ \mathbf{\Theta}\succ 0}$, where $\mathbb{S}^p$ is the set of $p\times p$ symmetric matrices. As a stylized application,  consider learning a Gaussian Markov
random field (GMRF) of $p$ nodes/variables from a dataset $\mathcal{D} := \set{\boldsymbol{\phib}_{1}, \boldsymbol{\phib}_{2}, \dots, \boldsymbol{\phib}_{m}}$, where $\boldsymbol{\phib}_j \in
\mathcal{D}$ is a $p$-dimensional random vector with Gaussian distribution $\mathcal{N}(\boldsymbol{\mu}, \cov)$. Let $\T := \invcov$ be the inverse covariance
(or the precision) matrix for the model. To satisfy the conditional dependencies with respect to the GMRF, $\T$ must have zero in $(\T)_{ij}$
corresponding to the absence of an edge between node $i$ and node $j$; cf., \citep{Dempster1972}. 

We can learn GMRF's with theoretical  guarantees from as few as $ \mathcal{O}(d^2\log p)$ data samples, where $d$ is the graph node degree, via $\ell_1$-norm regularization formulation (see \citep{Ravikumar2011}):
\begin{equation}\label{eq:glearn_prob}
\mathbf{\Theta}^{\ast} :=\argmin_{\T \succ 0} \Big\{ \underbrace{-\log \det (\T) + \trace(\widehat{\cov} \T)}_{=:f(\T)}  + \underbrace{\rho
\vectornorm{\vec(\T)}_1}_{=:g(\T)} \Big \}, 
\end{equation} 
where $\rho > 0$ parameter balances a Gaussian model likelihood and the sparsity of the solution, $\widehat{\cov} $ is the empirical covariance estimate, and $\vec$ is the vectorization operator. The  formulation also applies for learning models beyond GMRF's, such as the Ising model, since $f(\T)$ acts also as a Bregman distance \citep{Banerjee2008}. 

Numerical solution methods for  solving problem \eqref{eq:glearn_prob} have been extensively studied, e.g. in \citep{Banerjee2008, Hsieh2011, Lee2012, Lu2010, Olsen2012, Rolfs2012, Scheinberg2009, Scheinberg2010, Yuan2012}. However, none so far exploits $f\in \mathcal{F}_{2,\nu}$ and feature global convergence guarantees: cf., Sect.~\ref{sec: related}. 

\paragraph{Log-barrier for linear inequalities:}  
The function $f(\xb) := -\log(\mathbf{a}^T\xb-b)$ is a self-concordant barrier with $\dom{f} := \set{ \xb \in \mathbb{R}^n ~|~ \mathbf{a}^T\xb > b}$. 
As a stylized application,  consider the low-light imaging problem in signal processing \citep{Harmany2012}, where the imaging data is collected by counting photons hitting a detector over the time. In this setting, we wish to accurately reconstruct an image in low-light, which leads to noisy measurements due to low photon count levels. We can express our observation model using the Poisson distribution as: 
\begin{equation*}
\mathbb{P}(\yb | \mathcal{A}(\xb)) = \prod_{i=1}^m\frac{(\mathbf{a}_i^T\xb)^{y_i}}{y_i!}e^{-{\mathbf{a}_i^T\xb}},
\end{equation*}
where  ${\xb}$ is the true image, $\mathcal{A}$ is a linear operator that projects the scene onto the set of observations, $\mathbf{a}_i$ is the $i$-th row of $\mathcal{A}$, and $\yb \in \mathbb{Z}^m_{+}$ is a vector of observed photon counts. 

Via the log-likelihood formulation, we stumble upon a composite minimization problem:
\begin{equation}\label{eq:Poisson_prob}
\xb^{*} := \argmin_{\xb\in\mathbb{R}^n}\Big\{\underbrace{\sum_{i=1}^m\mathbf{a}_i^T\xb - \sum_{i=1}^m y_i\log(\mathbf{a}_i^T\xb)}_{ =: {f}(\xb)} + g(\xb)\Big\},
\end{equation}
where ${f}(\xb) $ is self-concordant (but not standard). In the above formulation, the typical image priors $g(\xb)$ include the $\ell_1$-norm for sparsity in a known basis, total variation semi-norm of the image, and the positivity of the image pixels.  While the formulation \eqref{eq:Poisson_prob} seems specific to imaging, it is also common in sparse regression with unknown noise variance \citep{Stadler2012}, heteroschedastic LASSO \citep{Dalalyan2013}, barrier approximations of, e.g., the Dantzig selector \citep{Candes2007} and quantum tomography \citep{Banaszek1999} as well.

The current state of the art solver is called SPIRAL-TAP \citep{Harmany2012}, which biases the logarithmic term (i.e., $\log(\mathbf{a}_i^T\xb + \varepsilon)\rightarrow\log(\mathbf{a}_i^T\xb)$, where $\varepsilon \ll 1$) and then applies non-monotone composite gradient descent algorithms for $\mathcal{F}_L$ with a Barzilai-Borwein step-size as well as other line-search strategies. 

\paragraph{Logarithm of concave quadratic functions:}
The function $f(\xb) := - \log\left(\sigma^2 - \|\mathbf{A}\xb-\yb\|^2_2\right)$ is self-concordant with $\dom{f} := \set{ \xb \in \mathbb{R}^n ~|~ \|\mathbf{A}\xb - \yb\|_2^2 < \sigma^2}$. As a stylized application, we consider the basis pursuit denoising (BPDN) formulation \citep{Berg2008} as:
\begin{equation}\label{eq: MRI}
  \xb^{*} := \arg\!\min_{\xb \in \mathbb{R}^n} \set{g(\xb) ~|~  \norm{\mathbf{A}\xb - \yb}_2^2 \le \sigma^2}.
\end{equation}
The BPDN criteria is commonly used in magnetic resonance imaging (MRI)
where $\mathbf{A}$ is a subsampled Fourier operator, $\yb$ is the MRI scan data, and $\sigma^2$ is a known machine noise level (i.e., obtained during a pre-scan). In \eqref{eq: MRI}, $g$ is an image prior, e.g., similar to the Poisson imaging problem. Approximate solutions to \eqref{eq: MRI} can be obtained via a barrier formulation:
\begin{equation}
  \xb_t^{*} := \argmin_{\xb \in \mathbb{R}^n}\Big\{ \underbrace{- t \log\left(\sigma^2 - \norm{\mathbf{A}\xb-\yb}_2^2\right)}_{ =: {f}(\xb)} ~+ ~g(\xb) \Big\},
\end{equation}
where $t > 0$ is a penalty parameter which controls the quality of the approximation. 
The BPDN formulation is quite generic and has several other applications in statistical regression, geophysics, and signal processing.

Several different approaches solve the BPDN problem \eqref{eq: MRI}, some of which require projections onto the constraint set, including Douglas-Rachford splitting, proximal methods, and the SPGL$_1$ method \citep{Berg2008,Combettes2005}.

\subsection{Related work}\label{sec: related}
Our attempt is to briefly describe the work that revolves around \eqref{eq:COP} with the main assumptions of $f\in \mathcal{F}_L$ and the proximal operator of $g$ being computationally tractable. In fact, Douglas-Rachford splitting methods can obtain numerical solutions to \eqref{eq:COP}  when the self-concordant functions are endowed with tractable proximal maps. However, it is computationally  easier to calculate the gradient of $f \in \mathcal{F}_2$ than their proximal maps.

One of the main approaches in this setting is based on operator splitting.  By presenting the optimality condition of problem \eqref{eq:COP} as an inclusion of two monotone operators, one can apply splitting techniques, such as forward-backward or Douglas-Rachford methods,  to solve the resulting monotone inclusion \citep{BricenoArias2011,Facchinei2003,Goldstein2009}. In our context, several variants of this approach have been studied.  For example, projected gradient  or proximal-gradient methods and fast proximal-gradient methods have been considered, see, e.g., \citep{Beck2009,Mine1981,Nesterov2007}.  In all these methods, the main assumption required to prove the convergence is the global Lipschitz continuity of the gradient of the smooth function $f$. Unfortunately, when $ f\notin \mathcal{F}_L$ but $f \in \mathcal{F}_2$, these theoretical results on the global convergence and the global convergence rates are no longer applicable.

Other mainstream approaches for \eqref{eq:COP} include augmented Lagrangian and alternating techniques: cf., \citep{Boyd2011,Goldfarb2012}. These methods have empirically proven to be quite powerful in specific applications. The main disadvantage of these methods is the manual tuning of the penalty parameter in the augmented Lagrangian function, which is not yet well-understood for general problems. Consequently, the analysis of global convergence as well as the convergence rate is an issue since the performance of the algorithms strongly depends on the choice of this penalty parameter in practice. Moreover, as indicated in a recent work  \citep{Goldstein2012}, alternating direction methods of multipliers as well as alternating linearization methods can be viewed as splitting methods in the convex optimization context. Hence, it is unclear if this line of work is likely to lead to any rigorous guarantees when $f\in \mathcal{F}_2$.

An emerging direction for solving composite minimization problems \eqref{eq:COP} is based on the proximal-Newton method. The origins of this method can be traced back to the work of \citep{Bonnans1994}, which relies on the concept of \textit{strong regularity} introduced by \citep{Robinson1980} for generalized equations. In the convex case, this method has been studied by several authors such as \citep{Becker2012b,Lee2012,Schmidt2011}.  So far, methods along this line are applied to solve a generic problem of the form \eqref{eq:COP} even when $f\in \mathcal{F}_2$. The convergence analysis of these methods is encouraged by standard Newton methods and requires the strong  regularity of the Hessian of $f$ near the optimal solution (i.e., $\mu\mathbb{I} \preceq \nabla^2f(\xb) \preceq L\mathbb{I}$). This assumption used in \citep{Lee2012} is stronger than assuming $\nabla^2f(\xb^{*})$ to be positive definite at the solution $\xb^{*}$ as in our approach below. 
Moreover, the global convergence can only be proved by applying a certain globalization strategy such as line-search \citep{Lee2012} or trust-region. Unfortunately, none of these assumptions can be verified before the algorithm execution for the intended applications. 
By exploiting the self-concordance concept, we can show the global convergence of proximal-Newton methods without any globalization strategy (e.g., linesearch or trust-region approach).

\subsection{Our contributions} 
Interior point methods are always an option while solving the self-concordant composite problems \eqref{eq:COP} numerically by means of disciplined convex programming \citep{Grant2006,Lofberg2004}. More concretely, in the IPM setting, we set up an equivalent problem to \eqref{eq:COP} that typically avoids the non-smooth term $g(x)$ in the objective by lifting the problem dimensions with slack variables and introducing additional constraints. The new constraints may then be embedded into the objective through a barrier function. We then solve a sequence of smooth problems (e.g., with Newton methods) and ``path-follow''\footnote{It is also referred to as a homotopy method.} to obtain an accurate solution \citep{Nemirovski2009,Nesterov2004}.  In this loop, many of the underlying structures within the original problem, such as sparsity,  can be lost due to pre-conditioning or Newton direction scaling (e.g., Nesterov-Todd scaling, \cite{Nesterov1997}). The efficiency and the memory bottlenecks of the overall scheme then heavily depends on the workhorse algorithm that solves the smooth problems.

In stark contrast, we introduce an algorithmic framework that directly handles the composite minimization problem \eqref{eq:COP} without increasing the original problem dimensions. For problems of larger dimensions, this is the main argument in favor of our approach. Instead of solving a sequence of smooth problems, we solve a sequence of non-smooth proximal problems with a variable metric (i.e., our workhorse). Fortunately, these proximal problems feature the composite form \eqref{eq:COP} with a Lipschitz  gradient (and oft-times strongly convex) smooth term. Hence, we leverage the tremendous amount of research (cf., Table \ref{table: taxonomy}) done over the last decades. Surprisingly, we can even retain the original problem structures that lead to computational ease in many cases (e.g., see Section \ref{sec:app_graph_select}). 

Our specific contributions can be summarized as follows:
\begin{enumerate}
  \item We propose a new \textit{variable metric} framework for minimizing the sum $f+g$ of a self-concordant function $f$ and a convex, possibly nonsmooth function $g$. Our approach relies on the solution of a convex subproblem obtained by linearizing and regularizing the first term $f$. To achieve monotonic descent, we develop a new set of \textit{analytic} step-size selection and correction procedures based on the structure of the problem. 
  
  \item We establish both the global and the local convergence of different variable metric strategies. We first derive an expected result: when the variable metric is the Hessian $\nabla^2f(\xb^k)$ of $f$ at  iteration $k$, the resulting algorithm locally exhibits quadratic convergence rate within an explicit region. We then show   that variable metrics satisfying the Dennis-Mor\'{e}-type condition \citep{Dennis1974} exhibit superlinear convergence. 
  
  \item We pay particular attention to diagonal variable metrics as many of the proximal subproblems can be solved exactly (i.e., in closed form). We derive conditions on when these variants achieve locally linear convergence. 

 \item We apply our algorithms to the aforementioned real-world and synthetic problems to highlight the strengths and the weaknesses of our scheme. For instance, in the graph learning problem \eqref{eq:glearn_prob}, our framework can avoid matrix inversions as well as Cholesky decompositions in learning graphs. In Poisson intensity reconstruction \eqref{eq:Poisson_prob}, up to around $80\times$ acceleration is possible over the state-of-the-art solver. 
\end{enumerate}

We highlight three key practical contributions to numerical optimization. First, in the proximal-Newton method, our analytical step-size procedures allow us to do away with any globalization strategy (e.g., line-search). This has a significant practical impact when the evaluation of the functions is expensive. We show how to combine the analytical step-size selection with the standard backtracking or forward line-search procedures to enhance the global convergence of our method. Our analytical quadratic convergence characterization helps us adaptively switch from \textit{damped} step-size to a \textit{full} step-size. Second, in the proximal-gradient method setting, we establish a step-size selection and correction mechanism. The step-size selection procedure can be considered as a predictor, where existing step-size rules that leverage local information can be used. The step-size corrector then adapts the local information of the function to achieve the best theoretical decrease in the objective function. While our procedure does not require any function evaluations, we can further enhance convergence whenever we are allowed function evaluations.  Finally, our framework, as we demonstrate in \citep{TranDinh2013e}, accommodates a path-following strategy, which enable us to approximately solve constrained non-smooth convex minimization problems with rigorous guarantees. 

\paragraph{Paper outline.} In Section \ref{sec:math_tool}, we first recall some fundamental concepts of convex optimization and self-concordant functions used in this paper. Section \ref{sec:alg_framework} presents our algorithmic framework using three different instances with convergence results, complexity estimates and modifications.  Section \ref{sec:applications} deals with three concrete instances of our algorithmic framework.  Section \ref{sec:num_experiment} provides numerical experiments to illustrate the impact of the proposed methods. Section \ref{sec:conclude} concludes the paper.

\section{Preliminaries}\label{sec:math_tool}

\paragraph{Notation:}
We reserve lower-case and bold lower-case letters for scalar and vector representation, respectively. Upper-case bold letters denote matrices. We denote $\mathbb{S}^p_{+}$ (reps., $\mathbb{S}^p_{++}$) for the set of symmetric positive definite (reps., positive semidefinite) matrices of size $p\times p$. For a proper, lower semicontinuous convex function $f$ from $\mathbb{R}^n$ to $\mathbb{R}\cup\set{+\infty}$, we denote its domain by $\dom{f}$, i.e., $\dom{f} := \set{\xb \in\mathbb{R}^n~|~ f(\xb) < +\infty}$ (see, e.g., \cite{Rockafellar1970}).

\paragraph{Weighted norm and local norm:}
Given a matrix $\Hb \in \mathbb{S}^n_{++}$, we define the weighted norm $\norm{\xb}_{\Hb} := \sqrt{\xb^T\Hb\xb}$, $\forall \xb\in\mathbb{R}^n$; its dual norm is defined as $\norm{\xb}^{*}_{\Hb} :=
\max_{\small{\norm{\bf y}_\Hb\leq 1}}\yb^T\xb = \sqrt{\xb^T\Hb^{-1}\xb}$.  Let $f \in \mathcal{F}_2$ and $\xb\in \dom{f}$ so that $\nabla^2f(\xb)$ is positive definite.
For a given vector $\vb \in \mathbb{R}^{n}$, the local norm around $\xb \in \dom{f}$ with respect to $f$ is defined as $\norm{{\bf
v}}_\xb := \left(\vb^T\nabla^2f(\xb)\vb\right)^{1/2}$, while the corresponding dual norm is given by $\norm\vb_{\xb}^{*} =\left(\vb^T\nabla^{2} f(\xb)^{-1}\vb\right)^{1/2}$. 

\paragraph{Subdifferential and subgradient:}  
Given a proper, lower semicontinuous convex function, we define the subdifferential of $g$ at $\xb \in \dom{g}$ as 
\begin{equation*}
\partial{g}(\xb) := \set{\vb \in \mathbb{R}^n ~|~ g(\yb) - g(\xb) \geq \vb^T(\yb - \xb), ~\forall \yb\in\dom{g}}.
\end{equation*} 
If $\partial{g}(\xb)\neq\emptyset$ then each element in $\partial{g}(\xb)$ is called a subgradient of $g$ at $\xb$.
In particular, if $g$ is differentiable, we use $\nabla g(\xb)$ to denote its derivative at $\xb\in\dom{g}$, and $\partial{g}(\xb) \equiv \set{\nabla{f}(\xb)}$.

\paragraph{Proximity operator:}  
A basic tool to handle the nonsmoothness of a convex function $g$ is its proximity operator (or proximal operator) $\mathrm{prox}^{\Hb}_g$, whose definition is given in Section \ref{sec:intro}. For notational convenience in our derivations, we alter this definition in the sequel as follows: Let $g$ be a proper lower semicontinuous and convex in $\mathbb{R}^n$ and $\Hb \in \mathbb{S}_{+}^n$.
We define 
\begin{equation}\label{eq:PgH}
P^g_{\Hb}(\ub) := \mathrm{arg}\!\min_{\xb\in \mathbb{R}^n}\set{g(\xb) + \frac{1}{2} \xb^T\Hb\xb - \ub^T\xb},~~\forall \ub \in \mathbb{R}^n, 
\end{equation}
as the proximity operator for the nonsmooth $g$, which has the following properties. 

\begin{lemma}\label{le:Pg_properties}
Assume that $\Hb\in\mathbb{S}^n_{++}$. Then, the operator $P^g_{\Hb}$ in \eqref{eq:PgH} is single-valued and satisfies the following property:
\begin{equation}\label{eq:P_g_property}
(P^g_{\Hb}(\ub) - P^g_{\Hb}(\vb))^T(\ub-\vb) \geq \norm{P^g_{\Hb}(\ub) - P^g_{\Hb}(\vb)}_{\Hb}^2,
\end{equation}
for all $\ub, \vb \in \mathbb{R}^n$. Consequently, $P^g_{\Hb}$ is a nonexpansive mapping, i.e., 
\begin{equation}\label{eq:P_g_property2}
\norm{P^g_{\Hb}(\ub) - P^g_{\Hb}(\vb)}_{\Hb} \leq \norm{\ub-\vb}_{\Hb}^{*}. 
\end{equation}
\end{lemma}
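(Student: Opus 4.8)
The plan is to exploit the strong convexity induced by $\Hb\in\mathbb{S}^n_{++}$ together with the monotonicity of the subdifferential $\partial g$. First, for single-valuedness, observe that the objective in \eqref{eq:PgH}, namely $\phi_{\ub}(\xb) := g(\xb) + \frac{1}{2}\xb^T\Hb\xb - \ub^T\xb$, is the sum of the proper, lower semicontinuous convex function $g$ and the smooth function $\xb\mapsto \frac{1}{2}\xb^T\Hb\xb - \ub^T\xb$, which is strongly convex with modulus $\lambda_{\min}(\Hb)>0$ and has full domain $\mathbb{R}^n$. Hence $\phi_{\ub}$ is proper, lower semicontinuous, and strongly convex, so it attains its infimum at a unique point; this shows $P^g_{\Hb}(\ub)$ is well defined and single-valued for every $\ub\in\mathbb{R}^n$.

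Next I would write down the (generalized) first-order optimality condition. Since the quadratic term is finite-valued everywhere, the subdifferential sum rule gives $\mathbf{0}\in \partial g(\xb) + \Hb\xb - \ub$ at the minimizer; equivalently, writing $\xb_{\ub} := P^g_{\Hb}(\ub)$ and $\xb_{\vb} := P^g_{\Hb}(\vb)$, we have $\ub - \Hb\xb_{\ub}\in\partial g(\xb_{\ub})$ and $\vb - \Hb\xb_{\vb}\in\partial g(\xb_{\vb})$. Monotonicity of $\partial g$ then yields $\big((\ub - \Hb\xb_{\ub}) - (\vb - \Hb\xb_{\vb})\big)^T(\xb_{\ub} - \xb_{\vb})\ge 0$, which rearranges to $(\ub-\vb)^T(\xb_{\ub}-\xb_{\vb}) \ge (\xb_{\ub}-\xb_{\vb})^T\Hb(\xb_{\ub}-\xb_{\vb}) = \norm{\xb_{\ub}-\xb_{\vb}}_{\Hb}^2$, i.e. \eqref{eq:P_g_property}.

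Finally, \eqref{eq:P_g_property2} follows by a Cauchy--Schwarz estimate applied to the left-hand side of \eqref{eq:P_g_property}: writing $(\ub-\vb)^T(\xb_{\ub}-\xb_{\vb}) = \big(\Hb^{-1/2}(\ub-\vb)\big)^T\big(\Hb^{1/2}(\xb_{\ub}-\xb_{\vb})\big) \le \norm{\ub-\vb}_{\Hb}^{*}\,\norm{\xb_{\ub}-\xb_{\vb}}_{\Hb}$, combining with \eqref{eq:P_g_property}, and dividing by $\norm{\xb_{\ub}-\xb_{\vb}}_{\Hb}$ (the case $\xb_{\ub}=\xb_{\vb}$ being trivial). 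I do not anticipate a genuine obstacle here; the only point requiring a bit of care is the legitimacy of the subdifferential sum rule used to pass to the optimality condition for the nonsmooth $g$, which is guaranteed because the smooth quadratic has domain all of $\mathbb{R}^n$. One could alternatively bypass this by using the variational (Moreau-type) characterization of the minimizer directly: comparing $\phi_{\ub}$ at $\xb_{\ub}$ versus $\xb_{\vb}$ and $\phi_{\vb}$ at $\xb_{\vb}$ versus $\xb_{\ub}$, then adding the two resulting inequalities and using strong convexity to recover \eqref{eq:P_g_property}.
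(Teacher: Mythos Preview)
Your proof is correct and follows essentially the same route as the paper: optimality conditions give $\ub-\Hb\xb_{\ub}\in\partial g(\xb_{\ub})$ and $\vb-\Hb\xb_{\vb}\in\partial g(\xb_{\vb})$, monotonicity of $\partial g$ yields \eqref{eq:P_g_property}, and the generalized Cauchy--Schwarz inequality then gives \eqref{eq:P_g_property2}. Your version is in fact a bit more careful in justifying the subdifferential sum rule and the single-valuedness, but the argument is the same.
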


\begin{proof}
The single-valuedness of $P^g_{\Hb}$ is obvious due to the strong convexity of the objective function in \eqref{eq:PgH}.
Let $\boldsymbol{\xi}_{\ub} := P^g_{\Hb}(\ub)$ and $\boldsymbol{\xi}_{\vb} := P^g_{\Hb}(\vb)$. By the definition of $P^g_{\Hb}$, we have $\ub - \Hb\boldsymbol{\xi}_{\ub} \in \partial{g}(\boldsymbol{\xi}_{\ub})$ and $\vb - \Hb\boldsymbol{\xi}_{\ub} \in \partial{g}(\boldsymbol{\xi}_{\vb})$. Since $g$ is convex, we have $\left(\ub - \Hb\boldsymbol{\xi}_{\ub} - (\vb - \Hb\boldsymbol{\xi}_{\vb})\right)^T(\boldsymbol{\xi}_{\ub} - \boldsymbol{\xi}_{\vb}) \geq 0$. This inequality leads to $(\ub - \vb)^T(\boldsymbol{\xi}_{\ub} - \boldsymbol{\xi}_{\vb})  \geq (\boldsymbol{\xi}_{\ub} - \boldsymbol{\xi}_{\vb})^T\Hb(\boldsymbol{\xi}_{\ub} - \boldsymbol{\xi}_{\vb}) = \norm{\boldsymbol{\xi}_{\ub} - \boldsymbol{\xi}_{\vb}}_{\Hb}^2$ which is indeed \eqref{eq:P_g_property}. 
Via the generalized Cauchy-Schwarz inequality,  \eqref{eq:P_g_property} leads to \eqref{eq:P_g_property2}.
\end{proof}

\paragraph{Key self-concordant bounds:} 
Based on \citep[Theorems 4.1.7 and 4.1.8]{Nesterov2004}, for a given standard self-concordant function $f$, we recall the following inequalities
\begin{eqnarray}\label{eq:SC_bounds}
&\omega(\norm{\yb - \xb}_\xb) + \nabla{f}(\xb)^T(\yb - \xb) + f(\xb) \leq f(\yb),\label{eq:SC_bound1}\\
&f(\yb) \leq f(\xb) + \nabla{f}(\xb)^T(\yb - \xb) + \omega_{*}(\norm{\yb - \xb}_\xb),\label{eq:SC_bound2}
\end{eqnarray}
where $\omega : \mathbb{R}\to\mathbb{R}_{+}$ is defined as $\omega(t) := t - \ln(1+t)$ and $\omega_{*} : [0,1]\to\mathbb{R}_{+}$ is defined as $\omega_{*}(t) := -t - \ln(1-t)$.  These functions
 are both nonnegative, strictly convex and increasing. Hence, \eqref{eq:SC_bound1} holds for all $\xb, \yb\in\dom{f}$, and \eqref{eq:SC_bound2} holds for all $\xb, \yb\in\dom{f}$ such that
$\norm{\yb - \xb}_\xb < 1$. In contrast to the ``global'' inequalities for the function classes $\mathcal{F}_L$ and $\mathcal{F}_\mu$  (cf., Fig.\ \ref{fig: smooth function classes}), the self-concordant inequalities are based on ``local'' quantities. Moreover, these bounds are no longer quadratic which prevents naive applications of the methods from $\mathcal{F}_{\mu, L}$.

\section{Composite self-concordant optimization}\label{sec:alg_framework}
In this section, we propose a \textit{variable metric} optimization framework that rigorously trades off computation and accuracy of solutions without transforming  \eqref{eq:COP}  into a higher dimension smooth convex optimization problem. We assume theoretically that the proximal subproblems can be solved exactly. However, our theory can be analyze for the inexact case, when we solve these problems up to a sufficiently high accuracy (typically, it is at least higher than (e.g., $0.1\varepsilon$) the desired accuracy $\varepsilon$ of \eqref{eq:COP} at the few last iterations), see, e.g., \citep{TranDinh2012c,TranDinh2013e}. In our theoretical characterizations, we only rely on the following assumption:

\begin{assumption}\label{as:A1}
The function $f$ is \textit{convex} and standard self-concordant (see Definition \ref{de:concordant}). The function $g$ from $\mathbb{R}^n$ to $\mathbb{R}\cup\set{+\infty}$ is proper, lower semicontinuous, convex and possibly nonsmooth with a tractable proximity operator.
\end{assumption}

\paragraph{Unique solvability of \eqref{eq:COP} and its optimality condition:}
First, we  show that problem \eqref{eq:COP} is uniquely solvable.
The proof of this lemma can be done similarly as \citep[Theorem 4.1.11]{Nesterov2004} and is provided in the appendix.

\begin{lemma}\label{le:unique_solution}
Suppose that the functions $f$ and $g$ of problem \eqref{eq:COP} satisfy Assumption $\mathbf{A}.\ref{as:A1}$. Let $\lambda(\xb) := \norm{\nabla{f}(\xb) + \vb}_{\xb}^{*} < 1$, for some $\xb\in\dom{F}$ and ${\bf v}\in\partial{g}(\xb)$. 
Then the solution $\xb^{*}$ of \eqref{eq:COP} exists and is unique. 
\end{lemma}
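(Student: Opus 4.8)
The plan is to mimic the classical argument for uniqueness and existence of the minimizer of a self-concordant function (Nesterov, Theorem 4.1.11), but adapted to the composite setting via the self-concordant lower bound \eqref{eq:SC_bound1}. First I would fix the point $\xb\in\dom{F}$ and the subgradient $\vb\in\partial g(\xb)$ with $\lambda(\xb) = \norm{\nabla f(\xb)+\vb}_{\xb}^{*} < 1$, and for an arbitrary $\yb\in\dom{F}$ set $r := \norm{\yb-\xb}_{\xb}$. The idea is to lower bound $F(\yb)-F(\xb)$. Convexity of $g$ gives $g(\yb) \ge g(\xb) + \vb^T(\yb-\xb)$, and the self-concordant bound \eqref{eq:SC_bound1} gives $f(\yb) \ge f(\xb) + \nabla f(\xb)^T(\yb-\xb) + \omega(r)$. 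Adding these,
\begin{equation*}
F(\yb) - F(\xb) \ge (\nabla f(\xb)+\vb)^T(\yb-\xb) + \omega(r) \ge -\norm{\nabla f(\xb)+\vb}_{\xb}^{*}\norm{\yb-\xb}_{\xb} + \omega(r) = -\lambda(\xb)r + \omega(r),
\end{equation*}
using the generalized Cauchy--Schwarz inequality in the local norm.

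Next I would exploit the shape of $\omega(t) = t - \ln(1+t)$. Since $\omega(t)/t \to \infty$ as $t\to\infty$ and, more usefully, $\omega(t) \ge t - \ln(1+t)$ is superlinear, the function $t \mapsto \omega(t) - \lambda(\xb) t$ is bounded below and tends to $+\infty$; in fact because $\lambda(\xb)<1$ one checks that this function is eventually increasing and its infimum over $t\ge 0$ is attained at a finite $t^{*}>0$ (explicitly $t^{*} = \lambda(\xb)/(1-\lambda(\xb))$), giving a finite negative lower bound. Hence $F$ is bounded below. More importantly, the sublevel set $\mathcal{L}_{F}(F(\xb)) := \{\yb\in\dom F : F(\yb)\le F(\xb)\}$ is contained in the set $\{\yb : \omega(\norm{\yb-\xb}_{\xb}) \le \lambda(\xb)\norm{\yb-\xb}_{\xb}\}$, which forces $\norm{\yb-\xb}_{\xb} \le \bar r$ for some finite $\bar r$ (the positive root of $\omega(t) = \lambda(\xb) t$). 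So this sublevel set is bounded in the local norm $\norm{\cdot}_{\xb}$; since $\nabla^2 f(\xb)\succ 0$ (which holds because $\lambda(\xb)$ is well-defined, i.e. the dual local norm exists), the local norm is equivalent to the Euclidean norm, so the sublevel set is bounded in $\RR^n$. Together with lower semicontinuity of $F$ (closedness of both $f$ and $g$), the sublevel set is compact, so $F$ attains its minimum: a minimizer $\xb^{*}$ exists.

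For uniqueness, I would argue by strict convexity along segments inside $\dom f$. Suppose $\xb^{*}_1 \ne \xb^{*}_2$ are both minimizers. The function $f$ restricted to the segment $[\xb^{*}_1,\xb^{*}_2]$ is standard self-concordant in one variable; if $\nabla^2 f$ were strictly positive along this segment then $f$ is strictly convex there, so $F = f+g$ is strictly convex there, contradicting that both endpoints are minimizers (the midpoint would have strictly smaller value). The only loophole is if $\nabla^2 f$ vanishes in some direction along the segment — but then $f$ is affine along that direction, and one must use the self-concordant structure (or the assumption that $\nabla^2 f(\xb)$ is positive definite, guaranteed since $\lambda(\xb)$ uses $\norm{\cdot}_{\xb}^{*}$) to rule this out; in fact a standard fact is that for a standard self-concordant function, if $\nabla^2 f(\xb)\succ 0$ at one point then it is positive definite on all of $\dom f$, so strict convexity holds globally. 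This last point — carefully justifying global strict convexity of $f$ from positive definiteness of the Hessian at a single point, or equivalently arguing that the minimizer of a self-concordant-plus-convex function cannot be non-unique — is the main obstacle, and I would handle it by directly invoking the one-dimensional restriction argument together with \eqref{eq:SC_bound1} applied at a putative second minimizer, which again yields $F(\xb^{*}_2) - F(\xb^{*}_1) \ge \omega(r) - \lambda(\xb^{*}_1) r$ with $\lambda(\xb^{*}_1)=0$ at the minimizer (since $0\in\nabla f(\xb^{*}_1)+\partial g(\xb^{*}_1)$), forcing $\omega(r)\le 0$, hence $r=0$.
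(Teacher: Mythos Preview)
Your argument is correct and follows essentially the same route as the paper: combine the subgradient inequality for $g$ with the self-concordant lower bound \eqref{eq:SC_bound1}, apply Cauchy--Schwarz in the local norm to bound the sublevel set, and then for uniqueness use the optimality condition $0\in\nabla f(\xb^{*})+\partial g(\xb^{*})$ at a minimizer so that the linear term vanishes and $F(\hat\xb^{*})-F(\xb^{*})\ge\omega(\norm{\hat\xb^{*}-\xb^{*}}_{\xb^{*}})$ forces coincidence. Your detour through strict convexity is unnecessary but your eventual argument matches the paper's, and you are in fact slightly more careful than the paper in noting that $\nabla^2 f(\xb^{*})\succ 0$ (needed so that $\norm{\cdot}_{\xb^{*}}$ is a genuine norm) follows from positive definiteness at the reference point $\xb$ via the standard propagation property of self-concordant Hessians.
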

Since this problem is convex, the following optimality condition is necessary and sufficient:
\begin{equation}\label{eq:Fx_optimality}
\mathbf{0} \in \nabla{f}(\xb^{*}) + \partial{g}(\xb^{*}).
\end{equation}
The solution $\xb^{*}$ is called \textit{strongly regular} if $\nabla^2f(\xb^{*}) \succ 0$. In this case, $\infty> \sigma_{\max}^{*} \ge \sigma_{\min}^{*}  >  0$, where $\sigma_{\min}^{*}$ and $\sigma_{\max}^{*}$ are the smallest and the largest eigenvalue of $\nabla^2f(\xb^{*})$, respectively.  

\paragraph{Fixed-point characterization:}
Let $\Hb \in \mathbb{S}_{+}^n$. We define $S_{\Hb}(\xb) := \Hb\xb - \nabla{f}(\xb)$. Then, from \eqref{eq:Fx_optimality}, we have 
\begin{equation*}
S_{\Hb}(\xb^{*}) \equiv \Hb \xb^{*} - \nabla{f}(\xb^{*}) \in \Hb \xb^{*} + \partial{g}(\xb^{*}).
\end{equation*}
By using the definition of $P^g_{\Hb}(\cdot)$ in \eqref{eq:PgH}, one can easily derive the fixed-point expression
\begin{equation}\label{eq:fixed_point}
\xb^{*} = P^g_{\Hb}\left(S_{\Hb}(\xb^{*})\right),
\end{equation} 
that is, $\xb^{*}$ is the fixed-point of the mapping $R^g_{\Hb}(\cdot)$, where $R^g_{\Hb}(\cdot) := P^g_{\Hb}(S_{\Hb}(\cdot))$.
The formula in \eqref{eq:fixed_point} suggests that we can generate an iterative sequence based on the fixed-point principle, i.e., $\xb^{k+1} := R^g_{\Hb}(\xb^k)$ starting from $\xb^0\in\dom{F}$ for $k\geq 0$. 
Theoretically, under certain assumptions, one can ensure that the mapping $R^g_{\Hb}$ is contractive and the sequence generated by this scheme is convergent. 

We note that if $g \equiv 0$ and $\Hb\in\mathbb{S}^n_{++}$, then $P^g_{\Hb}$ defined by \eqref{eq:PgH} reduces to $P^g_{\Hb}(\cdot) = \Hb^{-1}(\cdot)$. Consequently, the fixed-point formula \eqref{eq:fixed_point} becomes $\xb^{*} = \xb^{*} - \Hb^{-1}\nabla{f}(\xb^{*})$, which is equivalent to $\nabla{f}(\xb^{*}) = 0$.

\paragraph{Our variable metric framework:}
Given a point $\xb^k \in \dom{F}$ and a symmetric positive semidefinite matrix $\Hb_k$, we consider the  function 
\begin{equation}\label{eq:F_surrogate}
Q(\xb; \xb^k, \Hb_k) := f(\xb^k) + \nabla{f}(\xb^k)^T(\xb - \xb^k) + \frac{1}{2}(\xb - \xb^k)^T\Hb_k(\xb - \xb^k),
\end{equation} 
for $\xb \in \dom{F}$. 
The function $Q(\cdot;\xb^k, \Hb_k)$ is---seemingly---a quadratic approximation of $f$ around $\xb^k$. 
Now, we study the following  scheme to generate a sequence $\set{\xb^k}_{k\geq 0}$:
\begin{equation}\label{eq:iterative_scheme}
\xb^{k+1} := \xb^k + \alpha_k\db^k,
\end{equation}
where $\alpha_k \in (0, 1]$ is a  step size and $\db^k$ is a search direction.

Let  $\mathbf{s}^k$ be a solution of the following problem:
\begin{equation}\label{eq:cvx_subprob}
\mathbf{s}^k \in \mathcal{S}(\xb^k, \Hb_k) := \argmin_{\xb\in\mathrm{dom}(F)}\set{Q(\xb; \xb^k, \Hb_k) + g(\xb)} = P^g_{\Hb_k}\left(\Hb_k\xb^k - \nabla{f}(\xb^k)\right).
\end{equation}
Since we do not assume that $\Hb_k$ to be positive definite, the solution $\sb^k$ may not exist. We require the following assumption:

\begin{assumption}\label{as:A2}
The subproblem \eqref{eq:cvx_subprob} has at least one solution $\sb^k$, i.e., $\mathcal{S}(\xb^k, \Hb_k)\neq\emptyset$.
\end{assumption}
In particular, if $\Hb_k\in\mathbb{S}^n_{++}$, then the solution $\sb^k$ of \eqref{eq:cvx_subprob} exists and is unique, i.e., $\mathcal{S}(\xb^k, \Hb_k) = \set{\sb^k}\neq\emptyset$. 
Up to now, we have not required the uniqueness of $\sb^k$. This assumption will be specified later in the next sections.
Throughout this paper, we assume that both Assumptions $\mathbf{A}.\ref{as:A1}$ and $\mathbf{A}.\ref{as:A2}$ are satisfied without referring to them specifically.

\noindent Now, given $\sb^k$, the  direction $\db^k$ is computed as
 \begin{equation}\label{eq:search_dir_dk}
 \db^k := \mathbf{s}^k - \xb^k. 
 \end{equation}
 If we define $\mathbf{G}_k := \Hb_k \db^k$, then $\mathbf{G}_k$ is called the \textit{gradient mapping} of \eqref{eq:COP} \citep{Nesterov2004}, which behaves similarly as gradient vectors in non-composite minimization. 
 Since problem \eqref{eq:cvx_subprob} is  solvable due to Assumption $\mathbf{A}.\ref{as:A2}$, we can write its optimality condition as
\begin{equation}\label{eq:subprob_opt}
\mathbf{0} \in \nabla{f}(\xb^k) + \Hb_k(\mathbf{s}^k  - \xb^k) + \partial{g}(\mathbf{s}^k).
\end{equation} 
It is easy to see that if $\db^k = 0$, i.e., $\sb^k \equiv \xb^k$,  then \eqref{eq:subprob_opt} reduces to $0 \in \nabla{f}(\xb^k) + \partial{g}(\xb^k)$, which is exactly \eqref{eq:Fx_optimality}. Hence, $\xb^k$ is a solution of \eqref{eq:COP}.

In the variable metric framework, depending on the choice of $\Hb_k$, the iteration scheme \eqref{eq:iterative_scheme} leads to different methods for solving \eqref{eq:COP}. For instance,
\begin{enumerate}
\item If $\Hb_k := \nabla^2f(\xb^k)$, then the method \eqref{eq:iterative_scheme} is a \textit{proximal-Newton} method. 
\item If $\Hb_k$ is a symmetric positive definite matrix approximation of $\nabla^2f(\xb^k)$, then the method \eqref{eq:iterative_scheme} is a
\textit{proximal-quasi Newton} method.
\item If $\Hb_k := L_k\mathbb{I}$, where $L_k$ is, say, an approximation for the local Lipschitz constant of $f$ 
and $\mathbb{I}$ is the identity matrix, then the method \eqref{eq:iterative_scheme} is a \textit{proximal-gradient} method. 
\end{enumerate}
Many of these above methods have been studied for \eqref{eq:COP} when $f\in\mathcal{F}_L$: cf., \citep{Beck2009,Becker2012b,Chouzenoux2013a,Lee2012}. Note however that, since the self-concordant part $f$ of $F$ is not (necessarily) globally Lipschitz continuously differentiable, these approaches are generally not applicable in theory.

Given the search direction $\db^k$ defined by \eqref{eq:search_dir_dk}, we define the following proximal-Newton decrement\footnote{This notion is borrowed from standard the Newton decrement defined in \cite[Chapter 4]{Nesterov2004}.}  $\lambda_k$ and the weighted [semi-]norm $\beta_k$:
\begin{equation}\label{eq:prox_Newton_decrement}
\lambda_k := \Vert\db^k\Vert_{\xb^k}  = \left((\db^k)^T\nabla^2f(\xb^k)\db^k\right)^{1/2} ~ \textrm{and} ~\beta_k := \Vert\db^k\Vert_{\Hb_k}.
\end{equation}
In the sequel, we study three different instances of the variable metric strategy in detail.

\begin{remark}\label{re:lambda_quantity}
If $g \equiv 0$ and $\nabla^2f(\xb^k)\in\mathbb{S}^n_{++}$, then $\db^k = -\nabla^2f(\xb^k)^{-1}\nabla{f}(\xb^k)$ is the standard Newton direction. In this case, $\lambda_k$ defined by \eqref{eq:prox_Newton_decrement} reduces to $\lambda_k \equiv \Vert\nabla{f}(\xb^k)\Vert_{\xb^k}^{*}$, the Newton decrement defined in \cite[Chapter 4]{Nesterov2004}. Moreover, we have $\lambda_k \equiv \lambda(\xb^k)$, as defined in Lemma \ref{le:unique_solution}.
\end{remark}

\subsection{A proximal-Newton method}\label{subsec:prox_newton_method}
If we choose $\Hb_k := \nabla^2f(\xb^k)$, then the method described in \eqref{eq:iterative_scheme} is called  the \textit{proximal Newton} algorithm. 
For notational ease, we redefine $\sb^k_n := \sb^k$ and $\db^k_n := \db^k$, where the subscript $n$ is used to distinguish proximal Newton related quantities from the other variable metric strategies. Moreover, we use the shorthand notation $P^g_{\bar{\xb}}
:= P^g_{\nabla^2{f}(\bar{\xb})}$, whenever $\bar{\xb}\in\dom{f}$.
Using \eqref{eq:cvx_subprob} and \eqref{eq:search_dir_dk}, $\sb^k_n$ and $\db^k_n$ are given by
\begin{equation}\label{eq:cvx_subprob_xk}
\sb^k_n := P^g_{\xb^k}\left( \nabla^2f(\xb^k)\xb^k - \nabla{f}(\xb^k)\right), ~~ \db^k_n : = \sb^k_n - \xb^k.
\end{equation}
Then, the proximal-Newton method generates a sequence $\set{\xb^k}_{k\geq 0}$ starting from $\xb^0\in\dom{F}$ according to
\begin{equation}\label{eq:DPNM}
\xb^{k+1} := \xb^k + \alpha_k\db^k_n,
\end{equation}
where $\alpha_k \in (0, 1]$ is a step size. 
If $\alpha_k < 1$, then the iteration \eqref{eq:DPNM} is called the \textit{damped proximal-Newton} iteration. 
If $\alpha_k = 1$, then it is called the \textit{full-step proximal-Newton} iteration. 

\paragraph{Global convergence:}
We first show that with an appropriate choice of the step-size $\alpha_k \in (0,1]$, the iterative sequence $\set{\xb^k}_{k\geq 0}$ generated by the damped-step proximal Newton scheme \eqref{eq:DPNM} is a decreasing sequence; i.e., $F(\xb^{k+1}) \leq F(\xb^k) - \omega(\sigma)$ whenever $\lambda_k \geq \sigma$, where $\sigma > 0$ is fixed. 
The following theorem provides an explicit formula for the step size $\alpha_k$ whose proof can be found in the appendix.

\begin{theorem}\label{th:choose_alpha}
If $\alpha_k := \frac{1}{1 + \lambda_k} \in (0, 1]$, then the scheme in \eqref{eq:DPNM} generates $\xb^{k+1}$ satisfies:
\begin{equation}\label{eq:decrease_eq2}
F(\xb^{k+1}) \leq F(\xb^k) - \omega(\lambda_k).
\end{equation}
Moreover, the step $\alpha_k$ is optimal.
The number of iterations to reach the point $\xb^k$ such that $\lambda_k < \sigma$ for some $\sigma \in (0, 1)$  is $k_{\max} :=   \left\lfloor\frac{F(\xb^0)-F(\xb^{\ast})}{\omega(\sigma)}\right\rfloor + 1$.
\end{theorem}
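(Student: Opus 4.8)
The plan is to establish the descent inequality \eqref{eq:decrease_eq2} directly from the self-concordant bound \eqref{eq:SC_bound2}, then deduce optimality of $\alpha_k$ and the iteration count as easy corollaries. First I would write $\xb^{k+1} = \xb^k + \alpha_k\db^k_n$ and apply \eqref{eq:SC_bound2} to the pair $\xb^k, \xb^{k+1}$, which is legitimate as long as $\norm{\xb^{k+1}-\xb^k}_{\xb^k} = \alpha_k\lambda_k < 1$; since $\alpha_k = (1+\lambda_k)^{-1}$ this quantity equals $\lambda_k/(1+\lambda_k) < 1$, so the bound applies. This gives
\begin{equation*}
f(\xb^{k+1}) \leq f(\xb^k) + \alpha_k\nabla f(\xb^k)^T\db^k_n + \omega_{*}(\alpha_k\lambda_k).
\end{equation*}
For the $g$-part I would use convexity of $g$ together with the subgradient inclusion coming from the subproblem optimality condition \eqref{eq:subprob_opt} with $\Hb_k = \nabla^2f(\xb^k)$: since $-\nabla f(\xb^k) - \nabla^2f(\xb^k)\db^k_n \in \partial g(\sb^k_n)$ and $\xb^{k+1} = (1-\alpha_k)\xb^k + \alpha_k\sb^k_n$, convexity yields $g(\xb^{k+1}) \leq (1-\alpha_k)g(\xb^k) + \alpha_k g(\sb^k_n)$ and then a further subgradient bound on $g(\sb^k_n)$ in terms of $g(\xb^k)$. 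Combining, the cross terms organize into $-\alpha_k\nabla^2f(\xb^k)$-weighted quantities, i.e. $-\alpha_k\lambda_k^2$ plus the $\nabla f(\xb^k)^T\db^k_n$ terms cancel, leaving an upper bound of the form $F(\xb^k) - \alpha_k\lambda_k^2 + \omega_{*}(\alpha_k\lambda_k)$.

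The next step is purely one-variable calculus: define $\psi(\alpha) := \alpha\lambda_k^2 - \omega_{*}(\alpha\lambda_k)$ for $\alpha \in [0,1]$ and maximize over $\alpha$. Using $\omega_{*}(t) = -t - \ln(1-t)$ and $\omega_{*}'(t) = t/(1-t)$, the stationarity condition $\lambda_k^2 = \lambda_k\omega_{*}'(\alpha\lambda_k)$ reduces to $\lambda_k = \alpha\lambda_k/(1-\alpha\lambda_k)$, whose solution is exactly $\alpha = 1/(1+\lambda_k)$ (and this lies in $(0,1]$). Plugging back, a short computation with the identity $\omega_{*}(t/(1+t)) $-type manipulation gives $\psi(\alpha_k) = \lambda_k - \ln(1+\lambda_k) = \omega(\lambda_k)$, which yields \eqref{eq:decrease_eq2}. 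The word ``optimal'' in the theorem I interpret as: this $\alpha_k$ maximizes the guaranteed decrease $\psi(\alpha)$ furnished by the self-concordant upper bound, which is exactly what the stationarity computation shows (and concavity of $\psi$ on the relevant interval makes it the global maximizer).

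Finally, the iteration-count claim follows by telescoping: as long as $\lambda_k \geq \sigma$, monotonicity of $\omega$ gives $F(\xb^{k+1}) \leq F(\xb^k) - \omega(\lambda_k) \leq F(\xb^k) - \omega(\sigma)$, so after $k$ such steps $F(\xb^k) \leq F(\xb^0) - k\,\omega(\sigma)$; since $F(\xb^k) \geq F(\xb^{\ast})$ by Lemma \ref{le:unique_solution}, we get $k \leq (F(\xb^0) - F(\xb^{\ast}))/\omega(\sigma)$, hence the number of iterations before $\lambda_k < \sigma$ first occurs is at most $\lfloor (F(\xb^0)-F(\xb^{\ast}))/\omega(\sigma)\rfloor + 1$. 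The main obstacle I anticipate is the bookkeeping in the first paragraph: correctly handling the $g$-term via the subgradient from \eqref{eq:subprob_opt} so that the $\nabla f(\xb^k)^T\db^k_n$ contributions cancel and one is left with a clean $-\alpha_k\lambda_k^2 + \omega_{*}(\alpha_k\lambda_k)$; everything after that is standard scalar optimization of $\omega$ and $\omega_{*}$. A minor subtlety worth checking is that $\nabla^2f(\xb^k)$ need only be positive definite along $\db^k_n$ for $\lambda_k$ to be well-defined and the argument to go through, which is consistent with the self-concordance hypotheses in force.
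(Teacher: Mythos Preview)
Your proposal is correct and follows essentially the same route as the paper: the paper proves a slightly more general descent estimate $F(\xb^{k+1}) \leq F(\xb^k) - \alpha_k\beta_k^2 + \omega_{*}(\alpha_k\lambda_k)$ for arbitrary $\Hb_k$ (via exactly your combination of \eqref{eq:SC_bound2}, convexity of $g$, and the subgradient from \eqref{eq:subprob_opt}), then specializes to $\Hb_k = \nabla^2 f(\xb^k)$ so that $\beta_k = \lambda_k$, optimizes the one-variable function to get $\alpha_k = (1+\lambda_k)^{-1}$ and $\psi(\alpha_k) = \omega(\lambda_k)$, and telescopes for $k_{\max}$. Your bookkeeping concern about the cancellation of the $\nabla f(\xb^k)^T\db^k_n$ terms is handled in the paper precisely as you outline, via the inequality $g(\sb^k_n) - g(\xb^k) \leq -\nabla f(\xb^k)^T\db^k_n - \lambda_k^2$.
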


\paragraph{Local quadratic convergence rate:}
We now establish the local quadratic convergence of the scheme \eqref{eq:DPNM}. A complete  proof of this theorem can be found in the appendix.

\begin{theorem}\label{th:quad_converg_DPNM}
Assume that  $\xb^{*}$ is the unique solution of \eqref{eq:COP} and is strongly regular.
Let $\set{\xb^k}_{k\geq 0}$ be a sequence generated by the proximal Newton scheme \eqref{eq:DPNM} with $\alpha_k \in (0, 1]$. 
Then:
\begin{itemize}
\item[$\mathrm{a)}$] If $\alpha_k\lambda_k < 1-\frac{1}{\sqrt{2}}$, then it holds that 
\begin{equation}\label{eq:DPNM_estimate}
\lambda_{k+1} \leq \left(\frac{1-\alpha_k + (2\alpha_k^2 - \alpha_k)\lambda_k}{1 - 4\alpha_k\lambda_k + 2\alpha_k^2\lambda_k^2}\right)\lambda_k.
\end{equation}

\item[$\mathrm{b)}$] If the sequence $\set{\xb^k}_{k\geq 0}$ is generated by the damped proximal-Newton scheme \eqref{eq:DPNM}, starting from $\xb^0$ such that $\lambda_0 \leq \bar{\sigma} := \sqrt{5} - 2 \approx 0.236068$ and $\alpha_k := (1 + \lambda_k)^{-1}$, then $\set{\lambda_k}_k$ locally converges to $0^{+}$ at a quadratic rate.

\item[$\mathrm{c)}$]
Alternatively, if the sequence $\set{\xb^k}_{k\geq 0}$ is generated by the full-step proximal-Newton scheme \eqref{eq:DPNM} starting from $\xb^0$ such that $\lambda_0 \leq \bar{\sigma} := 0.25(5-\sqrt{17}) \approx 0.219224$ and $\alpha_k = 1$, then $\set{\lambda_k}_k$ locally converges to $0^{+}$ at a quadratic rate.
\end{itemize}
Consequently, the sequence $\set{\xb^k}_{k\geq 0}$ also locally converges to $\xb^{\ast}$ at a quadratic rate in both cases $\mathrm{b)}$ and $\mathrm{c)}$, i.e., $\set{\Vert\xb^k - \xb^{*}\Vert_{\xb^{*}}}_{k\geq 0}$ locally converges to $0^{+}$ at a quadratic rate.
\end{theorem}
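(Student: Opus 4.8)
The plan is to derive a single master recursion relating $\lambda_{k+1}$ to $\lambda_k$ and $\alpha_k$, namely \eqref{eq:DPNM_estimate}, and then specialize the choice of step size to obtain the two quadratic-convergence regimes in b) and c). First I would set up the key objects: $\sb^k_n = P^g_{\xb^k}(\nabla^2 f(\xb^k)\xb^k - \nabla f(\xb^k))$, $\db^k_n = \sb^k_n - \xb^k$, $\xb^{k+1} = \xb^k + \alpha_k \db^k_n$, and write the subproblem optimality condition \eqref{eq:subprob_opt} at step $k$: $-\nabla f(\xb^k) - \nabla^2 f(\xb^k)\db^k_n \in \partial g(\sb^k_n)$. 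The aim is to compare this with $\lambda_{k+1} = \Vert \db^{k+1}_n\Vert_{\xb^{k+1}}$, which is governed by the same relation one iteration later, and which (via Lemma \ref{le:Pg_properties} and the fixed-point characterization \eqref{eq:fixed_point}) can be written as $\lambda_{k+1} = \Vert \sb^{k+1}_n - \xb^{k+1}\Vert_{\xb^{k+1}}$ with $\sb^{k+1}_n = P^g_{\xb^{k+1}}(\nabla^2 f(\xb^{k+1})\xb^{k+1} - \nabla f(\xb^{k+1}))$.

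The heart of the argument is to control how $\nabla f$ and $\nabla^2 f$ change between $\xb^k$ and $\xb^{k+1} = \xb^k + \alpha_k\db^k_n$, using only self-concordance. The two workhorse facts (consequences of standard self-concordance, e.g.\ \citep[Ch.~4]{Nesterov2004}) are: (i) the relative Hessian bound $(1 - \Vert \xb^{k+1}-\xb^k\Vert_{\xb^k})^2 \nabla^2 f(\xb^k) \preceq \nabla^2 f(\xb^{k+1}) \preceq (1-\Vert \xb^{k+1}-\xb^k\Vert_{\xb^k})^{-2}\nabla^2 f(\xb^k)$, valid when $\Vert \xb^{k+1}-\xb^k\Vert_{\xb^k} = \alpha_k\lambda_k < 1$; and (ii) the gradient integral bound $\Vert \nabla f(\xb^{k+1}) - \nabla f(\xb^k) - \nabla^2 f(\xb^k)(\xb^{k+1}-\xb^k)\Vert_{\xb^k}^{*} \le \frac{(\alpha_k\lambda_k)^2}{1-\alpha_k\lambda_k}$. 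I would then feed these into the optimality-condition comparison: subtracting the relation defining $\sb^{k+1}_n$ in local norm $\Vert\cdot\Vert_{\xb^{k+1}}$, using nonexpansiveness of the proximity operator (Lemma \ref{le:Pg_properties}), converting all local norms at $\xb^{k+1}$ back to local norms at $\xb^k$ via (i), and collecting the error terms from (ii) together with the ``leftover'' term $(1-\alpha_k)\nabla^2 f(\xb^k)\db^k_n$ coming from the fact that we only moved a fraction $\alpha_k$ of $\db^k_n$. Careful bookkeeping of the factors $(1-\alpha_k\lambda_k)^{\pm 1}$, $(1-\alpha_k\lambda_k)^{\pm 2}$ should yield exactly the rational function in \eqref{eq:DPNM_estimate}; the side condition $\alpha_k\lambda_k < 1 - \tfrac{1}{\sqrt 2}$ is precisely what keeps the denominator $1 - 4\alpha_k\lambda_k + 2\alpha_k^2\lambda_k^2$ positive.

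With \eqref{eq:DPNM_estimate} in hand, parts b) and c) are a self-contained one-dimensional analysis of the scalar recursion. For c), set $\alpha_k = 1$: \eqref{eq:DPNM_estimate} becomes $\lambda_{k+1} \le \frac{\lambda_k^2}{1 - 4\lambda_k + 2\lambda_k^2}\lambda_k$ — wait, more precisely $\lambda_{k+1}\le \frac{\lambda_k^2}{1-4\lambda_k+2\lambda_k^2}\lambda_k$ after the numerator $1-\alpha_k+(2\alpha_k^2-\alpha_k)\lambda_k = \lambda_k$ collapses; I would check that $\lambda_0 \le 0.25(5-\sqrt{17})$ forces the map $t\mapsto \frac{t^2}{1-4t+2t^2}t$ to be a contraction with $\lambda_{k+1} \le c\lambda_k^2$ for an explicit $c$, hence the sequence stays in the region and converges quadratically; the threshold $0.25(5-\sqrt{17})$ is the relevant fixed point/root of the resulting quadratic inequality $\lambda_k \le 1 - 4\lambda_k + 2\lambda_k^2$. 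For b), substitute $\alpha_k = (1+\lambda_k)^{-1}$ into \eqref{eq:DPNM_estimate}, simplify the rational expression in $\lambda_k$ alone, and verify that $\lambda_0 \le \sqrt 5 - 2$ is exactly the condition making the induced scalar map self-mapping and eventually quadratically contracting (note $\sqrt 5 - 2$ satisfies $\lambda^2 + 4\lambda - 1 = 0$, which is where this constant comes from). Finally, the passage from $\lambda_k \to 0^+$ quadratically to $\Vert \xb^k - \xb^{*}\Vert_{\xb^{*}} \to 0^+$ quadratically follows by a standard comparison: near the strongly regular $\xb^{*}$, self-concordance makes $\Vert\xb^k-\xb^{*}\Vert_{\xb^{*}}$ and $\lambda_k$ comparable up to constants depending on $\sigma^{*}_{\min},\sigma^{*}_{\max}$, plus the fixed-point relation \eqref{eq:fixed_point} and nonexpansiveness. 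I expect the main obstacle to be the first step: tracking all the self-concordant distortion factors in the optimality-condition comparison cleanly enough to land on the exact rational bound \eqref{eq:DPNM_estimate} rather than a looser one — in particular handling the interaction between the fractional step $\alpha_k$ and the change of local norm from $\xb^k$ to $\xb^{k+1}$.
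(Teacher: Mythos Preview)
Your proposal is essentially the paper's approach: nonexpansiveness of $P^g_{\bar\xb}$ (Lemma~\ref{le:Pg_properties}), the self-concordant Hessian sandwich, and the integral gradient remainder are exactly the three ingredients the paper combines, and your scalar analyses for b) and c) match the paper's verbatim (the thresholds $\sqrt5-2$ and $\tfrac14(5-\sqrt{17})$ arise from $\lambda^2+4\lambda-1=0$ and $2\lambda^2-5\lambda+1=0$ respectively, just as you say).

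Two points where the paper's execution differs from your sketch and are worth knowing before you write it out. First, the paper does \emph{not} work in the $\Vert\cdot\Vert_{\xb^{k+1}}$ metric and then convert back. Instead it writes \emph{both} $\sb^k_n$ and $\sb^{k+1}_n$ as values of the \emph{same} operator $P^g_{\xb^k}$, at the cost of an error term $\eb_{\xb^k}(\nabla^2f(\xb^{k+1}),\sb^{k+1})=(\nabla^2f(\xb^{k+1})-\nabla^2f(\xb^k))\db^{k+1}_n$. This makes the resulting bound \emph{self-referential}: $\Vert\db^{k+1}_n\Vert_{\xb^k}$ appears on both sides, and one solves for it before the final norm change $\Vert\db^{k+1}_n\Vert_{\xb^{k+1}}\le(1-\alpha_k\lambda_k)^{-1}\Vert\db^{k+1}_n\Vert_{\xb^k}$. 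This is where the denominator $1-4\alpha_k\lambda_k+2\alpha_k^2\lambda_k^2$ comes from, so be prepared for that rearrangement rather than a direct estimate.

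Second, for the final claim that $\Vert\xb^k-\xb^{*}\Vert_{\xb^{*}}\to0$ quadratically, the paper does \emph{not} argue via comparability of $\lambda_k$ and $\rb_k:=\Vert\xb^k-\xb^{*}\Vert_{\xb^{*}}$. It instead derives a second, independent recursion directly for $\rb_k$ (using the fixed-point representation at $\xb^{*}$ rather than at $\xb^k$), obtaining e.g.\ $\rb_{k+1}\le\frac{3-\rb_k}{1-4\rb_k+2\rb_k^2}\rb_k^2$ when $\alpha_k=1$, and an analogous bound for the damped step. Your ``standard comparison'' would need an inequality of the form $\rb_k\le C\lambda_k$ uniformly near $\xb^{*}$, which is not immediate from the tools you list; the paper's route avoids this by repeating the whole nonexpansiveness argument centered at $\xb^{*}$.
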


\paragraph{A two-phase algorithm for solving \eqref{eq:COP}:}
Now, by the virtue of the above analysis, we can propose a two-phase proximal-Newton algorithm for solving \eqref{eq:COP}. 
Initially, we perform the damped-step proximal-Newton iterations until we reach the quadratic convergence region (Phase 1). Then, we perform full-step proximal-Newton iterations, until we reach the desired accuracy (Phase 2).  The pseudocode of the algorithm is presented in Algorithm \ref{alg:A2}.

\begin{algorithm}[!ht]\caption{(\textit{Proximal-Newton algorithm})}\label{alg:A2}
\begin{algorithmic}
   \STATE {\bfseries Inputs:} $\xb^0\in\dom{F}$, tolerance $\varepsilon > 0$.
   \STATE {\bfseries Initialization:} Select a constant $\sigma \in (0, \frac{(5-\sqrt{17})}{4}]$, e.g., $\sigma := 0.2$.
   \STATE \hrulefill
   \FOR{$k = 0$ {\bfseries to} $K_{\max}$}
   \STATE 1. Compute the proximal-Newton search direction $\db^k_n$ as in \eqref{eq:cvx_subprob_xk}.
   \STATE 2. Compute $\lambda_k := \norm{\db^k_n}_{\xb^k}$.
   \STATE 3. \textbf{if}~$\lambda_k > \sigma$ \textbf{then} $\xb^{k + 1} := \xb^k + \alpha_k\db^k_n$, where $\alpha_k := (1 + \lambda_k)^{-1}$.
   \STATE 4. \textbf{elseif}~$\lambda_k > \varepsilon$ \textbf{then} $\xb^{k+1} := \xb^k + \db^k_n$.   
   \STATE 5. \textbf{else} terminate.
   \ENDFOR
\end{algorithmic}
\end{algorithm}

The radius $\sigma$ of the quadratic convergence region in Algorithm \ref{alg:A2} can be fixed at any value in $(0, \bar{\sigma}]$, e.g., at its upper bound $\bar{\sigma}$. 
An upper bound $K_{\max}$ of the iterations can also be specified, if necessary. 
The computational bottleneck in Algorithm \ref{alg:A2} is typically incurred Step 1 in Phase 1 and Phase 2, where we need to solve the subproblem \eqref{eq:cvx_subprob} to obtain a search direction $\db^k_n$. 
When problem \eqref{eq:cvx_subprob} is strongly convex, i.e., $\nabla^2f(\xb^k)\in\mathbb{S}^n_{++}$, one can apply first order methods to efficiently solve this problem with a linear convergence rate (see, e.g., \cite{Beck2009,Nesterov2004,Nesterov2007}) and make use of a \textit{warm-start} strategy by employing the information of the previous iterations. 

\paragraph{Iteration-complexity analysis.}
The choice of $\sigma$ in Algorithm \ref{alg:A2}  can trade-off the number of iterations between the damped-step and full-step iterations. If we fix $\sigma = 0.2$, then the complexity of the full-step Newton phase becomes $\mathcal{O}\left(\ln\ln\left(\frac{0.28}{\varepsilon}\right)\right)$.
The following theorem summarizes the complexity of the proposed algorithm.

\begin{theorem}\label{th:complexity}
The maximum number of iterations required in Algorithm 1 does not exceed $K_{\max}:= \left\lfloor\frac{F(\xb^0)-F(\xb^{\ast})}{0.017}\right\rfloor + \left\lfloor 1.5\left(\ln\ln\left(\frac{0.28}{\varepsilon}\right)\right) \right\rfloor + 2$ provided that $\sigma = 0.2$ to obtain $\lambda_k\leq \varepsilon$.
Consequently, $\Vert\xb^k - \xb^{*}\Vert_{\xb^{*}} \leq 2\varepsilon$, where $\xb^{\ast}$ is the unique solution of \eqref{eq:COP}.
\end{theorem}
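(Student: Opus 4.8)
The plan is to split the run of Algorithm~\ref{alg:A2} into its two phases (damped steps while $\lambda_k > \sigma$, full steps while $\varepsilon < \lambda_k \le \sigma$), bound the number of iterations in each with $\sigma = 0.2$, and add the two counts.

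\emph{Phase 1.} While $\lambda_k > \sigma = 0.2$ the step is $\alpha_k = (1+\lambda_k)^{-1}$, so by Theorem~\ref{th:choose_alpha} each iteration decreases $F$ by at least $\omega(\lambda_k) \ge \omega(0.2) = 0.2 - \ln(1.2) > 0.017$ (using that $\omega$ is increasing). Since $F$ is bounded below by $F^{*} = F(\xb^{*})$ by Lemma~\ref{le:unique_solution}, the number of Phase-1 iterations is at most $\lfloor (F(\xb^0) - F^{*})/\omega(0.2) \rfloor + 1 \le \lfloor (F(\xb^0) - F^{*})/0.017 \rfloor + 1$, which is the $k_{\max}$ bound of Theorem~\ref{th:choose_alpha} specialized to $\sigma = 0.2$ and accounts for the first term of $K_{\max}$.

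\emph{Phase 2.} When Phase 1 ends we are at some $\xb^{k_0}$ with $\lambda_{k_0} < 0.2 \le \bar{\sigma} = 0.25(5-\sqrt{17})$, so $\xb^{k_0}$ lies in the full-step quadratic region of Theorem~\ref{th:quad_converg_DPNM}. Putting $\alpha_k = 1$ in \eqref{eq:DPNM_estimate} gives $\lambda_{k+1} \le \lambda_k^2 / (1 - 4\lambda_k + 2\lambda_k^2)$; since $t \mapsto 1 - 4t + 2t^2$ decreases on $[0,1]$, as long as $\lambda_k \le 0.2$ the denominator is at least $1 - 4(0.2) + 2(0.2)^2 = 0.28$, hence $\lambda_{k+1} \le \lambda_k^2/0.28 < (0.2/0.28)\lambda_k < \lambda_k$. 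This shows inductively that the full-step iterates never leave $[0, 0.2]$, so the estimate $\lambda_{k+1} \le \lambda_k^2/0.28$ holds throughout, $\{\lambda_k\}$ decreases, and moreover the denominator $1 - 4\lambda_k + 2\lambda_k^2$ tends to $1$ very quickly. After a fixed (bounded) number of initial full steps that bring $\lambda_k$ below a small threshold, the recursion is therefore essentially $\lambda_{k+1} \le \lambda_k^2$; inverting the resulting doubly-exponential decay shows that $\mathcal{O}(\ln\ln(0.28/\varepsilon))$ further full steps suffice to reach $\lambda_k \le \varepsilon$, and a careful accounting of the constants (using the value $0.28$ and bounding $\log_2(\cdot)$ from above) sharpens this to at most $\lfloor 1.5\ln\ln(0.28/\varepsilon) \rfloor + 1$ iterations. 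Adding the Phase-1 and Phase-2 counts and merging the two ``$+1$'' terms into the additive constant $2$ gives exactly the claimed $K_{\max}$.

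\emph{Distance to the optimum and the main obstacle.} Once $\lambda_k \le \varepsilon$, the inequality $\Vert \xb^k - \xb^{*}\Vert_{\xb^{*}} \le 2\varepsilon$ follows from the bound relating the proximal-Newton decrement $\lambda_k = \Vert\db^k_n\Vert_{\xb^k}$ to the error $\Vert\xb^k - \xb^{*}\Vert_{\xb^{*}}$ (of the form $\Vert\xb^k - \xb^{*}\Vert_{\xb^{*}} \le \lambda_k/(1 - c\lambda_k)$ for a fixed constant $c$) that is established along the way in the proof of Theorem~\ref{th:quad_converg_DPNM}; this is $\le 2\varepsilon$ for $\varepsilon$ in the range of interest. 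The routine parts here are Phase 1 and this last step; the delicate part is Phase 2, namely (i) verifying that the full-step iterates stay in $[0,0.2]$ so that \eqref{eq:DPNM_estimate} remains applicable, and (ii) turning the quadratically contracting recursion with its varying denominator into the advertised bound with the exact constants, which hinges on quantifying how fast $1 - 4\lambda_k + 2\lambda_k^2 \to 1$.
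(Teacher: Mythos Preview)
Your Phase~1 argument is correct and matches the paper. The issues are in Phase~2 and in the distance bound.

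\textbf{Phase 2: the detour is unnecessary and does not deliver the constant.} You correctly obtain the uniform recursion $\lambda_{k+1} \le \lambda_k^2/0.28$ for all full-step iterates, but then you take a detour: wait until the denominator is close to $1$ and switch to $\lambda_{k+1}\le\lambda_k^2$. This route does not naturally produce the constant $0.28$ inside the double logarithm, and your ``careful accounting'' is left unspecified. The paper's argument is simpler and direct: set $c:=1/0.28\approx 3.57$, so $\lambda_{k+1}\le c\lambda_k^2$ holds for every full step. Multiplying by $c$ gives $(c\lambda_{k+1})\le (c\lambda_k)^2$, hence $c\lambda_k \le (c\lambda_0)^{2^k}\le (c\sigma)^{2^k}$ with $c\sigma\approx 0.714<1$. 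The stopping criterion $\lambda_k\le\varepsilon$ is implied by $(c\sigma)^{2^k}\le c\varepsilon$, and inverting this (using $1/c=0.28$ and $1/\ln 2<1.5$) gives the claimed $\lfloor 1.5\ln\ln(0.28/\varepsilon)\rfloor+1$. No ``waiting for the denominator to tend to $1$'' is needed; the $0.28$ is exactly $1/c$.

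\textbf{Distance to the optimum: this is a genuine gap.} You assert that a bound of the form $\Vert\xb^k-\xb^{*}\Vert_{\xb^{*}}\le \lambda_k/(1-c\lambda_k)$ is ``established along the way in the proof of Theorem~\ref{th:quad_converg_DPNM}''. It is not: that proof establishes recurrences for $\lambda_k$ and for $\rb_k:=\Vert\xb^k-\xb^{*}\Vert_{\xb^{*}}$ separately, but no direct control of $\rb_k$ by $\lambda_k$. The paper's argument here is different and more delicate. With $\alpha_k=1$ one combines the triangle inequality and the self-concordant norm change to get
\[
\rb_k \;\le\; \frac{\Vert\xb^{k+1}-\xb^k\Vert_{\xb^k}}{1-\rb_k} + \rb_{k+1} \;=\; \frac{\lambda_k}{1-\rb_k} + \rb_{k+1},
\]
and then substitutes the $\rb$-recurrence $\rb_{k+1}\le \dfrac{(3-\rb_k)\rb_k^2}{1-4\rb_k+2\rb_k^2}$ from \eqref{eq:thm5_est4}. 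Rearranging yields
\[
\frac{(1-\rb_k)(1-7\rb_k+3\rb_k^2)}{1-4\rb_k+2\rb_k^2}\,\rb_k \;\le\; \lambda_k,
\]
and one checks that the rational prefactor is at least $1/2$ for $\rb_k$ small (specifically for $\rb_k\lesssim 0.0876$), which gives $\rb_k\le 2\lambda_k\le 2\varepsilon$. This step is not a corollary of any inequality already sitting in the proof of Theorem~\ref{th:quad_converg_DPNM}; it has to be derived as above.
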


\begin{proof}
Let $\sigma = 0.2$. From the estimate \eqref{eq:DPNM_estimate} of Theorem \ref{th:quad_converg_DPNM} and $\alpha_{k-1} = 1$ we have $\lambda_{k} \leq (1 - 4\lambda_{k-1} + 2\lambda_{k-1}^2)^{-1}\lambda_{k-1}^2$ for $k\geq 1$. Since $\lambda_0 \leq \sigma$, by induction, we can easily show that $\lambda_k \leq (1 - 4\sigma + 2\sigma^2)^{-1}\lambda_{k-1}^2 \leq c\lambda_{k-1}^2$, where $c := 3.57$.
This implies $\lambda_k \leq c^{2^k -1}\lambda_0^{2^k} \leq c^{2^k-1}\sigma^{2^k}$. The stopping criterion $\lambda_k \leq \varepsilon$ in Algorithm \ref{alg:A2} is ensured if $(c\sigma)^{2^k} \leq c\varepsilon$.
Since $c\sigma \approx 0.71 < 1$, the last condition leads to $k \geq (\ln 2)^{-1}\ln\left(\frac{-\ln(c\sigma)}{-\ln(c\varepsilon)}\right)$. By using $c = 3.57$, $\sigma = 0.2$ and the fact that $\ln(2)^{-1} < 1.5$, we can show that the last requirement is fulfilled if $k \geq \left\lfloor 1.5\left(\ln\ln\left(\frac{0.28}{\varepsilon}\right)\right) \right\rfloor + 1$.
Now, combining the last conclusion and Theorem \ref{th:choose_alpha} with noting that $\omega(\sigma) > 0.017$ we obtain $K_{\max}$ as in Theorem \ref{th:complexity}.

Finally, we prove $\Vert\xb^k - \xb^{*}\Vert_{\xb^{*}} \leq 2\varepsilon$. Indeed, we have $\rb_k := \Vert\xb^k - \xb^{*}\Vert_{\xb^{*}} \leq \frac{\Vert\xb^{k+1} - \xb^k\Vert_{\xb^k}}{1-\Vert\xb^k - \xb^{*}\Vert_{\xb^{*}}} + \Vert\xb^{k+1} - \xb^k\Vert_{\xb^{*}} = \frac{\lambda_k}{1 - \rb_k} + \rb_{k+1}$, whenever $\rb_k < 1$. 
Next, using \eqref{eq:thm5_est4} with $\alpha_k = 1$, we have $\rb_{k+1} \leq \frac{(3-\rb_k)\rb_k^2}{1-4\rb_k + 2\rb_k^2}$. Combining these inequalities, we obtain $\frac{(1-\rb_k)(1-7\rb_k + 3\rb_k^2)\rb_k}{1-4\rb_k + 2\rb_k^2} \leq \lambda_k \leq \varepsilon$. Since the function $s(\rb) := \frac{(1-\rb)(1-7\rb + 3\rb^2)\rb}{1-4\rb + 2\rb^2}$ attains a maximum at $\rb^{*} \approx 0.08763$ and it is increasing on $[0, \rb^{*}]$. Moreover, $\frac{(1-\rb_k)(1-7\rb_k + 3\rb_k^2)}{1-4\rb_k + 2\rb_k^2} \geq 0.5$ for $\rb_k \in [0, \rb^{*}]$, which leas to $0.5\rb_k \leq \frac{(1-\rb_k)(1-7\rb_k + 3\rb_k^2)\rb_k}{1-4\rb_k + 2\rb_k^2} \leq \varepsilon$. Hence, $\rb_k \leq 2\varepsilon$ provided that $\rb_k \leq \rb_0 \leq  \rb^{*}\approx 0.08763$.
\end{proof}

\begin{remark}\label{re:g_absent}
When $g \equiv 0$, we can modify the proof of estimate \eqref{eq:DPNM_estimate} to obtain a tighter bound $\lambda_{k+1} \leq \frac{\lambda_k^2}{(1-\lambda_k)^2}$.
This estimate is exactly \cite[]{Nesterov2004}, which implies that the radius of the quadratic convergence region is $\bar{\sigma} := (3-\sqrt{5})/2$.
\end{remark}

\paragraph{A modification of the proximal-Newton method:} 
In Algorithm \ref{alg:A2}, if we remove Step 4 and replace analytic step-size selection calculation in Step 3 with a backtracking line-search, then we reach the proximal Newton method of \citep{Lee2012}. Hence, this approach \emph{in practice} might lead to reduced overall computation since our step-size $\alpha_k$ is selected optimally with respect to the worst case problem structures as opposed to the particular instance of the problem. Since the backtracking approach always starts with the full-step, we also do not need to know whether we are within the quadratic convergence region. Moreover, the cost of evaluating the objective at the full-step in certain applications may not be significantly worse than the cost of calculating $\alpha_k$ or may be dominated by the cost of calculating the Newton direction.

In stark contrast to backtracking, our new theory behooves us to propose a new forward line-search procedure as illustrated by Figure \ref{fig:over-jump}. 
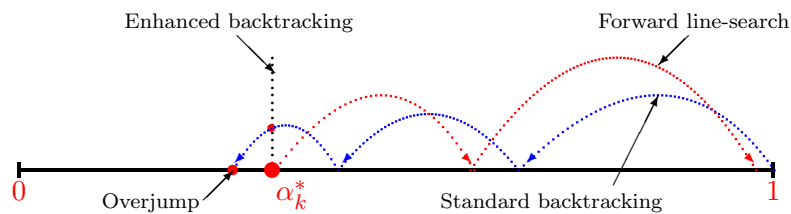
\begin{figure}[ht]
\vskip-1.5cm
\setlength{\unitlength}{1mm}
\begin{picture}(60, 40)
 {\linethickness{0.5mm}\put(20, 0){\line(1, 0){100}} 
  \put(20, -1){\line(0, 1){2}}
  \put(120.2, -1){\line(0, 1){2}}
  {\color{red}\put(19, -4){$0$}}
  {\color{red}\put(118, -4){$1$}}}
  {\color{black}\linethickness{0.4mm}\multiput(51.1,1)(0,1){15}{\circle*{0.5}}}
  {\color{red}\put(49.7,5.7){\circle*{1}}}
  {\color{red}\put(48.5,0){\circle*{2}}\put(49,-4){$\alpha_k^{*}$}}
  {\color{blue}
  {\linethickness{0.3mm}
  \qbezier[60](114, 0)(100, 20)(80, 0)}
  \put(84,4){\linethickness{4mm}\vector(-1,-1){3}}
  {\linethickness{0.3mm}
  \qbezier[50](80, 0)(68, 15)(56, 0)}
  \put(58.5,3){\linethickness{4mm}\vector(-1,-1){2}}
  {\linethickness{0.3mm}
  \qbezier[30](56, 0)(49, 12)(42, 0)}
  \put(44.2,3){\linethickness{4mm}\vector(-1,-1){2}}
  {\color{red}\put(42,0){\circle*{1.5}}}
  }
  {\color{red}
  {\linethickness{0.3mm}
  \qbezier[40](45, 0)(60, 20)(71.5, 0)}
  \put(69.2,3){\linethickness{4mm}\vector(1,-1){2}}
  {\linethickness{0.3mm}
  \qbezier[80](71, 0)(91, 30)(109, 0)}
  \put(106.8,3){\linethickness{4mm}\vector(1,-1){2}}
  }
 \put(25,19){\scriptsize Enhanced backtracking}
 \put(40,18){\linethickness{4mm}\vector(1,-1){5}}
\put(67,-5){\scriptsize Standard backtracking}
\put(100,18){\linethickness{4mm}\vector(-1,-1){4}}
 \put(88,19){\scriptsize Forward line-search}
\put(90,-2){\linethickness{4mm}\vector(1,2){6}}
\put(22,-5){\scriptsize Overjump}
\put(35.5,-4){\linethickness{4mm}\vector(1,1){4}}
\end{picture}
\vskip 0.3cm
\caption{Illustration of step-size selection procedures}\label{fig:over-jump}
\end{figure}  
The idea is quite simple: we start with the ``optimal'' step-size $\alpha_k$ and increase it towards full-step with a stopping condition based on the objective evaluations. Interestingly, when we analytically calculate the step, we also have access to the side information on whether or not we are within the quadratic convergence region, and hence, we can automatically switch to Step 4 in Algorithm \ref{alg:A2}. Alternatively, calculation of the analytic step-size can enhance backtracking since the knowledge of $\alpha_k$ reduces the backtracking range from $(0,1]$ to $(\alpha_k,1]$ with the side-information as to when to automatically take the full-step without function evaluation. 

\subsection{A proximal quasi-Newton scheme}
Even if the function $f$ is self-concordant, the numerical evaluation of $\nabla^2f(\xb)$ can be expensive in many applications (e.g., $f(\xb) := \sum_{j=1}^pf_j(\mathbf{A}_j\xb)$, with $p \gg n$).
Hence, it is interesting to study proximal quasi-Newton method for solving \eqref{eq:COP}. Our interest in the quasi-Newton methods in this paper is for completeness; we do not provide any algorithmic details or implementations on our quasi-Newton variant.  

To this end, we need a symmetric positive definite matrix $\Hb_k$ that approximates $\nabla^2f(\xb^k)$ at the iteration $k$. As a result, our main assumption here is that matrix $\Hb_{k+1}$ at the next iteration $k+1$ satisfies the \textit{secant equation}:
\begin{equation}\label{eq:scant_eq}
\Hb_{k+1}(\xb^{k+1} - \xb^k) = \nabla{f}(\xb^{k+1}) - \nabla{f}(\xb^k). 
\end{equation}
For instance, it is well-known that the sequence of matrices $\set{\Hb_{k}}_{k\geq 0}$ updated by the following BFGS formula satisfies the secant equation
\eqref{eq:scant_eq}  \citep{Nocedal2006}:
\begin{equation}\label{eq:bfgs_formula}
\Hb_{k+1} := \Hb_k + \frac{1}{(\yb^k)^T\zb^k}\yb^k(\yb^k)^T - \frac{1}{(\zb^k)^T\Hb_k\zb^k}\Hb_k\zb^k(\Hb_k\zb^k)^T,
\end{equation}
where $\zb^k := \xb^{k+1} - \xb^k$ and $\yb^k := \nabla{f}(\xb^{k+1}) - \nabla{f}(\xb^k)$. 
Other methods for updating matrix $\Hb_k$ can be found in \citep{Nocedal2006}, which are not listed here.

In this subsection, we only analyze the full-step proximal quasi-Newton scheme based on the BFGS updates. The global convergence characterization of the BFGS quasi-Newton method can be obtained using our analysis in the next subsection. To this end, we have the following update equation, where the subscript $q$ is used to distinguish the quasi-Newton method: 
\begin{equation}\label{eq:PFQNM}
\xb^{k+1} := \xb^k + \db_q^k.
\end{equation} 
Here we use $\db^k_q$ to stand for the proximal quasi-Newton search direction.

Under certain assumptions, one can prove that the sequence $\set{\xb^k}_{k\geq 0}$ generated by \eqref{eq:PFQNM} converges to $\xb^{*}$ the unique solution of \eqref{eq:COP}.
One of the common assumptions used in quasi-Newton methods is the Dennis-Mor\'{e} condition, see \citep{Dennis1974}. Adopting the Dennis-Mor\'{e} criterion, we impose the following condition in our context:
\begin{equation}\label{eq:Dennis_More_cond}
\lim_{k\to\infty}\frac{\norm{\left[\Hb_k - \nabla^2f(\xb^{*})\right](\xb^{k+1} - \xb^k)}^{*}_{\xb^{*}}}{\norm{\xb^{k+1} - \xb^k}_{\xb^{*}}} = 0.
\end{equation} 
 Now, we establish the superlinear convergence of the sequence $\set{\xb^k}_{k\geq 0}$ generated by \eqref{eq:PFQNM}  as follows: 

\begin{theorem}\label{th:quasi_newton}
Assume that  $\xb^{*}$ is the unique solution of \eqref{eq:COP} and is strongly regular.
Let matrix $\Hb_k$ maintains the secant equation \eqref{eq:scant_eq} and let $\set{\xb^k}_{k\geq 0}$ be a sequence generated by scheme \eqref{eq:PFQNM}.
Then the following statements hold:
\begin{itemize}
\item[(a)]
Suppose, in addition, that the sequence of matrices $\set{\Hb_k}_{k\geq 0}$ satisfies the Dennis-Mor\'{e} condition \eqref{eq:Dennis_More_cond} for sufficiently large $k$. Then the sequence $\set{\xb^k}_{k\geq 0}$ converges to the solution $\xb^{*}$ of \eqref{eq:COP} at a superlinear rate provided that $\norm{\xb^0 - \xb^{*}}_{\xb^{*}} < 1$.

\item[(b)]
Suppose that a matrix  $\Hb_0\succ 0$ is chosen. 
Then $(\yb^k)^T\zb^k > 0$ for all $k \geq 0$ and hence the sequence $\set{\Hb_k}_{k\geq 0}$ generated by \eqref{eq:bfgs_formula} is symmetric positive definite and satisfies the secant equation \eqref{eq:scant_eq}. 
Moreover, if  the sequence $\set{\xb^k}_{k\geq 0}$ generated by \eqref{eq:PFQNM} satisfies $\sum_{k=0}^{\infty}\norm{\xb^k - \xb^{*}}_{\xb^{*}} < +\infty$, then this sequence converges to  $\xb^{*}$ at a superlinear rate.
\end{itemize}
\end{theorem}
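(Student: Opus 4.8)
The plan is to reduce both statements to a single contraction-type estimate on the local norm $\norm{\xb^k - \xb^{*}}_{\xb^{*}}$, exploiting the fixed-point characterization \eqref{eq:fixed_point} together with the nonexpansiveness of the proximity operator from Lemma~\ref{le:Pg_properties}. Write $r_k := \norm{\xb^k - \xb^{*}}_{\xb^{*}}$. The key identity is that $\xb^{k+1} = P^g_{\Hb_k}(\Hb_k\xb^k - \nabla f(\xb^k))$ (full step), while $\xb^{*} = P^g_{\nabla^2 f(\xb^{*})}(\nabla^2 f(\xb^{*})\xb^{*} - \nabla f(\xb^{*}))$. Since these two proximity operators use different metrics, I would first rewrite $\xb^{*}$ as a fixed point in the $\Hb_k$-metric: $\xb^{*} = P^g_{\Hb_k}(\Hb_k\xb^{*} - \nabla f(\xb^{*}))$, which is legitimate because the optimality condition \eqref{eq:Fx_optimality} does not depend on the chosen metric. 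Applying \eqref{eq:P_g_property2} with $\Hb = \Hb_k$ gives
\begin{equation*}
\norm{\xb^{k+1} - \xb^{*}}_{\Hb_k} \leq \norm{\Hb_k(\xb^k - \xb^{*}) - (\nabla f(\xb^k) - \nabla f(\xb^{*}))}_{\Hb_k}^{*}.
\end{equation*}
The right-hand side splits, via the secant equation \eqref{eq:scant_eq} with $\xb^{k+1}\to\xb^k$ is not directly available, so instead I would add and subtract $\nabla^2 f(\xb^{*})(\xb^k - \xb^{*})$: the term $\norm{[\Hb_k - \nabla^2 f(\xb^{*})](\xb^k - \xb^{*})}_{\Hb_k}^{*}$ is controlled by the Dennis–Mor\'e-type quantity, and the residual $\norm{\nabla f(\xb^k) - \nabla f(\xb^{*}) - \nabla^2 f(\xb^{*})(\xb^k - \xb^{*})}$ is a second-order Taylor remainder that the self-concordant bounds \eqref{eq:SC_bounds} bound by a term of order $o(r_k)$ (concretely, something like $\frac{r_k^2}{1-r_k}$ using the standard estimates on $\norm{\nabla^2 f(\xb^{*})^{-1/2}(\nabla f(\xb^k)-\nabla f(\xb^{*}))}$ from \citep[Ch.~4]{Nesterov2004}). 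Finally I would convert between the $\Hb_k$-norm and the $\xb^{*}$-norm on both sides; here strong regularity ($\nabla^2 f(\xb^{*})\succ0$) plus the Dennis–Mor\'e condition ensures the norm equivalence constants tend to $1$, so no loss of rate occurs. Combining yields $r_{k+1} \leq \varepsilon_k r_k + C r_k^2/(1-r_k)$ with $\varepsilon_k\to 0$, which for $r_0<1$ gives first that $r_k\to0$ (a small-enough tail argument, since $r_0<1$ keeps the map a strict contraction once $\varepsilon_k$ is small and $r_k$ is small) and then superlinear convergence, proving part~(a).

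For part~(b), the Dennis–Mor\'e condition is not assumed, so the plan is different: I first verify the curvature condition $(\yb^k)^T\zb^k > 0$, which by the mean-value form $\yb^k = \big(\int_0^1 \nabla^2 f(\xb^k + t\zb^k)\,dt\big)\zb^k$ and positive definiteness of the Hessian along the segment (each $\nabla^2 f$ is PD on $\dom f$ since $f$ is self-concordant and, near $\xb^{*}$, strongly regular) is immediate; this guarantees the BFGS update \eqref{eq:bfgs_formula} is well-defined and keeps $\Hb_{k+1}\succ0$ and secant-consistent. Then, under the summability hypothesis $\sum_k r_k < \infty$, I would invoke the classical Dennis–Mor\'e theorem \citep{Dennis1974}: summable iterate displacements plus Lipschitz-near-$\xb^{*}$ Hessian (which holds locally because $f\in\mathcal{F}_2$ makes $\nabla^2 f$ locally Lipschitz in the $\xb^{*}$-metric on a Dikin ellipsoid) imply that BFGS matrices automatically satisfy the Dennis–Mor\'e condition \eqref{eq:Dennis_More_cond}. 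Once \eqref{eq:Dennis_More_cond} holds, part~(a) applies and delivers the superlinear rate; one must only check the compatibility condition $\norm{\xb^0-\xb^{*}}_{\xb^{*}}<1$, which follows from $\sum_k r_k<\infty$ since then in particular $r_0<1$ (or can be arranged by discarding finitely many initial iterates).

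The main obstacle I anticipate is the metric bookkeeping in part~(a): the proximity operator is nonexpansive only in its own $\Hb_k$-norm, the Dennis–Mor\'e condition is phrased in the fixed $\xb^{*}$-norm, and the Taylor remainder estimate most naturally lives in the local norm $\norm{\cdot}_{\xb^{*}}$ as well. Making the three norm conversions cleanly — and, crucially, ensuring the product of the conversion factors does not destroy the $o(1)$ decay of $\varepsilon_k$ — requires using strong regularity to bound $\norm{\Hb_k^{1/2}\nabla^2 f(\xb^{*})^{-1/2}}$ uniformly, and then showing this bound is $1+o(1)$ along the sequence. Self-concordance is what rescues this: the relative Lipschitz property of $\nabla^2 f$ gives $(1-r_k)^2 \nabla^2 f(\xb^{*}) \preceq \nabla^2 f(\xb^k) \preceq (1-r_k)^{-2}\nabla^2 f(\xb^{*})$ on the Dikin ellipsoid, and the Dennis–Mor\'e condition forces $\Hb_k$ to be close to $\nabla^2 f(\xb^{*})$ in the relevant directions, so the factors converge to $1$. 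The rest — strict-contraction tail argument for $r_k\to0$, then upgrading to superlinear — is routine once the estimate $r_{k+1}\le \varepsilon_k r_k + C r_k^2/(1-r_k)$ is in hand.
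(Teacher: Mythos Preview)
Your approach in part~(a) has a real gap, and the issue is precisely the one you flag as ``the main obstacle'': the metric bookkeeping cannot be made to work as you describe. You write $\xb^{*}$ and $\xb^{k+1}$ as fixed points of $P^g_{\Hb_k}$ and apply nonexpansiveness in the $\Hb_k$-norm. This produces two problems. First, after adding and subtracting $\nabla^2 f(\xb^{*})(\xb^k-\xb^{*})$ you obtain the term $[\Hb_k-\nabla^2 f(\xb^{*})](\xb^k-\xb^{*})$, whereas the Dennis--Mor\'e condition \eqref{eq:Dennis_More_cond} controls $[\Hb_k-\nabla^2 f(\xb^{*})](\xb^{k+1}-\xb^{k})$; these directions only coincide asymptotically \emph{after} superlinear convergence is established, so your claimed control is circular. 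Second, and more seriously, your norm-conversion argument assumes that $\norm{\Hb_k^{1/2}\nabla^2 f(\xb^{*})^{-1/2}}$ is $1+o(1)$, but the Dennis--Mor\'e condition is purely directional and does \emph{not} imply $\Hb_k\to\nabla^2 f(\xb^{*})$ in any operator sense; it is well known that BFGS matrices need not converge to the true Hessian even when Dennis--Mor\'e holds. So the conversion factors cannot be shown to tend to $1$, and the argument collapses.

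The paper avoids both difficulties by a single device: instead of the variable metric $\Hb_k$, it casts \emph{both} $\sb^k$ and $\xb^{*}$ as fixed points of the proximity operator $P^g_{\xb^{*}}$ in the \emph{fixed} metric $\nabla^2 f(\xb^{*})$, at the cost of an explicit error term $\eb_{\xb^{*}}(\Hb_k,\sb^k)=(\nabla^2 f(\xb^{*})-\Hb_k)(\sb^k-\xb^k)$ inside the argument of $P^g_{\xb^{*}}$; see \eqref{eq:proof_sk}--\eqref{eq:proof_xstar}. Nonexpansiveness then gives directly the key estimate \eqref{eq:key_ineq2},
\begin{equation*}
\Vert \xb^{k+1}-\xb^{*}\Vert_{\xb^{*}} \;\le\; \frac{r_k^2}{1-r_k} \;+\; \big\Vert (\Hb_k-\nabla^2 f(\xb^{*}))(\xb^{k+1}-\xb^k)\big\Vert_{\xb^{*}}^{*},
\end{equation*}
in which the second term is \emph{exactly} the Dennis--Mor\'e quantity and everything lives in the single $\xb^{*}$-norm, so no conversion is needed. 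Your plan for part~(b) is essentially correct and matches the paper (curvature condition via the integral mean-value form; then a bounded-deterioration argument---the paper invokes \cite[Theorem~3.2]{Byrd1989} rather than \cite{Dennis1974}---using $\sum_k r_k<\infty$ together with the self-concordant Lipschitz-type estimate $\Vert\yb^k-\nabla^2 f(\xb^{*})\zb^k\Vert_{\xb^{*}}^{*}\le \frac{r_k+r_{k+1}}{(1-r_k)(1-r_{k+1})}\Vert\zb^k\Vert_{\xb^{*}}$ to verify Dennis--Mor\'e; then part~(a)). But part~(b) relies on part~(a), so the fix above is needed first.
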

The proof of this theorem can be found in the appendix.
We note that if the sequence $\set{\xb^k}_{k\geq 0}$ locally converges to $\xb^{*}$ at a linear rate w.r.t. the local norm at $\xb^{*}$, i.e. $\norm{\xb^{k+1} - \xb^{*}}_{\xb^{*}} \leq \kappa\norm{\xb^{k} - \xb^{*}}_{\xb^{*}}$ for some $\kappa \in (0, 1)$ and $k\geq 0$, then the condition $\sum_{k=0}^{\infty}\norm{\xb^k - \xb^{*}}_{\xb^{*}} < +\infty$ automatically holds.
From \eqref{eq:Dennis_More_cond} we also observe that the matrix $\Hb_k$ is required to well approximate $\nabla^2f(\xb^{*})$ along the direction $\db^k_q$, which is not in the whole space.

\subsection{A proximal-gradient method}\label{subsec:prox_gradient_method}
If we choose matrix $\Hb_k := \mathbf{D}_k$, where $\mathbf{D}_k$ is a positive diagonal matrix, then the iterative scheme \eqref{eq:iterative_scheme} is called the \textit{proximal-gradient} scheme.
In this case, we can write \eqref{eq:iterative_scheme} as
\begin{equation}\label{eq:PGM}
\xb^{k+1} := \xb^k + \alpha_k\db^k_g = (1-\alpha_k)\xb^k + \alpha_k\mathbf{s}_g^k, 
\end{equation}
where  $\alpha_k\in (0,1]$ is an appropriate step size, $\db^k_g$ is the proximal-gradient search direction and $\sb^k_g \equiv \sb^k$ as in \eqref{eq:cvx_subprob}.

The following lemma shows how we can choose the step size $\alpha_k$ corresponding to $\mathbf{D}_k$ such that we obtain a descent direction in the proximal-gradient scheme \eqref{eq:PGM}. The proof of this lemma can be found in the appendix.

\begin{lemma}\label{le:step_size}
Let $\set{\xb^k}_{k\geq 0}$ be a sequence generated by \eqref{eq:PGM}. 
Suppose that the matrix $\mathbf{D}_k \succ 0$ is chosen such that the step size $\alpha_k $ satisfies $\alpha_k := \frac{\beta_k^2}{\lambda_k(\lambda_k + \beta_k^2)}\in (0, 1]$ $($see below$)$, where $\beta_k := \Vert\db^k_g\Vert_{\mathbf{D}_k}$ and $\lambda_k := \Vert\db^k_g\Vert_{\xb^k}$. Then $\set{\xb^k}_{k\geq 0}\subset \dom{F}$ and
\begin{equation}\label{eq:decrease_F}
F(\xb^{k+1}) \leq F(\xb^k) - \omega\left(\frac{\beta_k^2}{\lambda_k}\right).
\end{equation}
Moreover, the step-size $\alpha_k$ as defined above is optimal.
\end{lemma}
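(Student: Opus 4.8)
The plan is to mimic the proof of Theorem~\ref{th:choose_alpha}, but working with the metric $\mathbf{D}_k$ instead of $\nabla^2 f(\xb^k)$. First I would use the self-concordant upper bound \eqref{eq:SC_bound2}: since $\xb^{k+1} = \xb^k + \alpha_k \db^k_g$ and (provided $\alpha_k\lambda_k < 1$, which must be checked a posteriori) we have $\norm{\xb^{k+1}-\xb^k}_{\xb^k} = \alpha_k\lambda_k < 1$, the bound gives
\begin{equation*}
f(\xb^{k+1}) \leq f(\xb^k) + \alpha_k\nabla f(\xb^k)^T\db^k_g + \omega_{*}(\alpha_k\lambda_k).
\end{equation*}
For the nonsmooth part, convexity of $g$ together with the line $\xb^{k+1} = (1-\alpha_k)\xb^k + \alpha_k\sb^k_g$ yields $g(\xb^{k+1}) \leq (1-\alpha_k)g(\xb^k) + \alpha_k g(\sb^k_g)$. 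Adding these and using the optimality condition \eqref{eq:subprob_opt} with $\Hb_k = \mathbf{D}_k$ — namely $-\nabla f(\xb^k) - \mathbf{D}_k\db^k_g \in \partial g(\sb^k_g)$, so that $g(\sb^k_g) - g(\xb^k) \le -(\nabla f(\xb^k) + \mathbf{D}_k\db^k_g)^T\db^k_g$ — I would collect terms to obtain
\begin{equation*}
F(\xb^{k+1}) \leq F(\xb^k) - \alpha_k\big(\lambda_k^2 \text{-like term}\big)\ldots
\end{equation*}
More precisely the cross terms combine to $-\alpha_k\beta_k^2 + \omega_{*}(\alpha_k\lambda_k)$, recalling $\beta_k^2 = (\db^k_g)^T\mathbf{D}_k\db^k_g$ and $\nabla f(\xb^k)^T\db^k_g \le -\beta_k^2$ from the same optimality inclusion. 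Hence
\begin{equation*}
F(\xb^{k+1}) \leq F(\xb^k) - \alpha_k\beta_k^2 + \omega_{*}(\alpha_k\lambda_k).
\end{equation*}

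Next I would minimize the right-hand side over the step size. Writing $\psi(\alpha) := \alpha\beta_k^2 - \omega_{*}(\alpha\lambda_k) = \alpha\beta_k^2 + \alpha\lambda_k + \ln(1 - \alpha\lambda_k)$ and setting $\psi'(\alpha) = 0$ gives $\beta_k^2 + \lambda_k - \frac{\lambda_k}{1-\alpha\lambda_k} = 0$, i.e. $1 - \alpha\lambda_k = \frac{\lambda_k}{\lambda_k + \beta_k^2}$, which rearranges exactly to $\alpha_k = \frac{\beta_k^2}{\lambda_k(\lambda_k + \beta_k^2)}$. This is the claimed formula, it is the \emph{optimal} (maximizing) choice by concavity of $\psi$, and one checks $\alpha_k\lambda_k = \frac{\beta_k^2}{\lambda_k + \beta_k^2} < 1$, which retroactively validates the use of \eqref{eq:SC_bound2}; the hypothesis $\alpha_k \le 1$ is assumed in the statement (it holds precisely when $\beta_k^2 \le \lambda_k^2$, a Rayleigh-quotient comparison between $\mathbf{D}_k$ and $\nabla^2 f(\xb^k)$). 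Substituting this $\alpha_k$ back, a short computation gives $\psi(\alpha_k) = \frac{\beta_k^2}{\lambda_k} - \ln\!\big(1 + \frac{\beta_k^2}{\lambda_k}\big) = \omega\!\big(\frac{\beta_k^2}{\lambda_k}\big)$, which is \eqref{eq:decrease_F}. Finally $\xb^{k+1} \in \dom{F}$ follows because $\norm{\xb^{k+1}-\xb^k}_{\xb^k} < 1$ keeps us inside the Dikin ellipsoid, hence inside $\dom{f}$, and convexity of $\dom{g}$ handles $g$.

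The main obstacle — really the only subtle point — is bookkeeping the two inequalities from the subproblem optimality condition \eqref{eq:subprob_opt}: one needs both $\nabla f(\xb^k)^T\db^k_g \le -\beta_k^2 = -(\db^k_g)^T\mathbf{D}_k\db^k_g$ (to kill the linear term in $f$) and the monotonicity/subgradient inequality for $g$ evaluated at $\sb^k_g$ versus $\xb^k$, and one must combine them with the correct $\alpha_k$-weighting coming from the convex combination $\xb^{k+1} = (1-\alpha_k)\xb^k + \alpha_k\sb^k_g$ so that all the $-\alpha_k\beta_k^2$ terms line up. The self-concordance input and the one-dimensional optimization are then entirely routine, paralleling Theorem~\ref{th:choose_alpha}.
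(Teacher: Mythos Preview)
Your approach is essentially identical to the paper's: combine the self-concordant upper bound \eqref{eq:SC_bound2} with the convexity of $g$ and the subgradient inequality at $\sb^k_g$ to reach $F(\xb^{k+1}) \le F(\xb^k) - \alpha\beta_k^2 + \omega_*(\alpha\lambda_k)$, and then optimize in $\alpha$. One small correction: the standalone inequality $\nabla f(\xb^k)^T\db^k_g \le -\beta_k^2$ that you invoke twice is neither needed nor generally true (it fails whenever $g(\sb^k_g) < g(\xb^k)$). The $\nabla f(\xb^k)^T\db^k_g$ contributions from the $f$-bound and from your inequality $g(\sb^k_g) - g(\xb^k) \le -\nabla f(\xb^k)^T\db^k_g - \beta_k^2$ cancel \emph{exactly}, leaving $-\alpha_k\beta_k^2$; no separate control of the gradient term is required. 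Also, $\alpha_k \le 1$ is equivalent to $\lambda_k^2/\beta_k^2 + \lambda_k \ge 1$, not to $\beta_k \le \lambda_k$ as you state parenthetically (the latter is only sufficient).
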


From Lemma \ref{le:step_size}, we observe that $\alpha_k \leq 1$ if $\frac{\lambda_k^2}{\beta_k^2} + \lambda_k \geq 1$. 
It is obvious that if $\lambda_k \geq 1$ then the last condition is automatically satisfied. 
We only consider the case $\lambda_k < 1$.
In fact,  since $\lambda_k \geq 0$, we relax actually the condition $\frac{\lambda_k^2}{\beta_k^2} + \lambda_k \geq 1$ to a simpler condition $\lambda_k \geq \beta_k$.  

We now study the case $\mathbf{D}_k := L_k\mathbb{I}$, where $L_k \geq \underline{L} > 0$ is a positive constant and $\mathbb{I}$ is the identity matrix with dimensions apparent from the context.
Hence, $\beta_k^2 = L_k\Vert\db^k_g\Vert_2^2$ and
\begin{align}
\frac{\lambda_k^2}{\beta_{k}^2} = \frac{(\db^k_g)^T\nabla^2f(\xb^k)\db^k_g}{L_k\Vert\db^k_g\Vert_2^2}. \nonumber
\end{align} 
However, since
\begin{align}
\sigma_{\min}(\nabla^2f(\xb^k)) \leq  \sigma^k := \frac{(\db^k_g)^T\nabla^2f(\xb^k)\db^k_g}{\Vert\db^k_g\Vert_2^2} \leq \sigma_{\max}(\nabla^2f(\xb^k)),
\end{align} 
the condition $\lambda_k \geq \beta_k$ is equivalent to
\begin{equation}\label{eq:L_condition}
L_k \leq \sigma_k,
\end{equation} 
where $\sigma_{\min}^k := \sigma_{\min}(\nabla^2f(\xb^k))$ and $\sigma_{\max}^k := \sigma_{\max}(\nabla^2f(\xb^k))$ are the smallest and largest eigenvalue of $\nabla^2f(\xb^k)$, respectively.
Under the assumption that $\dom{f}$ contains no straight-line, then we have the Hessian $\nabla^2f(\xb^k) \succ 0$  by \cite[Theorem 4.1.3]{Nesterov2004}, which implies that $\sigma^k_{\min} > 0$.
Therefore, in the worst-case, we can choose $L_k := \sigma_{\min}^k$. However, this lower bound may be too conservative. In practice, we can apply a \textit{bisection procedure} to meet the condition \eqref{eq:L_condition}. It is not difficult to prove via contradiction that the number of bisection steps is upper bounded by a {constant}.

We note that if $g$ is separable, i.e. $g(\xb) := \sum_{i=1}^ng_i(\xb_i)$ (e.g. $g(\xb) := \rho\norm{\xb}_1$), then we can compute $\mathbf{s}^k_{\mathbf{D}_k}$ in \eqref{eq:cvx_subprob} in a component-wise fashion as:
\begin{equation}\label{eq:cvx_subprob_grad_method1}
(\mathbf{s}^k_{L_k})_i :=  \mathcal{P}^{g_i}_{\tau^k_{i}}\left(\xb^k_i - \tau^k_i(\nabla{f}(\xb^k))_i\right), ~ i = 1, \dots, n,
\end{equation}
where $\tau^k_i := 1/(\mathbf{D}_k)_{ii}$ and $\mathcal{P}_{\tau_i}^{g_i}(\cdot) $ is the proximity operator of $g_i$ function, with parameter $\tau_i$. 
The computation of $\lambda_k$ only requires one matrix-vector multiplication and one vector inner-product; but it can be reduced by exploiting concrete structure of the smooth part $f$.

Based on Lemma \ref{le:step_size}, we describe the proximal-gradient scheme \eqref{eq:PGM} in Algorithm \ref{alg:A1}. The main computation cost of Algorithm \ref{alg:A1} is incurred at Step 2 and in calculating $\lambda_k$. If $g$ is separable, then the computation of Step 2 can be done in a \textit{closed form}.
\begin{algorithm}[ht]\caption{(\textit{Proximal-gradient method})}\label{alg:A1}
\begin{algorithmic}
 \STATE {\bfseries Inputs:} $\xb^0\in\dom{F}$, tolerance $\varepsilon > 0$.
 \STATE \hrulefill
\FOR{$k=0$ {\bfseries to} $k_{\max}$}
\STATE 1. Choose an appropriate $\mathbf{D}_k \succ 0$ based on \eqref{eq:L_condition}.
\STATE 2. Compute $\db^k_g := \mathcal{P}^g_{\mathbf{D}_k}\left(\mathbf{D}_k\xb^k - \nabla{f}(\xb^k)\right) - \xb^k$ due to \eqref{eq:cvx_subprob}.
\STATE 3. Compute $\beta_k := \Vert\db^k_g\Vert_{\mathbf{D}_k}$ and $\lambda_k := \Vert\db^k_g\Vert_{\xb^k}$.
\STATE 4. If $e_k := \Vert\db^k_g\Vert_2 \leq \varepsilon$ then terminate.
\STATE 5. Update $\xb^{k+1} := \xb^k + \alpha_k\db^k_g$, where $\alpha_k := \frac{\beta_k^2}{\lambda_k(\lambda_k + \beta_k^2)} \in (0, 1]$.
\ENDFOR
\end{algorithmic}
\end{algorithm}
One main step of Algorithm \ref{alg:A1} is Step 2, which depends on the cost of prox-operator $\mathcal{P}^g_{\mathbf{D}_k}$.  In practice, $\mathbf{D}_k$ is determined by a bisection procedure whenever $\lambda_k < 1$, which requires additional computational cost. 
If we choose $D_k := L_k\mathbb{I}$, then in order to fulfill \eqref{eq:L_condition}, we can perform a back-tracking linesearch procedure on $L_k$. This linesearch procedure does not require the evaluations of the objective function. We modify Steps 1-3 of Algorithm \ref{alg:A1} as
\begin{itemize}
\item[1.] Initialize $L_k := L_k^0 > 0$, e.g., by a Barzilai-Borwein step.
\item[2.] Compute $\db^k_g := \mathcal{P}^g_{{L_k\mathbf{I}}_k}\left(L_k\xb^k - \nabla{f}(\xb^k)\right) - \xb^k$ due to \eqref{eq:cvx_subprob}.
\item[3a.] Compute $\beta_k := \Vert\db^k_g\Vert_{L_k\mathbf{I}}$ and $\lambda_k := \Vert\db^k_g\Vert_{\xb^k}$.
\item[3b.] If $\lambda_k^2/\beta_k^2 + \lambda_k < 1$, then set $L_k := L_k/2$ and go back to Step 2. 
\end{itemize}
We note that computing $\lambda_k$ at Step 3 does not need to form the full Hessian $\nabla^2f(\xb^k)$, it only requires a directional derivative, which is relatively cheap in applications \cite[Chapter 7]{Nocedal2006}.

\paragraph{Global and local convergence. }
The global and local convergence of Algorithm \ref{alg:A1}  is stated in the following theorems, whose proof can be found  in the appendix.

\begin{theorem}\label{th:global_convergence_of_grad_method}
Assume that there exists $\underline{L} > 0$ such that $\mathbf{D}_k\succeq \underline{L}\mathbb{I}$ for $k\geq 0$, and the solution $\xb^{*}$ of \eqref{eq:COP} is unique.
Let the sublevel set
\begin{equation*}
\mathcal{L}_F(F(\xb^0)) := \set{\xb\in\dom{F} ~|~ F(\xb) \leq F(\xb^0)}
\end{equation*}
be bounded from below. Then, the sequence $\set{\xb^k}_{k\geq 0}$, generated by Algorithm \ref{alg:A1}, converges to the unique solution $\xb^{*}$ of \eqref{eq:COP}. 
\end{theorem}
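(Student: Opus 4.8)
The plan is to convert the per-iteration decrease of Lemma~\ref{le:step_size} into three facts: (i) $\{F(\xb^k)\}$ converges; (ii) the search directions $\db^k_g$ vanish; (iii) every cluster point of $\{\xb^k\}$ satisfies the optimality condition \eqref{eq:Fx_optimality}, hence equals $\xb^{*}$ by uniqueness, which upgrades to convergence of the whole bounded sequence.

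First I would apply Lemma~\ref{le:step_size}: it gives $\xb^{k+1}\in\dom{F}$ and $F(\xb^{k+1})\leq F(\xb^k)-\omega(\beta_k^2/\lambda_k)$ for all $k$, so the trajectory stays in $\mathcal{L}_F(F(\xb^0))$ and $\{F(\xb^k)\}$ is nonincreasing. Boundedness of this sublevel set then keeps $\{\xb^k\}$ bounded and forces $F(\xb^k)\downarrow F^{\infty}\in\RR$; telescoping the decrease yields $\sum_{k\geq 0}\omega(\beta_k^2/\lambda_k)\leq F(\xb^0)-F^{\infty}<+\infty$, and since $\omega$ is continuous, strictly increasing and zero only at $0$, this forces $\beta_k^2/\lambda_k\to 0$.

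Next I would take a convergent subsequence $\xb^{k_j}\to\bar{\xb}$ and argue, via the standard closedness properties of standard self-concordant functions (in particular $\dom{f}$ contains no line, so $\nabla^2f\succ 0$ on $\dom{f}$, and bounded sublevel sets are closed), that $\bar{\xb}\in\dom{F}$; then $\nabla^2f$ is continuous near $\bar{\xb}$ and $\bar{\sigma}:=\sup_j\sigma_{\max}^{k_j}<+\infty$. Combining $\mathbf{D}_k\succeq\underline{L}\mathbb{I}$ with the elementary bounds $\beta_k^2\geq\underline{L}\norm{\db^k_g}_2^2$ and $\lambda_k^2\leq\sigma_{\max}^{k}\norm{\db^k_g}_2^2$ gives $\beta_{k_j}^2/\lambda_{k_j}\geq(\underline{L}/\sqrt{\bar{\sigma}})\,\norm{\db^{k_j}_g}_2$, so $e_{k_j}=\norm{\db^{k_j}_g}_2\to 0$; and since the step-size rule (via \eqref{eq:L_condition}) enforces $\beta_k\leq\lambda_k$ and keeps $\mathbf{D}_k$ bounded above on the bounded region, also $\beta_{k_j}\to 0$ and $\norm{\mathbf{D}_{k_j}\db^{k_j}_g}_2\to 0$.

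Finally I would pass to the limit in the subproblem optimality condition \eqref{eq:subprob_opt}, which reads $-\nabla{f}(\xb^{k_j})-\mathbf{D}_{k_j}\db^{k_j}_g\in\partial{g}(\sb^{k_j})$ with $\sb^{k_j}=\xb^{k_j}+\db^{k_j}_g\to\bar{\xb}$: by continuity of $\nabla{f}$ and the closedness of the graph of $\partial{g}$ we get $\mathbf{0}\in\nabla{f}(\bar{\xb})+\partial{g}(\bar{\xb})$, so $\bar{\xb}$ solves \eqref{eq:Fx_optimality} and $\bar{\xb}=\xb^{*}$. This applies to an arbitrary convergent subsequence, so $\xb^{*}$ is the only cluster point of the bounded sequence $\{\xb^k\}$, whence $\xb^k\to\xb^{*}$. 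The main obstacle I anticipate is the third paragraph: controlling the spectra of $\nabla^2f(\xb^k)$ and of $\mathbf{D}_k$ uniformly along the iterates and, above all, certifying that the cluster point $\bar{\xb}$ does not escape to $\partial(\dom{f})$ --- this is exactly where self-concordance (closedness/boundedness of sublevel sets, positive definiteness of the Hessian) is essential; the monotonicity/summability argument and the closed-graph passage of $\partial{g}$ are routine.
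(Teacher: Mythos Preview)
Your proposal is correct and follows essentially the same route as the paper: Lemma~\ref{le:step_size} gives monotone descent, telescoping yields $\sum_k\omega(\beta_k^2/\lambda_k)<\infty$, the lower bound $\mathbf{D}_k\succeq\underline{L}\mathbb{I}$ converts this into $\Vert\db^k_g\Vert_2^2/\Vert\db^k_g\Vert_{\xb^k}\to 0$, and then every cluster point of the bounded sequence is optimal, hence equals $\xb^{*}$ by uniqueness. The only difference is packaging: where the paper dispatches the last step by citing Zangwill's convergence theorem, you unroll it explicitly via a subsequence, continuity of $\nabla f$, and the closed-graph property of $\partial g$; your version is slightly more self-contained, and your attention to why the cluster point stays in $\dom{f}$ (via closedness of $F$ and its sublevel sets) makes explicit a point the paper leaves inside the Zangwill citation.
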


\begin{theorem}\label{th:convergence_of_grad_method}
Assume that  $\xb^{*}$ is the unique solution of \eqref{eq:COP} and is strongly regular.
Let $\set{\xb^k}_{k\geq 0}$ be the sequence generated by Algorithm \ref{alg:A1}. Then, for $k$ sufficiently large, if 
\begin{equation}\label{eq:linear_convg_cond}
\frac{\norm{[\mathbf{D}_k - \nabla^2f(\xb^{*})]\db^k_g}_{\xb^{*}}^{*}}{\Vert\db^k_g\Vert_{\xb^{*}}} < \frac{1}{2},
\end{equation}
then $\set{\xb^k}_{k\geq 0}$ locally converges to $\xb^{*}$ at a linear rate. 
In particular, if $\mathbf{D}_k := L_k\mathbb{I}$ and $\gamma_{*} := \max\set{\abs{1 - \frac{L_k}{\sigma_{\min}^{*}}}, \abs{1 - \frac{L_k}{\sigma_{\max}^{*}}}} < \frac{1}{2}$ then the condition \eqref{eq:linear_convg_cond} holds.
\end{theorem}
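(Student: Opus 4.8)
The plan is to turn the statement into a one‑step local contraction estimate for the quantity $r_k := \norm{\xb^k - \xb^{*}}_{\xb^{*}}$. Since $\xb^{*}$ is strongly regular, $\Hb^{*} := \nabla^2 f(\xb^{*}) \succ 0$, so $\norm{\cdot}_{\xb^{*}}$ is a genuine norm and $P^g_{\Hb^{*}}$ is well defined and nonexpansive (Lemma \ref{le:Pg_properties}). The idea is to compare the exact subproblem solution $\sb^k$ with the fixed point $\xb^{*}$ by expressing \emph{both} as proximal points relative to the \emph{same fixed} metric $\Hb^{*}$, rather than the varying metric $\mathbf{D}_k$, and then to control the discrepancy with the hypothesis \eqref{eq:linear_convg_cond} and a self‑concordant remainder bound. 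By Theorem \ref{th:global_convergence_of_grad_method} (or by the premise that $k$ is large enough) we may assume $r_k < 1$ and $\xb^k$ already in a small neighborhood of $\xb^{*}$.

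First I would rewrite $\sb^k$ as a proximal point in the metric $\Hb^{*}$. From the subproblem optimality condition \eqref{eq:subprob_opt} with $\Hb_k = \mathbf{D}_k$ we have $-\nabla f(\xb^k) - \mathbf{D}_k\db^k_g \in \partial g(\sb^k)$; adding $\Hb^{*}\sb^k$ to both sides and using the definition of $P^g_{\Hb^{*}}$ gives $\sb^k = P^g_{\Hb^{*}}\!\big(\Hb^{*}\sb^k - \nabla f(\xb^k) - \mathbf{D}_k\db^k_g\big)$, while the fixed‑point characterization \eqref{eq:fixed_point} gives $\xb^{*} = P^g_{\Hb^{*}}\!\big(\Hb^{*}\xb^{*} - \nabla f(\xb^{*})\big)$. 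Applying the nonexpansiveness inequality \eqref{eq:P_g_property2}, writing $\nabla f(\xb^k) - \nabla f(\xb^{*}) = \mathbf{G}_k(\xb^k - \xb^{*})$ with $\mathbf{G}_k := \int_0^1 \nabla^2 f\big(\xb^{*} + t(\xb^k - \xb^{*})\big)\,dt$, and substituting $\sb^k - \xb^{*} = (\xb^k - \xb^{*}) + \db^k_g$, the argument of the dual norm collapses to $(\Hb^{*} - \mathbf{G}_k)(\xb^k-\xb^{*}) + (\Hb^{*} - \mathbf{D}_k)\db^k_g$, so that
\[
\norm{\sb^k - \xb^{*}}_{\xb^{*}} \le \norm{\nabla f(\xb^k) - \nabla f(\xb^{*}) - \Hb^{*}(\xb^k - \xb^{*})}^{*}_{\xb^{*}} + \norm{[\mathbf{D}_k - \nabla^2 f(\xb^{*})]\db^k_g}^{*}_{\xb^{*}}.
\]

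Next I would bound the two terms. The first is the standard self‑concordant gradient estimate \citep{Nesterov2004}, $\le r_k^2/(1-r_k)$; the second is $\theta_k\norm{\db^k_g}_{\xb^{*}}$ with $\theta_k < \tfrac12$ by \eqref{eq:linear_convg_cond}. Using the triangle bound $\norm{\db^k_g}_{\xb^{*}} \le \norm{\sb^k - \xb^{*}}_{\xb^{*}} + r_k$ and setting $\rho_k := \norm{\sb^k - \xb^{*}}_{\xb^{*}}/r_k$, the chain becomes $\rho_k(1-\theta_k) \le r_k/(1-r_k) + \theta_k$, i.e.\ $\rho_k \le \big(r_k/(1-r_k) + \theta_k\big)/(1-\theta_k)$, which is $<1$ once $r_k$ is small — precisely because $\theta_k < \tfrac12$. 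Then, from $\xb^{k+1} - \xb^{*} = (1-\alpha_k)(\xb^k - \xb^{*}) + \alpha_k(\sb^k - \xb^{*})$ one gets $r_{k+1} \le [1 - \alpha_k(1-\rho_k)]r_k$; to make this a bona fide linear rate I also need $\alpha_k$ bounded away from $0$, which follows because \eqref{eq:linear_convg_cond} forces $\beta_k^2/\lambda_k^2$ into a fixed compact subinterval of $(0,\infty)$ while $\lambda_k = \norm{\db^k_g}_{\xb^k}\to 0$, so $\alpha_k = (\beta_k^2/\lambda_k^2)/(1+\beta_k^2/\lambda_k) \to \lim \beta_k^2/\lambda_k^2 > 0$. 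Induction then yields $r_k \to 0^+$ linearly and hence $\xb^k \to \xb^{*}$. For the special case $\mathbf{D}_k = L_k\mathbb{I}$ I would verify \eqref{eq:linear_convg_cond} directly: the substitution $\wb = \nabla^2 f(\xb^{*})^{1/2}\vb$ turns $\norm{(L_k\mathbb{I} - \nabla^2 f(\xb^{*}))\vb}^{*}_{\xb^{*}}/\norm{\vb}_{\xb^{*}}$ into $\norm{(L_k\nabla^2 f(\xb^{*})^{-1} - \mathbb{I})\wb}_2/\norm{\wb}_2 \le \max\{\abs{1 - L_k/\sigma^{*}_{\min}}, \abs{1 - L_k/\sigma^{*}_{\max}}\} = \gamma_{*}$, so $\gamma_{*} < \tfrac12$ suffices.

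The main obstacle is the self‑referential step in the third paragraph: since only linear (not superlinear) convergence can be expected, $\norm{\db^k_g}_{\xb^{*}}$ is comparable to $r_k$ and cannot be absorbed as an $o(r_k)$ term, so it must be folded back into $\norm{\sb^k - \xb^{*}}_{\xb^{*}}$ itself; the contraction then survives exactly when the feedback coefficient $\theta_k$ is below $\tfrac12$, which is what fixes the constant in \eqref{eq:linear_convg_cond}. A secondary technical point is certifying that the analytic step size $\alpha_k$ does not degenerate to $0$ as $\xb^k \to \xb^{*}$, which is needed to convert the per‑iteration decrease into a uniform linear rate.
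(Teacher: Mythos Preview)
Your proposal is correct and follows essentially the same route as the paper: the same fixed-point representation of $\sb^k$ and $\xb^{*}$ in the metric $\nabla^2 f(\xb^{*})$, the same nonexpansiveness argument yielding $\norm{\sb^k-\xb^{*}}_{\xb^{*}}\le r_k^2/(1-r_k)+\norm{[\mathbf{D}_k-\nabla^2 f(\xb^{*})]\db^k_g}^{*}_{\xb^{*}}$, the same triangle-inequality feedback via $\norm{\db^k_g}_{\xb^{*}}\le\norm{\sb^k-\xb^{*}}_{\xb^{*}}+r_k$, and the same convex-combination step and eigenvalue computation for $\mathbf{D}_k=L_k\mathbb{I}$. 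One remark: the paper's proof does not explicitly address the non-degeneracy of $\alpha_k$ that you flag as a secondary issue; it simply records the per-step factor $\tilde\psi_k<1$, so your observation that one should check $\alpha_k$ stays bounded away from zero is a reasonable addition rather than a deviation.
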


We note that $\xb^{*}$ is {\it unknown}; thus, evaluating $\gamma_{*}$ a priori is infeasible in reality. In implementation, one can choose an appropriate value $L_k\geq \underline{L} > 0$ and then adaptively update  $L_k$ based on the knowledge of the eigenvalues of $\nabla^2f(\xb^k)$ near to the solution $\xb^{*}$. 
The condition \eqref{eq:linear_convg_cond} can be expressed as $(\db^k_g)^T[L_k^2\nabla^2f(\xb^{*})^{-1} + \nabla^2f(\xb^{*}) - 2L_k\mathbb{I}]\db^k_g \leq \frac{1}{4}\Vert\db_g^k\Vert_{\xb^{*}}^2$, which leads to 
\begin{align}\label{eq:linear_converg_cond}
\frac{3}{4}\Vert\db_g^k\Vert_{\xb^{*}}^2 + L^2[\Vert\db_g^k\Vert_{\xb^{*}}^{*}]^2 < 2L_k\Vert\db_g^k\Vert_2^2.
\end{align}
We note that to find $L_k$ such that  \eqref{eq:linear_converg_cond} holds, we require $\Vert\db_g^k\Vert_{\xb^{*}}^{*}\Vert\db_g^k\Vert_{\xb^{*}} < \sqrt{\frac{4}{3}}\Vert\db_g^k\Vert_2^2$.
If the last condition in Theorem \ref{th:convergence_of_grad_method} is satisfied then the condition \eqref{eq:linear_converg_cond} also holds.
While the last condition in  Theorem \ref{th:convergence_of_grad_method} seems too imposing, we claim that, for most $f$ and $g$, we only require \eqref{eq:linear_converg_cond} to be satisfied (see also the empirical evidence in Subsection  \ref{subsec:linear_convergence}). 
The condition \eqref{eq:linear_convg_cond} (or \eqref{eq:linear_converg_cond}) can be referred to as a \emph{\bf restricted} approximation gap between $\mathbf{D}_k$ and the true Hessian $\nabla^2f(\xb^{*})$ along the direction $\db^{k}_g$ for $k$ sufficiently large. 
For instance,  when $g$ is based on the $\ell_1$-norm/the nuclear norm, the search direction $\db^{k}_g$ have at most twice the sparsity/rank of $\xb^{*}$ near the convergence region. Given a subspace generated by all the directions $\db^k_g$, one can prove, via probabilistic assumptions on $f$ that the restricted condition \eqref{eq:linear_convg_cond} is satisfied with a high probability using statistic tools.

\begin{remark}\label{re:sparsity}
From the scheme \eqref{eq:PGM} we observe that the step size $\alpha_k < 1$ may not preserve some of the desiderata on $\xb^{k+1}$ due to the closed form solution of the prox-operator $\mathcal{P}^g_{\mathbf{D}_k}$. For instance, when $g$ is based on the $\ell_1$-norm, $\alpha_k < 1$, might increase the sparsity level of the solution as opposed to monotonically increasing it. However, in practice, the numerical values of $\alpha_k$ are often $1$ near the convergence, which maintain properties, such as sparsity, low-rankedness, etc.
\end{remark}

\paragraph{Global convergence rate:}
In proximal gradient methods, proving global convergence rate guarantees requires a global constant to be known \emph{a priori}---such as the Lipschitz constant. However such an assumption does not apply for the class of just self-concordant functions that we consider in this paper.
We only characterize the following property in an ergodic sense.
Let $\set{\xb^k}_{k\geq 0}$ be the sequence generated by \eqref{alg:A1}. We define
\begin{equation}\label{eq:ergodic_sequence}
\bar{\xb}^k := S_k^{-1}\sum_{j=0}^k\alpha_j\xb^j, ~~\mathrm{where}~~S_k := \sum_{j=0}^k\alpha_j > 0.
\end{equation}
Then we can show that $F(\bar{\xb}^k) - F^{*} \leq \frac{\bar{L}}{2S_k}\norm{\xb^0 - \xb^{*}}_2^2$, where $\bar{L} := \displaystyle\max_{j=0,k}L_k$.
If $\alpha_j \geq \underline{\alpha} > 0$ for $0 \leq j \leq k$, then $S_k \geq \underline{\alpha}(k+1)$, which leads to $F(\bar{\xb}^k) - F^{*} \leq \frac{\bar{L}}{2(k+1)\underline{\alpha}}\norm{\xb^0 - \xb^{*}}_2^2$.
The proof of this statement can be found in \citep{Tran-Dinh2014}, which we omit  here.

\paragraph{A modification of the proximal-gradient method:}
If the point $\sb^k_g$ generated by \eqref{eq:cvx_subprob} belongs to $\dom{F}$, then $F(\sb^k_g) < +\infty$.
Similarly to the definition of $\xb^{k+1}$ in \eqref{eq:PGM}, we can define a new trial point
\begin{equation}\label{eq:trial_point}
\hat{\xb}^k := (1-\alpha_k)\xb^k + \alpha_k\sb^k_g.
\end{equation}
If $F(\sb^k_g) \leq F(\xb^k)$, then, by the convexity of $F$, it is easy to show that
\begin{equation*}
F(\hat{\xb}^{k}) = F\left((1-\alpha_k)\xb^k + \alpha_k\sb^k_g\right) \leq (1-\alpha_k)F(\xb^k) + \alpha_kF(\sb^k_g) \overset{F(\sb^k_g) \leq F(\xb^k)}{\leq} F(\xb^k).
\end{equation*}
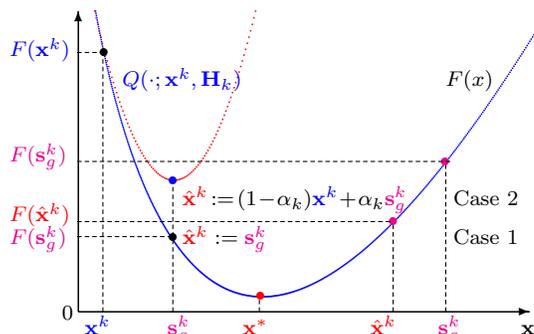
\begin{figure}[ht]
\begin{center}
\setlength{\unitlength}{1mm}
\begin{picture}(0,42)
 {\linethickness{0.2mm}
 \put(-30, 0){\vector(1, 0){60}}
 \put(-30, 0){\vector(0, 1){40}} 
}
{\linethickness{0.2mm}\color{blue} \qbezier(-29, 40)(-15, -35)(30, 38)}
{\linethickness{0.2mm}\color{red} \qbezier[100](-30.4, 40)(-20, -5)(-9, 40)}
\multiput(-29.3,0)(0,1){35}{\line(0,1){0.6}}
{\color{black}\put(-29.2,34.5){\circle*{1.2}}}
\multiput(-20,0)(0,1){18}{\line(0,1){0.6}}
{\color{blue}\put(-20,17.5){\circle*{1.2}}}
\multiput(-8.5,0)(0,1){3}{\line(0,1){0.6}}
{\color{black}\put(-20,10){\circle*{1.2}}}
{\color{red}\put(-9.65,2.2){\circle*{1.2}}}
\multiput(15,0)(0,1){20}{\line(0,1){0.6}}
{\color{magenta}\put(14.9,20){\circle*{1.2}}}
\multiput(8,0)(0,1){12}{\line(0,1){0.6}}
{\color{magenta}\put(8,12){\circle*{1.2}}}
\multiput(-34,10)(1,0){13}{\line(1,0){0.6}}
\multiput(-34,34.5)(1,0){4}{\line(1,0){0.6}}
\multiput(-34,20)(1,0){50}{\line(1,0){0.6}}
\multiput(-34,12)(1,0){42}{\line(1,0){0.6}}
\put(-36,-1){\scriptsize$0$}
\put(-12,-3){\scriptsize\color{red}$\xb^{*}$}
\put(-33,-3){\scriptsize\color{blue}$\xb^k$}
\put(-22,-3){\scriptsize\color{magenta}$\sb^k_g$}
\put(14,-3){\scriptsize\color{magenta}$\sb^k_g$}
\put(25,-3){\scriptsize\color{black}$\xb$}
\put(5,-3){\scriptsize\color{red}$\hat{\xb}^{k}$}
\put(-43,34){\scriptsize\color{blue}$F(\xb^k)$}
\put(-43,20){\scriptsize\color{magenta}$F(\sb^k_g)$}
\put(-43,9){\scriptsize\color{magenta}$F(\sb^k_g)$}
\put(-43,12){\scriptsize\color{red}$F(\hat{\xb}^{k})$}
\put(15,30){\scriptsize\color{black}$F(x)$}
\put(-28,30){\scriptsize\color{blue}$Q(\cdot;\xb^k,\Hb_k)$}
\put(-20,14.2){\scriptsize${\color{red}\hat{\xb}^k} \!:=\! (1 \!-\! \alpha_k){\color{blue}\xb^k} \!+\! \alpha_k{\color{magenta}\sb^k_g}$}
\put(-20,9){\scriptsize${\color{red}\hat{\xb}^k} := {\color{magenta}\sb^k_g}$}
\put(16,9){\scriptsize Case 1}
\put(16,14.2){\scriptsize Case 2}
\end{picture}
\caption{Illustration of the modified proximal-gradient method}\label{fig:enhanced_proxgrad}
\end{center}
\end{figure}  
In this case, based on the function values $F(\sb^k_g)$, $F(\hat{\xb}^k)$ and $F(\xb^{k})$ we can eventually choose the next iteration $\xb^{k+1}$ as follows:
\begin{equation}\label{eq:enhanced_prox_grad}
\xb^{k+1} := \left\{\begin{array}{lll}
\sb^k_g &\textrm{if} ~\sb^k\in\dom{F} ~\textrm{and}~ F(\sb^k_g) < F(\hat{\xb}^k) &\textrm{(Case 1)},\\
\hat{\xb}^k &\textrm{otherwise} &\textrm{(Case 2)}. 
\end{array}\right.
\end{equation}
The idea of this \emph{greedy} modification is illustrated in Figure \ref{fig:enhanced_proxgrad}. 
We note that here we need to check $\sb^{k}_g \in \dom{F}$ such that $F(\sb^k_g) < F(\xb^k)$ and additional function evaluations $F(\sb^k_g)$ and $F(\hat{\xb}^{k})$. However, careful implementations can recycle quantities that enable us to evaluate the objective at $\sb^k_g$ and at $\xb^{k+1}$ with very little overhead over the calculation of $\alpha_k$ (see Section \ref{sec:applications}).  By using \eqref{eq:enhanced_prox_grad}, we can specify a modified proximal gradient algorithm for solving \eqref{eq:COP}, whose details we omit here since it is quite similar to Algorithm \ref{alg:A1}.

\section{Concrete instances of our optimization framework}\label{sec:applications}
We illustrate three instances of our framework for some of the applications described in Section \ref{sec:intro}.
For concreteness, we describe only the first and second order methods. Quasi-Newton methods based on (L-)BFGS updates or other adaptive variable metrics can be similarly derived in a straightforward fashion. 

\subsection{Graphical model selection}\label{sec:app_graph_select}
We customize our optimization framework to solve the graph selection problem \eqref{eq:glearn_prob}. For notational convenience, we maintain a matrix variable $\mathbf{\Theta}$ instead of vectorizing it. We observe that $f(\mathbf{\mathbf{\Theta}}) := -\log(\det(\mathbf{\Theta})) + \trace({\hat{\mathbf{\Sigma}}\mathbf{\Theta}})$ is a standard self-concordant function, while $g(\mathbf{\Theta}) := \rho \norm{\vec({\mathbf{\Theta}})}_1$ is convex and nonsmooth. The gradient and the Hessian of $f$ can be computed explicitly as $\nabla{f}(\mathbf{\Theta}) := \hat{\mathbf{\Sigma}} - \mathbf{\Theta}^{-1}$ and $\nabla^2f(\mathbf{\Theta}) := \mathbf{\Theta}^{-1}\otimes \mathbf{\Theta}^{-1}$, respectively.
Next, we formulate our proposed framework to construct two algorithmic variants for \eqref{eq:glearn_prob}.

\subsubsection{Dual proximal-Newton algorithm}
We  consider a second order algorithm via a dual solution approach for \eqref{eq:cvx_subprob}. This approach is first introduced in our earlier work \citep{Tran-Dinh2013b}, which did not consider the new modifications we propose in Section \ref{subsec:prox_newton_method}. 

We begin by deriving the following dual formulation of the convex subproblem \eqref{eq:cvx_subprob}. 
Let $\pb_k := \nabla{f}(\xb^k)$, the
convex subproblem \eqref{eq:cvx_subprob} can then be written equivalently as
\begin{equation}\label{eq:cvx_subprob2}
\min_{\xb\in\mathbb{R}^n}\set{\frac{1}{2}\xb^T\Hb_k\xb + (\pb_k - \Hb_k\xb^k)^T\xb + g(\xb)}.
\end{equation}
By using the min-max principle, we can write \eqref{eq:cvx_subprob2} as
\begin{equation}\label{eq:min_max_subprob}
\max_{\ub\in\mathbb{R}^n}\min_{\xb\in\mathbb{R}^n}\set{\frac{1}{2}\xb^T\Hb_k\xb + (\pb_k - \Hb_k\xb^k)^T\xb + \ub^T\xb - g^{*}(\ub)}, 
\end{equation}
where $g^{*}$ is the Fenchel conjugate function of $g$, i.e. $g^{*}(\ub) := \displaystyle\sup_{\xb}\set{ \ub^T\xb - g(\xb)}$.
Solving the inner minimization in \eqref{eq:min_max_subprob} we obtain
\begin{equation}\label{eq:cvx_dual_subprob2}
\min_{\ub\in\mathbb{R}^n}\set{\frac{1}{2}\ub^T\Hb_k^{-1}\ub + \tilde{\pb}_k^T\ub + g^{*}(\ub)},
\end{equation}
where $\tilde{\pb}_k :=  \Hb_k^{-1}\pb_k - \xb^k$.
Note that the objective function $\varphi(\ub) := g^{*}(\ub) + \frac{1}{2}\ub^T\Hb_k^{-1}\ub + \tilde{\pb}_k^T\ub$ of \eqref{eq:cvx_dual_subprob2}
is strongly convex, one can apply the fast projected gradient methods with a linear convergence rate for solving this problem, see \citep{Nesterov2007,Beck2009}.

In order to recover the solution of the primal subproblem \eqref{eq:cvx_subprob}, we note that the solution of the parametric minimization problem in
\eqref{eq:min_max_subprob} is given by $\xb^{*}(\ub) := \xb^k - \Hb_k^{-1}(\pb_k + \ub)$.
Let $\ub^{*}_{\xb^k}$ be the optimal solution of \eqref{eq:cvx_dual_subprob2}. 
We can recover the primal proximal-Newton search direction $\db^k$ defined in \eqref{eq:search_dir_dk} as
\begin{equation}\label{eq:primal_sol_of_cvx_subprob2}
\db^k_n = - \nabla^{2}f(\xb^k)^{-1}\left(\nabla{f}(\xb^k) + \ub^{*}_{\xb^k}\right).
\end{equation}
To compute the quantity $\lambda_k$ defined by \eqref{eq:prox_Newton_decrement} in Algorithm \ref{alg:A2}, we use \eqref{eq:primal_sol_of_cvx_subprob2} such
that
\begin{equation}\label{eq:r_norm}
\lambda_k = \Vert\db^k_n\Vert_{\xb^k} = \norm{\nabla{f}(\xb^k) + \ub^{*}_{\xb^k}}^{*}_{\xb^k}. 
\end{equation}
Note that computing $\lambda_k$ by \eqref{eq:r_norm} requires the inverse of the Hessian matrix $\nabla^2{f}(\xb^k)$.

Surprisingly, this dual approach allows us to avoid matrix inversion as well as Cholesky decomposition in computing the gradient $\nabla{f}(\mathbf{\Theta}_i)$ and the Hessian $\nabla^2f(\mathbf{\Theta}_i)$ of $f$ in graph selection. An alternative is of course to solve \eqref{eq:cvx_subprob} in its primal form. Though, in such case, we need to compute $\mathbf{\Theta}_i^{-1}$ at each iteration $i$ (say, via Cholesky decompositions). 

The dual subproblem \eqref{eq:cvx_dual_subprob2} becomes as:
\begin{equation}\label{eq:cvx_dual_subprob2_app}
{\bf U}^{\ast} = \argmin_{\norm{\vec({\bf U})}_{\infty}\leq 1}\set{\frac{1}{2}\trace((\T_i {\bf U})^2) + \trace(\widetilde{\bf Q}{\bf U})},
\end{equation}
for the graph selection, where $\widetilde{\bf Q} := \rho^{-1}[\T_i\widehat{\cov}\T_i - 2\T_i]$. 
Given the dual solution ${\bf U}^{\ast}$ of \eqref{eq:cvx_dual_subprob2_app}, the primal proximal-Newton search direction (i.e. the solution of
\eqref{eq:cvx_subprob}) is computed as
\begin{equation}\label{eq:primal_sol_of_cvx_subprob2_app}
\boldsymbol{\Delta}_i := -\left((\T_i\widehat{\cov} - \mathbb{I})\T_i + \rho\T_i {\bf U}^{\ast}\T_i\right).  
\end{equation} 
The quantity $\lambda_i$ defined in \eqref{eq:r_norm} can be computed as follows, where ${\bf W}_i := \T_i(\widehat{\cov} + \rho {\bf U}^{\ast})$:
\begin{equation}\label{eq:r_norm_app}
\lambda_i  := \left(p - 2\cdot \trace\left({\bf W}_i\right) + \trace\left( {\bf W}_i^2\right)\right)^{1/2}.
\end{equation} 
Algorithm \ref{alg:A2_glearn} summarizes the description above.
\begin{algorithm}[!ht]\caption{(\textit{Dual PN for graph selection} (\texttt{DPNGS}))}\label{alg:A2_glearn}
\begin{algorithmic}   
   \STATE\textbf{Input: } Matrix $\widehat{\cov}\succ 0$ and a given tolerance $\varepsilon > 0$. Set $\sigma := 0.25(5-\sqrt{17})$.
   \STATE {\bfseries Initialization:} Find a starting point $\T_0\succ 0$.
   \FOR{$i=0$ {\bfseries to} $i_{\max}$}
      \STATE 1. Set $\widetilde{\bf Q} := \rho^{-1}\left(\T_i\widehat{\cov}\T_i - 2\T_i\right)$.
      \STATE 2. Compute ${\bf U}^{\ast}$ in \eqref{eq:cvx_dual_subprob2_app}.
      \STATE 3. Compute $\lambda_i$ by \eqref{eq:r_norm_app}, where $\textbf{W}_i \!:=\! \T_i(\widehat{\cov} \!+\! \rho {\bf U}^{\ast})$. 
      \STATE 4. If $\lambda_i \leq \varepsilon$ terminate.
      \STATE 5. Compute $\small{\boldsymbol{\Delta}_i := -\left((\T_i\widehat{\cov} - \mathbb{I})\T_i + \rho\T_i{\bf U}^{\ast}\T_i\right)}$.
      \STATE 6. If $\lambda_i > \sigma$, then set $\alpha_i := (1 + \lambda_i)^{-1}$. Otherwise, set $\alpha_i = 1$.
      \STATE 7. Update $\T_{i+1} := \T_i + \alpha_i\boldsymbol{\Delta}_i$.
   \ENDFOR
\end{algorithmic}
\end{algorithm}
Overall, this proximal-Newton (PN) algorithm {\it does not require any matrix inversions or Cholesky decompositions}. It only needs matrix-vector and matrix-matrix calculations, which might be attractive for different computational platforms (such as GPUs or simple parallel implementations). Note however that as we work through the dual problem, the primal solution can be dense even if majority of the entries are rather small (e.g.,
smaller than $10^{-6}$).\footnote{In our MATLAB implementation below, we have not exploited the fact that the primal solutions are sparse. 
The overall efficiency can be improved via thresholding tricks, both in terms of time-complexity (e.g., less number of iterations) and matrix estimation quality.}

We now explain the underlying costs of each step in Algorithm \ref{alg:A2_glearn}, which is useful when we consider different strategies for the selection of the step size $\alpha_k$. The computation of $\widetilde{\bf Q}$ and $\boldsymbol{\Delta}_i$ require basic matrix multiplications. 
For the computation of $\lambda_i$, we require two trace operations: $\tr({\bf W}_i)$ in $\mathcal{O}(p)$ time-complexity and $\tr({\bf W}_i^2)$ in
$\mathcal{O}(p^2)$ complexity. We note here that, while ${\bf W}_i$ is a {\it dense} matrix, the trace operation in the latter case requires only the computation of the
diagonal elements of ${\bf W}_i^2$. 
Given $\T_i$, $\alpha_i$ and $\boldsymbol{\Delta}_i$, the calculation of $\T_{i+1}$ has $\mathcal{O}(p^2)$ complexity. In contrast, evaluation of the objective can be achieved through Cholesky decompositions, which has $\mathcal{O}(p^3)$ time complexity. 

To compute \eqref{eq:cvx_dual_subprob2_app}, we can use the fast proximal-gradient method (FPGM) \citep{Nesterov2007,Beck2009} with step size $1/L$   where $L$
is the Lipschitz constant of the gradient of the objective function in \eqref{eq:cvx_dual_subprob2_app}. 
It is easy to observe that $L := \gamma_{\max}^2(\T_i)$ where $\gamma_{\max}(\T_i)$ is the largest eigenvalue of $\T_i$. For sparse $\T_i$, we can
approximately compute $\gamma_{\max}(\T_i)$ is $O(p^2)$ by using \textit{iterative power methods} (typically, 10 iterations suffice). 
The projection onto $\norm{\text{\textbf{vec}}({\bf U})}_\infty \leq 1$ clips the elements by unity in $O(p^2)$ time. 
Since FPGM requires a constant number of iterations $k_{\max}$ (independent of $p$) to achieve an $\varepsilon_{\mathrm{in}}$
solution accuracy, the time-complexity for the solution in  \eqref{eq:cvx_dual_subprob2_app} is $O(k_{\max}M)$,  where $M$ is the cost of matrix multiplication. We have also implemented block coordinate descent and active set methods which scale $O(p^2)$ in practice when the solution is quite sparse. 

Overall, the major operation with general proximal maps  in the
 algorithm is typically the matrix-matrix multiplications of the form $\T_i\bf{U}\T_i$, where $\T_i$ and $\bf{U}$ are symmetric positive definite.  This operation can naturally be computed (e.g., in a GPU) in a parallel or distributed manner. For more details of such computations we refer the reader to \citep{Bertsekas1989}. It is important to note that without Cholesky decompositions used in objective evaluations, the basic DPNGS approach theoretically scales with the cost of matrix-matrix multiplications.

\subsubsection{Proximal-gradient algorithm}
Since $g(\T) := \rho\norm{\vec(\T)}_1$ and $\nabla{f}(\T_i) = \vec(\widehat{\cov}-\T_i^{-1})$, the subproblem \eqref{eq:cvx_subprob} becomes
\begin{equation}\label{eq:cvx_subprob_glearn}
\DT_{i+1} := \mathcal{T}_{\tau_i\rho}\left(\T_i - \tau_i(\widehat{\cov} -\T^{-1}_i)\right) - \T_i,
\end{equation}
where $\mathcal{T}_{\tau} : \mathbb{R}^{p\times p}\to\mathbb{R}^{p\times p}$ is the component-wise matrix thresholding operator which is defined as $\mathcal{T}_{\tau}(\T) :=
\max\set{0, \abs{\T}-\tau}$. We also note that the computation of $\DT_{i+1}$ requires a matrix inversion $\T_i^{-1}$. Since $\T_i$ is positive
definite, one can apply Cholesky decompositions to compute $\T_i^{-1}$ in $O(p^3)$ operations. 
To compute the quantity $\lambda_i$, we have $\lambda_i := \norm{\DT_i}_{\T_i} = \norm{\T_i^{-1}\DT_i}_2$. We also choose $L_i := 0.5\norm{\nabla^2f(\T_i)}_2 = 0.5\norm{\T_i^{-1}}_2^2$.
The above are summarized in Algorithm \ref{alg:A1_glearn}.

\begin{algorithm}[!ht]\caption{(\textit{Proximal-gradient method for graph selection} $(\texttt{ProxGrad1})$)}\label{alg:A1_glearn}
\begin{algorithmic}
\STATE\textbf{Initialization:} Choose a starting point $\T_0\succ 0$ .
\FOR{$i=0$ {\bfseries to} $i_{\max}$}
\STATE 1. Compute $\T_i^{-1}$ via Cholesky decomposition. 
\STATE 2. Choose $L_i$ satisfying \eqref{eq:L_condition} and set $\tau_i := L_i^{-1}$. 
\STATE 3. Compute the search direction $\DT_i$ as \eqref{eq:cvx_subprob_glearn}. 
\STATE 4. Compute $\beta_i := L_i\norm{\vec(\DT_i)}_2$ and $\lambda_i := \norm{\T_i^{-1}\DT_i}_2$. 
\STATE 5. Determine the step size $\alpha_i := \frac{\beta_i}{\lambda_i(\lambda_i + \beta_i)}$. 
\STATE 6. Update $\T_{i+1} := \T_i + \alpha_i\DT_i$. 
\ENDFOR
\end{algorithmic}
\end{algorithm}

The per iteration complexity is dominated by matrix-matrix multiplications and Cholesky decompositions for matrix inversion calculations. In particular, Step 1 requires a Cholesky decomposition with $O(p^3)$ time-complexity. Step 2 requires to compute $\ell_2$-norm of a symmetric positive matrix, which can be done by a power-method in $O(p^2)$ time-complexity. The complexity of Steps 3, 4 and 6 requires $O(p^2)$ operations. 
Step 2 may require additional bisection steps as mentioned in Algorithm \ref{alg:A1} whenever $\lambda_k < 1$.

\subsection{Poisson intensity reconstruction}
We now describe a variant of Algorithm \ref{alg:A1}; a similar instance based on Algorithm \ref{alg:A2} can be easily devised and we omit the details here.
First, we can easily check that the function $\tilde{f}(\xb) := \sum_{i=1}^m\left(\mathbf{a}_i^T\xb -  y_i\log(\mathbf{a}_i^T\xb)\right)$ in \eqref{eq:Poisson_prob} is convex and self-concordant with parameter $M_{\tilde{f}} := 2\cdot\max\left\{\frac{1}{\sqrt{y_i}} ~|~  y_i > 0 , i=1,\dots, m\right\}$, see \cite[Theorem 4.1.1]{Nesterov2004}. 
We define the functions $f$ and $g$ as:
\begin{equation}\label{eq:f_and_g}
f(\xb) := \frac{M_{\tilde{f}}^2}{4}\tilde{f}(\xb), ~~ g(\xb) := \frac{M_{\tilde{f}}^2}{4}\left(\rho\phi(\xb) + \delta_{\set{\ub~|~\ub\geq 0}}(\xb)\right),
\end{equation} 
where $f$ and $g$ satisfy Assumption \ref{as:A1} and $\delta_{\mathcal{C}}$ is the indicator function of $\mathcal{C}$. Thus, the problem in \eqref{eq:Poisson_prob} can be equivalently transformed into \eqref{eq:COP}. Here, the gradient and the Hessian of $f$ satisfy:
\begin{equation}\label{eq:grad_hessian_of_f}
\nabla{f}(\xb) = \frac{M_{\tilde{f}}^2}{4}\sum_{i=1}^m\left(1 - \frac{y_i}{\mathbf{a}_i^T\xb}\right)\mathbf{a}_i ~~\text{and}~~\nabla^2f(\xb) = \frac{M_{\tilde{f}}^2}{4}\sum_{i=1}^m\frac{y_i}{(\mathbf{a}_i^T\xb)^2}\mathbf{a}_i\mathbf{a}_i^T,
\end{equation} respectively. For a given vector $\db\in\mathbb{R}^n$, the local norm $\norm{\db}_{\xb}$ can then be written as:
\begin{equation}\label{eq:d_norm}
\norm{\db}_{\xb} := \left(\db^T\nabla^2f(\xb)\db\right)^{1/2} = \frac{M_{\tilde{f}}}{2}\left(\sum_{i=1}^m\frac{y_i(\mathbf{a}_i^T\db)^2}{(\mathbf{a}_i^T\xb)^2}\right)^{1/2}.
\end{equation}
Computing this quantity requires one matrix-vector multiplication and $\mathcal{O}(m)$ operations.

For the Poisson model, the subproblem \eqref{eq:cvx_subprob} is expressed as follows:
\begin{equation}\label{eq:cvx_subprob_poisson}
\min_{\xb \geq 0} \set{ \frac{1}{2}\Vert\xb - \mathbf{w}^k\Vert_2^2 + \rho_k\phi(\xb) },
\end{equation}
where $\mathbf{w}^k :=  \xb^k - L_k^{-1}\nabla{f}(\xb^k)$ and $\rho_k := \frac{\rho M_{\tilde{f}}^2}{4L_k}$. 
As a penalty function $\phi$ in the Poisson intensity reconstruction, we use the Total Variation norm (TV-norm), defined as $\phi(\xb) := \norm{\mathbf{D}\xb}_1$ (isotropic) or $\phi(\xb) := \norm{\mathbf{D}\xb}_{1,2}$ (anti-isotropic), where $\mathbf{D}$ is a forward linear operator \citep{Chambolle2011,Beck2009a}. For both TV-norm regularizers, the method proposed in \citep{Beck2009a} can solve \eqref{eq:cvx_subprob_poisson} efficiently. 

The above discussion leads to Algorithm \ref{alg:A1_Poisson}. We note that the constant $L_k$ at Step 2 of this algorithm can be estimated based on different rules. In our implementation below, we initialize $L_k$ at a Barzilai-Borwein step size, i.e., $L_k := \frac{(\nabla{f}(\xb^k) - \nabla{f}(\xb^{k-1}))^T(\xb^k - \xb^{k-1})}{\Vert\xb^k - \xb^{k-1}\Vert^2_2}$ and may perform a few backtracking iterations on $L_k$ to ensure the condition \eqref{eq:L_condition} whenever $\lambda_k < 1$. 

\begin{algorithm}[!ht]\caption{(\texttt{ProxGrad} \textit{for Poisson intensity reconstruction} (\texttt{ProxGrad2}))}\label{alg:A1_Poisson}
\begin{algorithmic}
 \STATE {\bfseries Inputs:} $\xb^0 \geq 0$, $\varepsilon > 0$ and $\rho > 0$.
\STATE Compute $M_{\tilde{f}} := 2\max\left\{\frac{1}{\sqrt{\yb_i}} ~|~ \yb_i > 0 , i=1,\dots, m\right\}$.
\FOR{$k=0$ {\bfseries to} $k_{\max}$}
\STATE 1. Evaluate the gradient of $f$ as \eqref{eq:grad_hessian_of_f}. 
\STATE 2. Compute an appropriate value $L_k > 0$ that satisfies \eqref{eq:L_condition}. 
\STATE 3. Compute $\rho_k := 0.25\rho M_{\tilde{f}}^2L_k^{-1}$ and $\mathbf{w}^k :=  \xb^k - L_k^{-1}\nabla{f}(\xb^k)$.
\STATE 4. Compute $\mathbf{s}^k_g$ by solving  \eqref{eq:cvx_subprob_poisson} and then compute $\db^k_g := \sb^k_g - \xb^k$.
\STATE 5. Compute $\beta_k := L_k\Vert\db^k_g\Vert_2^2$ and $\lambda_k := \Vert\db^k_g\Vert_{\xb^k}$ as \eqref{eq:d_norm}.
\STATE 6. If $e_k := L_k^{-1}\sqrt{\beta_k} \leq \varepsilon$ then terminate.
\STATE 7. Determine the step size $\alpha_k := \frac{\beta_k}{\lambda_k(\lambda_k + \beta_k)}$.
\STATE 8. Update $\xb^{k+1} := \xb^k + \alpha_k\db^k_g$.
\ENDFOR
\end{algorithmic}
\end{algorithm}
Note that we can modify Step 8 in Algorithm \ref{alg:A1_Poisson} by using the update scheme \eqref{eq:enhanced_prox_grad} to obtain a new variant of this algorithm. We omit the details here.

\subsection{Heteroscedastic LASSO}\label{subsec:app_unknown_var_LASSO}
We focus on a convex formulation of the unconstrained LASSO problem with unknown variance studied in \citep{Stadler2012} as:
\begin{equation}\label{eq:unvarLASSO}
(\boldsymbol{\beta}^{*}, \sigma^{*}) := \mathrm{arg}\!\!\!\!\!\!\min_{\boldsymbol{\beta} \in \mathbb{R}^p, \sigma\in\mathbb{R}_{++}}\left\{ -\log(\sigma) + \frac{1}{2n}\norm{\mathbf{X}\boldsymbol{\beta} - \sigma \yb}_2^2 + \rho\norm{\boldsymbol{\beta}}_1\right\}.
\end{equation} 
However, our algorithm  can be applied to solve the multiple unknown variance case considered in \citep{Dalalyan2013}.

By letting $\xb := (\boldsymbol{\beta}^T, \sigma)^T\in\mathbb{R}^{p+1}$, $f(\xb) :=  -\log(\sigma) + \frac{1}{2n}\norm{\mathbf{X}\boldsymbol{\beta} - \sigma \yb}_2^2$. Then, it is easy to see that the function $f$ is standard self-concordant. Hence, we can apply Algorithm \ref{alg:A1} to solve this problem. To highlight the salient differences in the code, we note the following:
\begin{itemize}
\item Define $\mathbf{z} := \mathbf{X}\boldsymbol{\beta} - \sigma \yb$, then the gradient vector of function $f$ can be computed  as  
\begin{equation*}
\nabla{f}(\xb) := \left(n^{-1}\mathbf{z}^T\mathbf{X}, -\sigma^{-1} - n^{-1}\yb^T\mathbf{z} \right)^T.
\end{equation*}
This computation requires two matrix-vector multiplications and one inner product.
\item The quantity $\lambda_k$ can be explicitly computed as 
\begin{equation*}
\lambda_k := \left(\left(\sigma_k^{-2} + n^{-1}\yb^T\yb\right)(\db^{k}_{\sigma})^2  + n^{-1}\mathbf{z}_k^T\mathbf{z}_k - 2n^{-1}\db^k_{\sigma}\yb^T\mathbf{z}_k\right)^{1/2},
\end{equation*}
where $\mathbf{z}_k := \mathbf{X}\db^k_{\boldsymbol{\beta}}$ and $\db^k_g := ((\db^k_{\boldsymbol{\beta}})^T, \db_{\sigma}^k)^T$ is the search direction. 
This quantity requires one matrix-vector multiplication and two inner products. Moreover, this matrix-vector product can be reused to compute the gradient for the next iteration.
\end{itemize}
The final algorithm is very similar to Algorithm \ref{alg:A1_Poisson} and hence we omit the details.

\section{Numerical experiments}\label{sec:num_experiment}
In this section, we illustrate our optimization framework via numerical experiments on the variants discussed in Section \ref{sec:applications}.  We only focus on proximal gradient and Newton variants and encourage the interested reader to try out the quasi-Newton variants for their own applications. All the tests are performed in MATLAB 2011b running on a PC Intel Xeon X5690 at 3.47GHz per core with 94Gb RAM.\footnote{We also provide MATLAB implementations of the examples in this section as a software package (SCOPT) at \url{http://lions.epfl.ch/software}.}

\subsection{Proximal-Newton method in action}
By using the graph selection problem, we first show that the modifications on the proximal-Newton method provides advantages in practical convergence as compared to state-of-the-art strategies and provides a safeguard for line-search procedures in optimization routines. We then highlight the impact of different subsolvers for \eqref{eq:cvx_subprob2} in the practical convergence of the algorithms.

\subsubsection{Comparison of different step-size selection procedures}
We apply four different step-size selection procedures in our proximal-Newton framework to solve problem \eqref{eq:glearn_prob}. Specifically, we test the algorithm based on the following configuration:
\begin{itemize}
\item [$(i)$] We implement Algorithm \ref{alg:A2_glearn} in MATLAB using FISTA \citep{Beck2009} to solve the dual subproblem with the following stopping criterion: $\norm{\boldsymbol{\Theta}_{i+1} - \boldsymbol{\Theta}_{i}}_F \leq 10^{-8} \times \max\set{\norm{\boldsymbol{\Theta}_{i+1}}_F, 1}$.

\item[$(ii)$] We consider four different globalization procedures, whose details can be found in Section \ref{subsec:prox_newton_method}: $a)$ \texttt{NoLS} which uses the analytic step size $\alpha_k^{*} = (1+\lambda_k)^{-1}$, $b)$ \texttt{BtkLS} which is an instance of the proximal-Newton framework of \citep{Lee2012} and uses the standard backtracking line-search based on Amirjo's rule, $c)$ \texttt{E-BtkLS} which is based on the standard backtracking line-search enhanced by the lower bound $\alpha_k^{*}$ and, $d)$ \texttt{FwLS} as the  forward line-search by starting from $\alpha_k^{*}$ and increasing the step size until either $\alpha_k = 1$, infeasibility or  the objective value does not improve.

\item [$(iii)$] We test our implementation on four problem cases: The first problem is a synthetic examples of  size $p = 10$, where the data is generated as in \citep{Kyrillidis2013}. 
We run this test for $10$ times and report computational primitives in average.
Three remaining problems are based on real data from \url{http://ima.umn.edu/~maxxa007/send_SICS/}, where the regularization parameters are chosen as the standard values (cf., \cite{Tran-Dinh2013b,Lee2012,Hsieh2011}). We terminate the proximal-Newton scheme if $\lambda_k \leq 10^{-6}$.
\end{itemize} 

The numerical results are summarized in Table \ref{tb:BT_vs_FT_table}. 
Here, $\#\mathrm{iter}$ denotes the (average) number of iterations, $\#\mathrm{chol}$ represents the (average) number of Cholesky decompositions and $\#\mathrm{Mm}$ is the (average) number of matrix-matrix multiplications.
\begin{table*}[!h]
\begin{footnotesize}
\caption{\textsc{Metadata for the line search strategy comparison}} \label{tb:BT_vs_FT_table}
\vskip0.15cm
\newcommand{\cell}[1]{{\!\!}#1{\!}}
\newcommand{\cellbf}[1]{{\!\!}{\color{blue}#1}{\!}}
\newcommand{\cellbff}[1]{{\!\!}{\color{red}#1}{\!}}
\begin{center}
\scriptsize{\begin{tabular}{l|ccc|ccc|ccc|ccc} \toprule
& \multicolumn{3}{|c|}{\cell{Synthetic ($\rho=0.01$)}} & \multicolumn{3}{|c}{\cell{Arabidopsis ($\rho=0.5$)}}  & \multicolumn{3}{|c}{\cell{Leukemia ($\rho=0.1$)}} & \multicolumn{3}{|c}{\cell{Hereditary ($\rho=0.1$)}}\\ \cmidrule{2-13}
\multicolumn{1}{c|}{\textsc{LS Scheme}} & \multicolumn{1}{|c}{\cell{\#iter}} & \cell{\#chol} & \cell{\#Mm} &  \multicolumn{1}{|c}{\cell{\#iter}} & \cell{\#chol} & \cell{\#Mm} &  \multicolumn{1}{|c}{\cell{\#iter}} & \cell{\#chol} & \cell{\#Mm} &  \multicolumn{1}{|c}{\cell{\#iter}} & \cell{\#chol} & \cell{\#Mm} \\ \cmidrule{2-13}
\texttt{NoLS}       &  \cell{25.4} & \cell{-} & \cell{3400} & \cell{18} & \cell{-} & \cell{1810} &  \cell{44} & \cell{-} & \cell{9842} & \cell{72} & \cell{-} & \cell{20960} \\
\texttt{BtkLS}      & \cell{25.5}  & \cell{37.0} & \cell{2436} & \cell{11} & \cell{25} & \cell{718} & \cell{15}  & \cell{50} & \cell{1282}   & \cell{19} & \cell{63} & \cell{2006} \\
\texttt{E-BtkLS}  & \cell{25.5}  & \cell{36.2} & \cell{2436} & \cell{11} & \cell{24} & \cell{718} & \cell{15}  & \cell{49} & \cell{1282}   & \cell{15} & \cell{51} & \cell{1282} \\
\texttt{FwLS}      & \cell{18.1}  & \cellbf{26.2} & \cellbf{1632} & \cell{10} & \cellbf{17} & \cellbf{612}  & \cell{12}  & \cellbf{34} & \cellbf{844} & \cell{14} & \cellbf{44} & \cellbf{1126}  
\\ \bottomrule
\end{tabular}}
\end{center}
\end{footnotesize}
\vskip -0.15in
\end{table*}

We can see that our new step-size selection procedure \texttt{FwLS} shows superior empirical performance as compared to the rest: The standard approach \texttt{NoLS} usually starts with pessimistic step-sizes which are designed for worst-case problem structures. Therefore, we find it advantageous to continue with a forward line-search procedure. Whenever it reaches the quadratic convergence, no Cholesky decompositions are required. This makes a difference, compared to standard backtracking line-search \texttt{BtkLS} where we need to evaluate the objective value at every iteration. While there is no free lunch, the cost of computing  $\lambda_k$  is $\mathcal{O}(p^2)$ in \texttt{FwLS}, which turns out to be quite cheap in this application. The \texttt{E-BtkLS} combines both backtrack line-search and our analytic step-size $\alpha^{*}_k := (1 + \lambda_k)^{-1}$, which outperforms \texttt{BtkLS} as the regularization parameter becomes smaller. Finally, we note that the \texttt{NoLS} variant needs more iterations but it does not require any Cholesky decompositions, which might be advantageous in homogeneous computational platforms.

\subsubsection{Impact of different solvers for the subproblems}
As mentioned in the introduction, an important step in our second order algorithmic framework is the solution of the subproblem \eqref{eq:cvx_subprob}. If the variable matrix $\Hb_k$ is not diagonal, computing $\sb_{\Hb_k}^k$ corresponds to solving a convex subproblem. For a given regularization term $g$, we can exploit different existing approaches to tackle this problem. We illustrate that the overall framework to be quite robust against the solution accuracy of the individual subsolver. 

In this test, we consider the broad used $\ell_1$-norm function as the regularizer. Hence, \eqref{eq:cvx_subprob} collapses to an unconstrained LASSO problem; cf. \citep{Wright2009}. 
To this end, we implement the proximal-Newton algorithm to solve the graph learning problem \eqref{eq:glearn_prob} where $g(\xb) := \rho\norm{\xb}_1$. To show the impact of the subsolver in \eqref{eq:glearn_prob}, we implement the following methods, which are all available in our software package SCOPT:
\begin{itemize}
\item[$(i)$] \texttt{pFISTA} and \texttt{dFISTA}: in these cases, we use the FISTA algorithm \citep{Beck2009} for solving the primal \eqref{eq:cvx_subprob2} and the dual subproblem \eqref{eq:cvx_dual_subprob2}. Morever, to speedup the computations, we further run these methods on the GPU [NVIDIA Quadro 4000].
\item[$(ii)$] \texttt{FastAS}: this method corresponds to the exact implementation of the fast active-set method proposed in \citep{Kim2010a} for solving the primal-dual \eqref{eq:cvx_subprob2}.
\item[$(iii)$] \texttt{BCDC}: here, we consider the block-coordinate descent method implemented in  \citep{Hsieh2011} for solving the primal subproblem \eqref{eq:cvx_subprob2}.
\end{itemize}
We also compare the above variants of the {\it proximal-Newton approach} with $(i)$ the proximal-gradient method (Algorithm \ref{alg:A1_glearn}) denoted by \texttt{ProxGrad1} and $(ii)$ a precise MATLAB implementation of QUIC (\texttt{MatQUIC}), as described in \citep{Hsieh2011}. For the proximal-Newton and \texttt{MatQUIC} approaches, we terminate the execution if the maximum number of iterations exceeds $200$ or the total execution time exceeds the $5$ hours. The maximum number of iterations in \texttt{ProxGrad1} is set to $10^4$.

The results are reported in Table \ref{tb:diff_solver_table}. Overall, we observe that \texttt{dFISTA} shows superior performance across the board  in terms of computational time and the total number of Cholesky decompositions required. Here, $\#\mathrm{nnz}$ represents the number of nonzero entries in the final solution. 
The notation ``$-$'' indicates that the algorithms exceed either the maximum number of iterations or the time limit ($5$ hours).
\begin{table*}[!tp]
\begin{footnotesize}
\caption{\textsc{Metadata for the subsolver efficiency comparison}} \label{tb:diff_solver_table}
\vskip0.15cm
\newcommand{\cell}[1]{{\!\!}#1{\!}}
\newcommand{\cellbf}[1]{{\!\!}{\color{blue}#1}{\!}}
\newcommand{\cellbff}[1]{{\!\!}{\color{red}#1}{\!}}
\begin{center}
\scriptsize{\begin{tabular}{l|rrr|rrr|rrr|rrr} \toprule
& \multicolumn{3}{|c|}{\cell{Estrogen ($p=692$)}} & \multicolumn{3}{|c}{\cell{Arabidopsis ($p=834$)}} &  \multicolumn{3}{|c}{\cell{Leukemia ($p=1255$)}}  & \multicolumn{3}{|c}{\cell{Hereditary($p=1869$)}}\\ \cmidrule{2-13}
\multicolumn{1}{c|}{\textsc{Sub-solvers}} & \multicolumn{1}{|c}{\cell{\#iter}} & \cell{\#chol} & \cell{time[s]} & \multicolumn{1}{|c}{\cell{\#iter}} & \cell{\#chol} & \cell{time[s]} & \multicolumn{1}{|c}{\cell{\#iter}} & \cell{\#chol} & \cell{time[s]} & \multicolumn{1}{|c}{\cell{\#iter}} & \cell{\#chol} & \cell{time[s]} \\ \cmidrule{2-13}
&\multicolumn{12}{|c}{$\rho = 0.5$} \\ \cmidrule{1-13}
& \multicolumn{3}{|c|}{\cell{$\#\textrm{nnz} = 0.022p^2$}} & \multicolumn{3}{|c}{\cell{$\#\textrm{nnz} = 0.030p^2$}} &  \multicolumn{3}{|c}{\cell{$\#\textrm{nnz} = 0.022p^2$}}  & \multicolumn{3}{|c}{\cell{$\#\textrm{nnz} = 0.020p^2$}} \\ \cmidrule{2-13}
\texttt{pFISTA}             &  \cell{9}      & \cell{29}       & \cell{13.10}   & \cell{10}      & \cell{35}     & \cell{24.76}     & \cellbf{9}     & \cell{31}     & \cell{286.57}   & \cell{17}    & \cell{80}   &\cell{1608.66} \\
\texttt{pFISTA[gpu]}     &  \cell{9}      & \cell{29}       & \cell{10.70}   & \cell{10}      & \cell{35}     & \cell{16.81}     & \cellbf{9}     & \cell{31}     & \cell{231.97}   & \cell{17}    & \cell{80}   & \cell{1265.97} \\
\texttt{dFISTA}    	        & \cell{8}       & \cellbf{16}    & \cell{4.66}     & \cell{10}      & \cellbf{17}  & \cell{10.92}     & \cell{14}     & \cellbf{22}   & \cell{50.19}    & \cell{14}    & \cellbf{27}  & \cell{147.86}\\
\texttt{dFISTA[gpu]}     & \cell{8}       & \cellbf{16}    & \cellbf{4.16}  & \cell{10}      & \cellbf{17}  & \cellbf{7.89}    & \cell{14}     & \cellbf{22}   & \cellbf{43.53}  & \cell{14}    & \cellbf{27}  & \cellbf{120.16}\\
\texttt{FastAS}             & \cellbf{7}    & \cell{24}       & \cell{28.69}   & \cellbf{8}     & \cell{27}     & \cell{96.93}     & \cellbf{9}    & \cell{31}      & \cell{532.11}   & \cellbf{11} & \cell{40}     & \cell{1682.28}\\
\texttt{BCDC}              & \cell{8}        & \cell{25}       & \cell{90.35}   & \cell{9}       & \cell{28}     & \cell{227.27}   & \cellbf{9}     & \cell{31}     & \cell{549.80}   & \cell{12}    & \cell{47}     & \cell{3452.82}\\
\texttt{MatQUIC}         & \cell{11}      &\cell{29}        & \cell{21.61}   & \cell{10}      & \cell{35}     & \cell{50.67}    & \cell{10}      & \cell{35}      & \cell{119.06}   & \cell{14}    & \cell{44}     & \cell{891.29}\\
\texttt{ProxGrad1}         & \cell{175}   & \cell{175}     & \cell{8.82}     & \cell{226}    & \cell{226}   & \cell{17.78}     & \cell{230}   & \cell{230}    & \cell{44.06}     & \cell{660}  & \cell{660}   & \cell{350.52}
\\ \midrule
&\multicolumn{12}{|c}{$\rho = 0.1$} \\ \cmidrule{1-13}
& \multicolumn{3}{|c|}{\cell{$\#\textrm{nnz} = 0.072p^2~(\sim 6\%)$}} & \multicolumn{3}{|c}{\cell{$\#\textrm{nnz} = 0.074p^2$}} &  \multicolumn{3}{|c}{\cell{$\#\textrm{nnz} = 0.065p^2$}}  & \multicolumn{3}{|c}{\cell{$\#\textrm{nnz} = 0.063p^2$}} \\ \cmidrule{2-13}
\texttt{pFISTA}             &  \cell{34}     & \cell{101}     & \cell{357.25}     & \cell{57}     & \cell{148}   & \cell{1056.90}   & \cell{143}    & \cell{242}     & \cell{7490.27}      & \cell{-}        & \cell{-}    & \cell{-} \\
\texttt{pFISTA[gpu]}     &  \cell{34}     & \cell{101}     & \cell{300.90}     & \cell{57}     & \cell{148}   & \cell{730.07}   & \cell{143}    & \cell{242}     & \cell{6083.06}      & \cell{-}        & \cell{-}    & \cell{-} \\
\texttt{dFISTA}    	        & \cell{14}      & \cellbf{32}    & \cell{12.51}       & \cellbf{12}    & \cellbf{35} & \cell{15.53}        & \cellbf{12}   & \cellbf{34}    & \cell{38.73}          & \cellbf{14}     & \cellbf{44} & \cell{150.03}\\
\texttt{dFISTA[gpu]}     & \cell{14}      & \cellbf{32}    & \cellbf{11.18}    & \cellbf{12}    & \cellbf{35} & \cellbf{11.18}     & \cellbf{12}    & \cellbf{34}   & \cellbf{33.45}       & \cellbf{14}     & \cellbf{44} & \cell{121.37}\\
\texttt{FastAS}             & \cell{-}      & \cell{-}          & \cell{-}               & \cell{-}         & \cell{-}       & \cell{-}                & \cell{-}         & \cell{-}          & \cell{-}                 & \cell{-}        & \cell{-}    & \cell{-}\\
\texttt{BCDC}              & \cellbf{13}    & \cell{48}       & \cell{1839.17}   & \cell{15}      & \cell{50}     & \cell{4806.62}    & \cell{-}         & \cell{-}         & \cell{-}                  & \cell{-}       & \cell{-}    & \cell{-}\\
\texttt{MatQUIC}         & \cell{30}       & \cell{88}       & \cell{573.87}     & \cell{36}      & \cell{95}     & \cell{1255.13}    & \cell{36}      & \cell{95}      & \cell{4260.97}      & \cell{-}       & \cell{-}     & \cell{-}\\
\texttt{ProxGrad1}         & \cell{4345}   & \cell{4345}   & \cell{224.95}     & \cell{6640}  & \cell{6640} & \cell{532.77}      & \cell{9225}  & \cell{9225}  & \cell{1797.49}      & \cell{-}       & \cell{-}    & \cell{-}
\\ \bottomrule
\end{tabular}}
\end{center}
\end{footnotesize}
\vskip -0.15in
\end{table*}

If the parameter $\rho$ is relatively large (i.e., the solution is expected to be quite sparse), \texttt{FastAS}, \texttt{BCDC} and \texttt{MatQUIC} perform well and converge in a reasonable time. This is expected since all three approaches vastly rely on the sparsity of the solution: the sparser the solution is, the faster their computations are performed, as restricted on the active set of variables. However, when $\rho$ is small, the performance of these methods significantly degrade due to the increased number of active (non-zero) entries.

Aside from the above, \texttt{ProxGrad1} performs well in terms of computational time, as compared to the rest of the methods. Unfortunately, the number of Cholesky decompositions in this method can become as many as the number of iterations, which indicates a computational bottleneck in high-dimensional problem cases. Moreover, when $\rho$ is small, this method also slows down and requires more iterations to converge. 

On the other hand, we also note that \texttt{pFISTA} is rather sensitive to the accuracy of the subsolver within the quadratic convergence region. In fact, while \texttt{pFISTA} reaches medium scale accuracies in a manner similar to \texttt{dFISTA}, it spends most of its iterations trying to achieve the higher accuracy values. 

\subsection{Proximal-gradient algorithm in action}
In this subsection, we illustrate the performance of proximal gradient algorithm in practice on various problems with different regularizers. 

\subsubsection{Linear convergence}\label{subsec:linear_convergence}
To show the  linear convergence of \texttt{ProxGrad1} (Algorithm \ref{alg:A1}) in practice, we consider the following numerical test.
Our experiment is based on the \texttt{Lymph} and \texttt{Estrogen} problems downloaded from \url{http://ima.umn.edu/~maxxa007/send_SICS/}. For both problem cases, we use different values for $\rho $ as $\rho = [0.1: 0.05: 0.6] $ in MATLAB notation.
For each configuration, we measure the quantity
\begin{equation}\label{eq:restricted_cond}
c^k_{\mathrm{res}} := \frac{\norm{(\mathbf{D}_k - \nabla^2f(\xb^{*}))\db^k_g}^{*}_{\xb^{*}}}{\Vert\db_g^k\Vert_{\xb^{*}}},
\end{equation}
for few last iterations. This quantity can be referred to as the restricted approximation gap of $D_k$ to $\nabla^2f(\xb^{*})$ along the proximal-gradient direction $\db^k_g$. 
We first run the proximal-Newton method up to $10^{-16}$ accuracy to obtain the solution $\xb^{*}$ and then run the proximal-gradient algorithm up to $10^{-8}$ accuracy to compute $c^k_{\mathrm{res}}$ and the norm $\Vert\xb^k - \xb^{*}\Vert_{\xb^{*}}$.
From the proof of Theorem \ref{th:convergence_of_grad_method}, we can show that if $\mathrm{c}_{\mathrm{res}}^k < 0.5$ for sufficiently large $k$, then the sequence $\set{\xb^k}_{k\geq 0}$ locally converges to $\xb^{*}$ at a linear rate.
We note that this condition is much weaker than the last condition given in Theorem \ref{th:convergence_of_grad_method} but more difficult to interpret. Note that the requirement in Theorem \ref{th:convergence_of_grad_method}  leads to a restriction on the condition number of $\nabla^2f(\xb^{*})$ to be less than $3$.
We perform this test on two problem instances with $11$ different values of the regularization parameter and then compute the median of $c^k_{\mathrm{res}}$ for each problem.
Figure \ref{fig:cond_number} shows the median of the restricted approximation gap $c^k_{\mathrm{res}}$ and the real condition number of $\nabla^2f(\xb^{*})$, respectively.
\begin{figure}[ht]
\vskip-0.05in
\begin{center}
\centerline{\subfigure[\texttt{Lymph} dataset ($p = 578$)]{\includegraphics[width=0.24\linewidth]{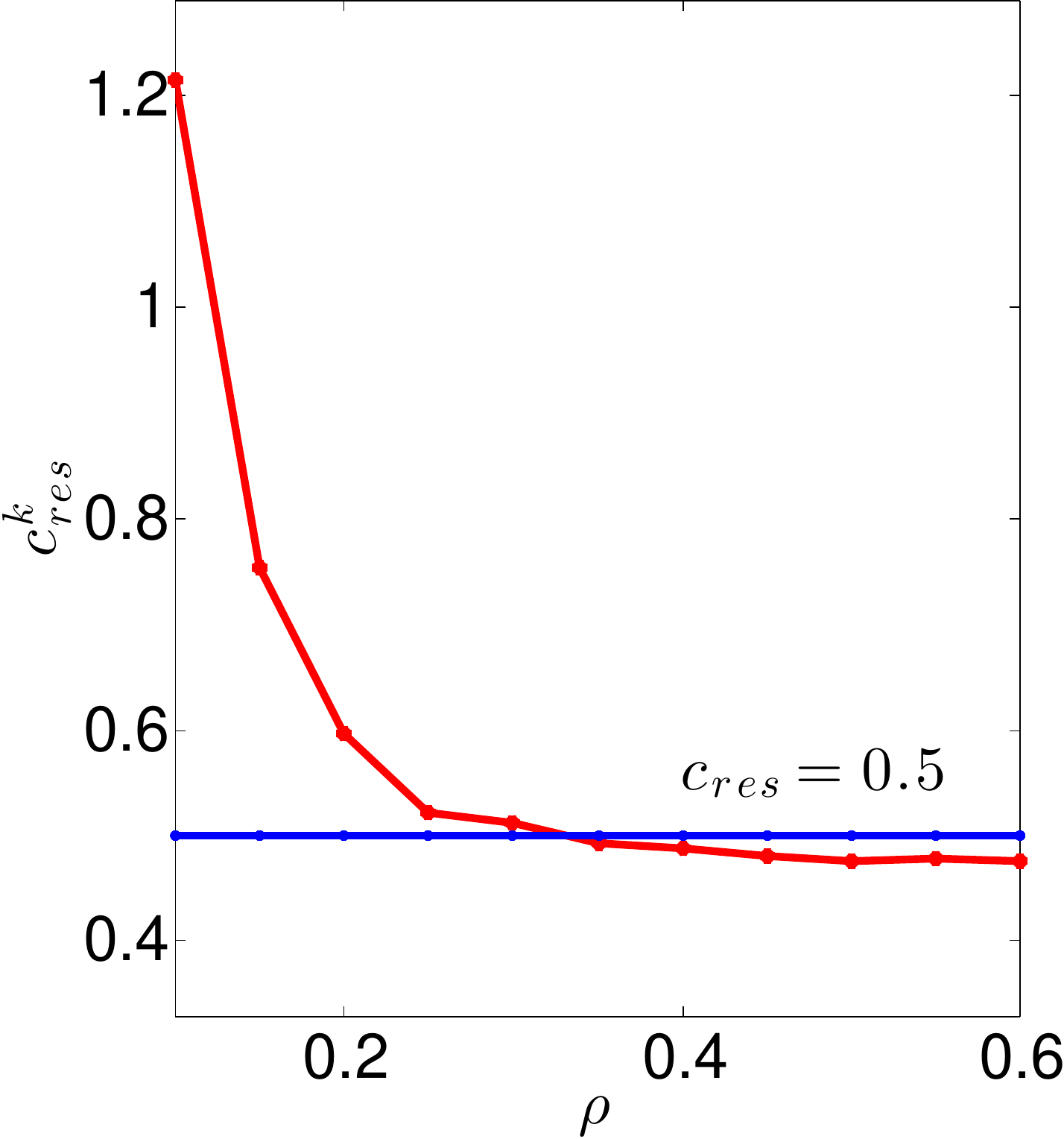} \includegraphics[width=0.245\linewidth]{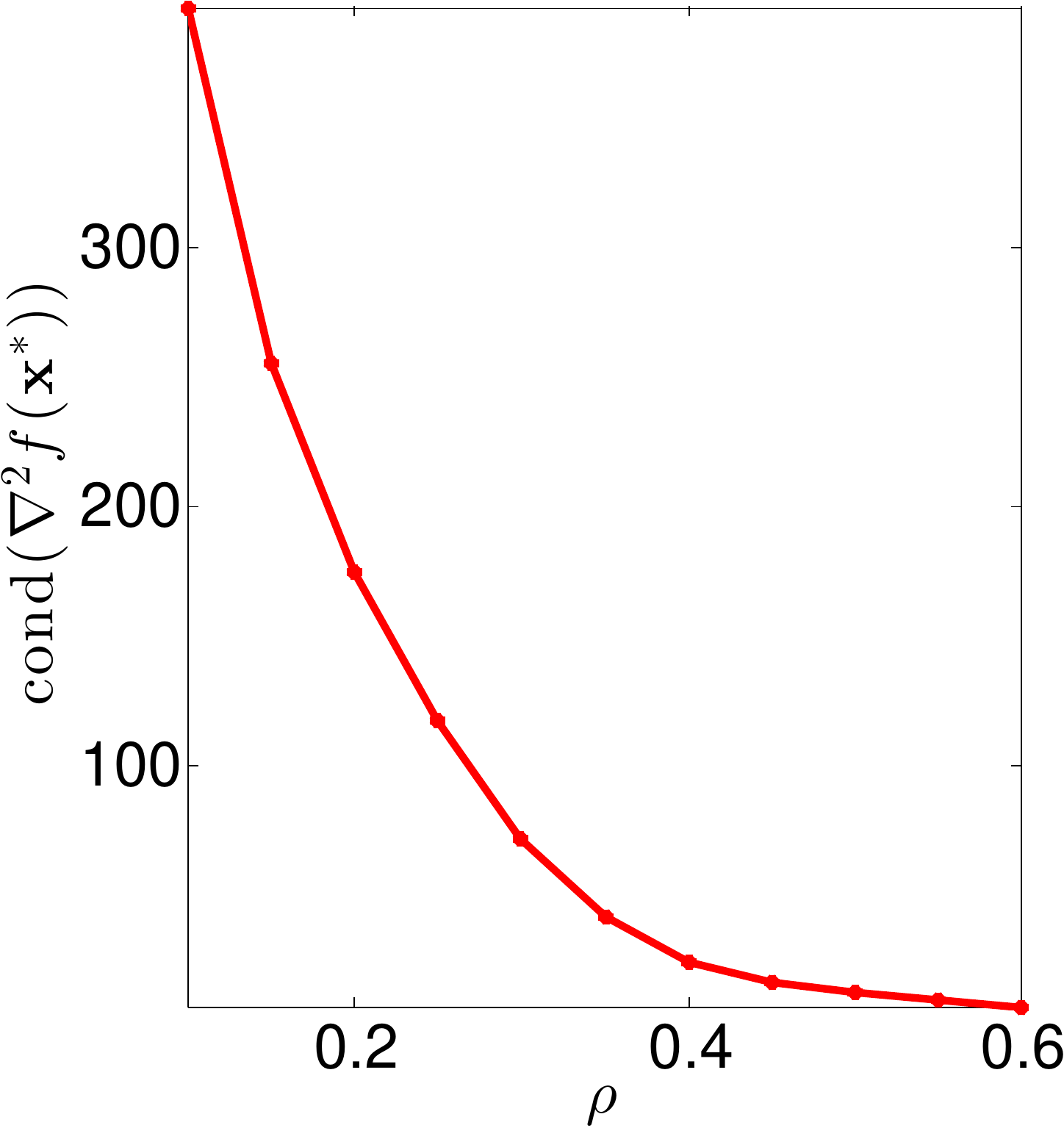}} \subfigure[\texttt{Estrogen} dataset ($p = 692$)]{\includegraphics[width=0.245\linewidth]{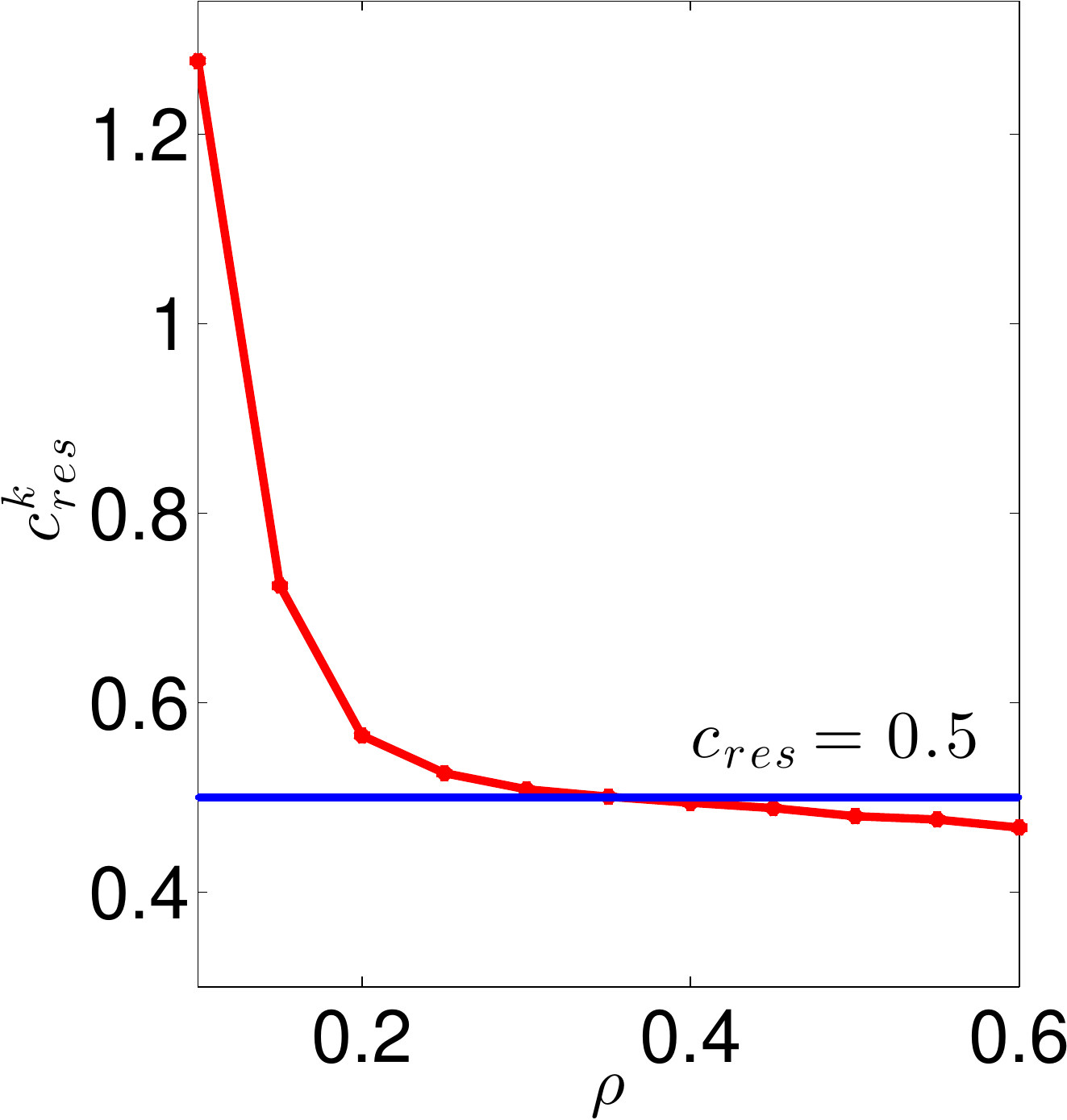}\includegraphics[width=0.25\linewidth]{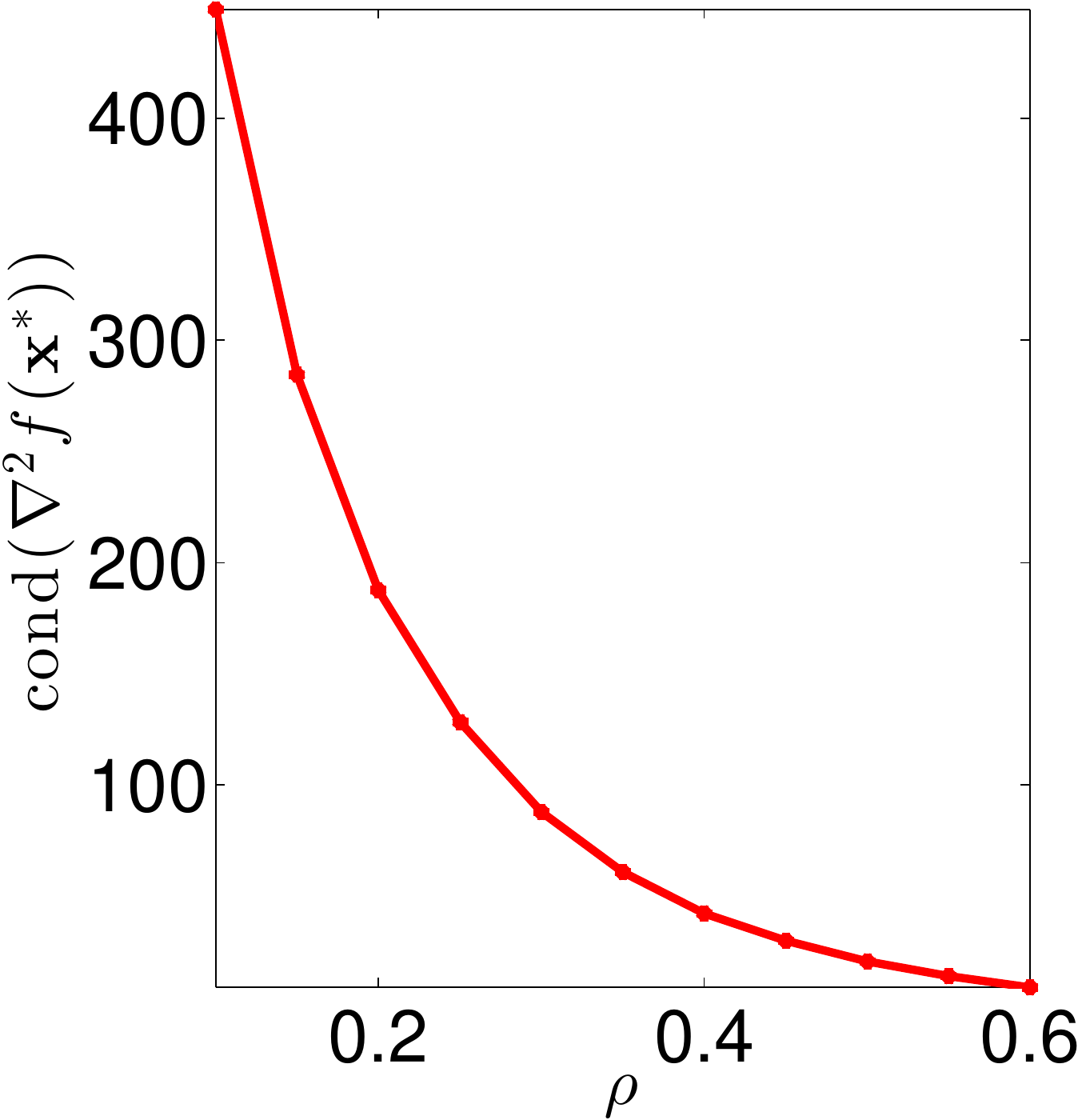}}}
\vskip -0.1in
\caption{For each test case: \textbf{(Left)} Restricted approximation gap $c^k_{\mathrm{res}}$ ~\textbf{(Right)} The actual condition number of $\nabla^2 f(\mathbf{x}^{\ast})$.} \label{fig:cond_number}
\end{center}
\vskip -0.3in
\end{figure} 

As expected, we observe that the real condition number of $\nabla^2f(\xb^{*})$ increases as the regularization parameter decreases. 
Moreover, the last condition given in Theorem \ref{th:convergence_of_grad_method} does not hold in this example. 
However, if we look at the restricted condition number computed by \eqref{eq:restricted_cond}, we can observe that for $\rho \gtrsim 0.3$, this value is strictly smaller than $0.5$.
In this case, the local linear convergence is actually observed in practice.

While $\mathrm{c}_{\mathrm{res}}^k < 0.5$ is only a sufficient condition and can  possibly be improved, we find it to be a good indicator of the convergence behavior.  
Figure \ref{fig:linear_conv} shows the last 100 iterations of our gradient method for the  \texttt{Lymph} problem with $\rho = 0.15$ and $\rho = 0.55$. 
The number of iterations needed to achieve the final solution in these cases is $1525$ and $140$, respectively.
In the former case, the calculated restricted condition number is above $0.5$ and the final convergence rate suffers. For instance, the contraction factor $\kappa$ in the estimate $\norm{\xb^{k+1} - \xb^{*}}_{\xb^{*}} \leq \kappa \norm{\xb^k - \xb^{*}}_{\xb^{*}}$ is close to $1$ when $\rho = 0.15$, while it is smaller when $\rho = 0.55$.
We can observe from Figure \ref{fig:linear_conv} (left) that the error $\Vert\xb^k - \xb^{*}\Vert_{\xb^{*}}$ drops rapidly at the last few iterations due to the affect of the bisection procedure, where we check the condition \eqref{eq:L_condition} for $\lambda_k < 1$.
\begin{figure}[ht]
\vskip-0.05in
\begin{center}
\centerline{\includegraphics[width=15.2cm, height=4.7cm]{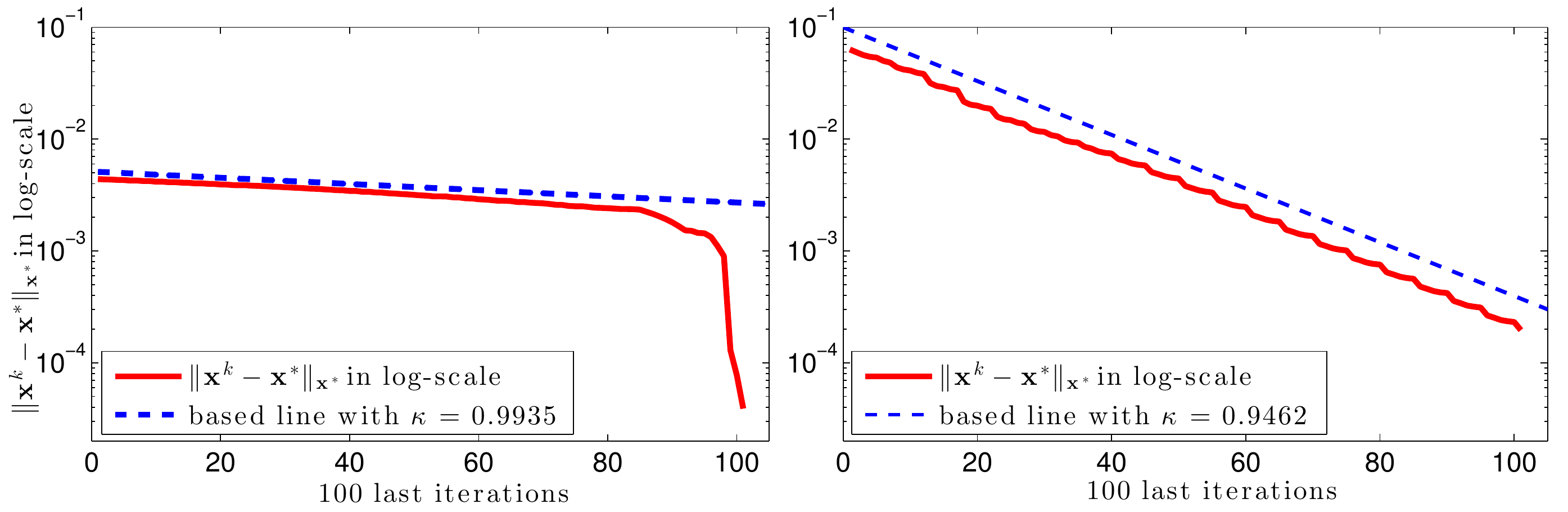}}
\vskip -0.1in
\caption{Linear convergence of \texttt{ProxGrad1} for \texttt{Lymph}: \textbf{Left}: $\rho = 0.15$ and \textbf{Right}: $\rho = 0.55$.}\label{fig:linear_conv}
\end{center}
\vskip -0.3in
\end{figure} 

\subsubsection{$\mathrm{TV}_{\ell_1}$-regularizer}
In this experiment, we consider the Poisson intensity reconstruction problem, where the regularizer $g$, the $\mathrm{TV}_{\ell_1}$-norm which is called the \textit{anisotropic}-TV; as an example, cf. \citep{Beck2009a}.
Hence, we implement Algorithm \ref{alg:A1_Poisson} (\texttt{ProxGrad2}) to solve \eqref{eq:Poisson_prob}, improve it using the greedy step-size modification as described in Section \ref{subsec:prox_gradient_method} (\texttt{ProxGrad2g}), and compare its performance with the state-of-the-art Sparse Poisson Intensity Reconstruction Algorithms (SPIRAL-TAP) toolbox \citep{Harmany2012}. 

As a termination criterion,  we have $\Vert\db^k_g\Vert_2 \leq 10^{-5}\max\set{1, \Vert\xb^k\Vert_2}$ or when the objective value does not significantly change after $5$ successive iterations, i.e., for each $k$, $\abs{f(\xb^{k+j}) - f(\xb^k)} \leq 10^{-8}\max\set{1, \abs{f(\xb^k)}}$ for $j=1,\dots, 5$.  

We first illustrate the convergence behavior of the three algorithms under comparison. 
We consider two image test cases: \texttt{house} and \texttt{cameraman}, and we set the regularization parameter of the $\mathrm{TV}_{\ell_1}$-norm to $\rho = 2.5\times 10^{-5}$.
Figure \ref{fig:convergence} illustrate the convergence of the algorithms both in iteration count and the timing.
\begin{figure}[ht]
\vskip-0.05in
\begin{center}
\centerline{\includegraphics[width=0.5\linewidth]{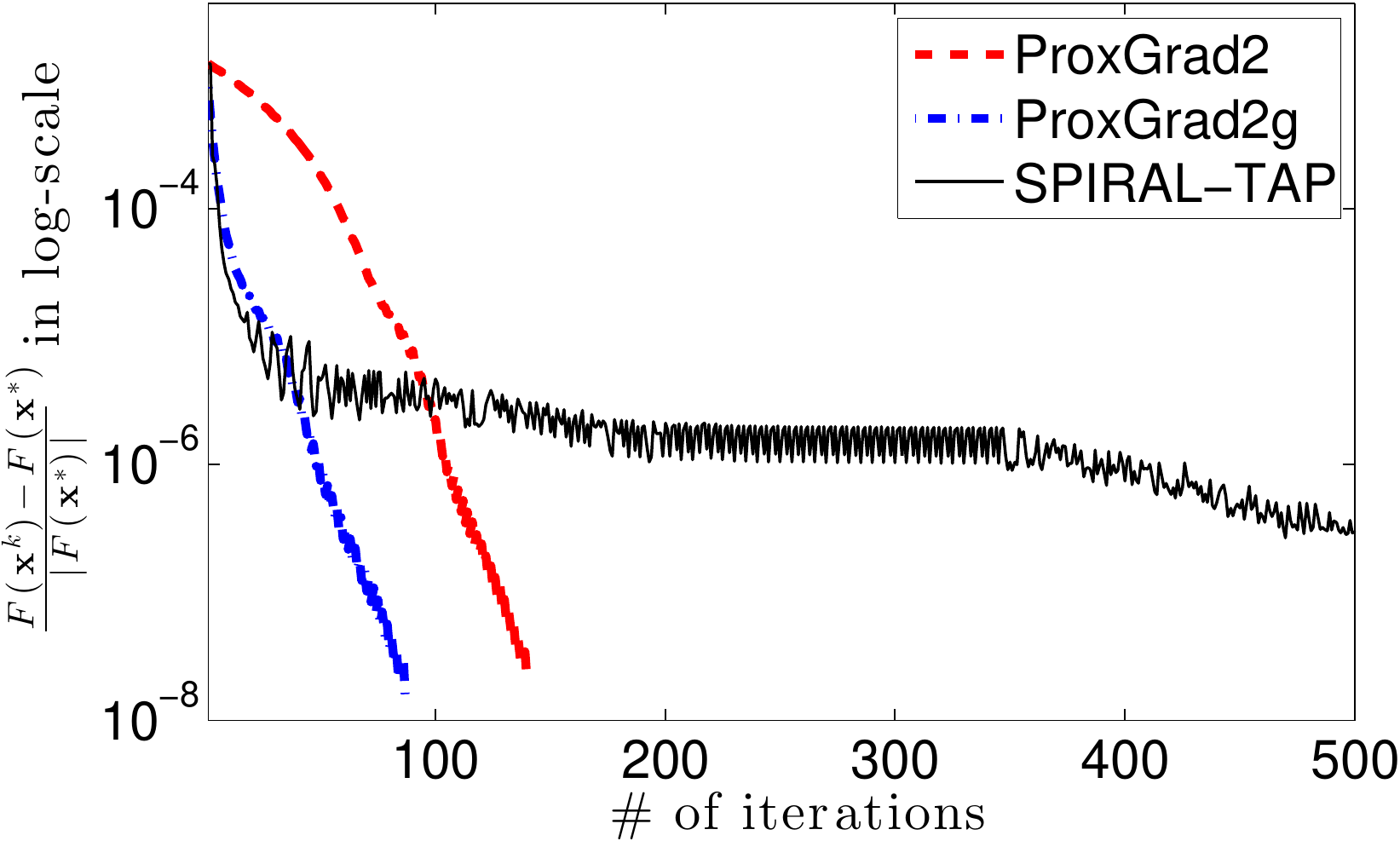} \includegraphics[width=0.49\linewidth]{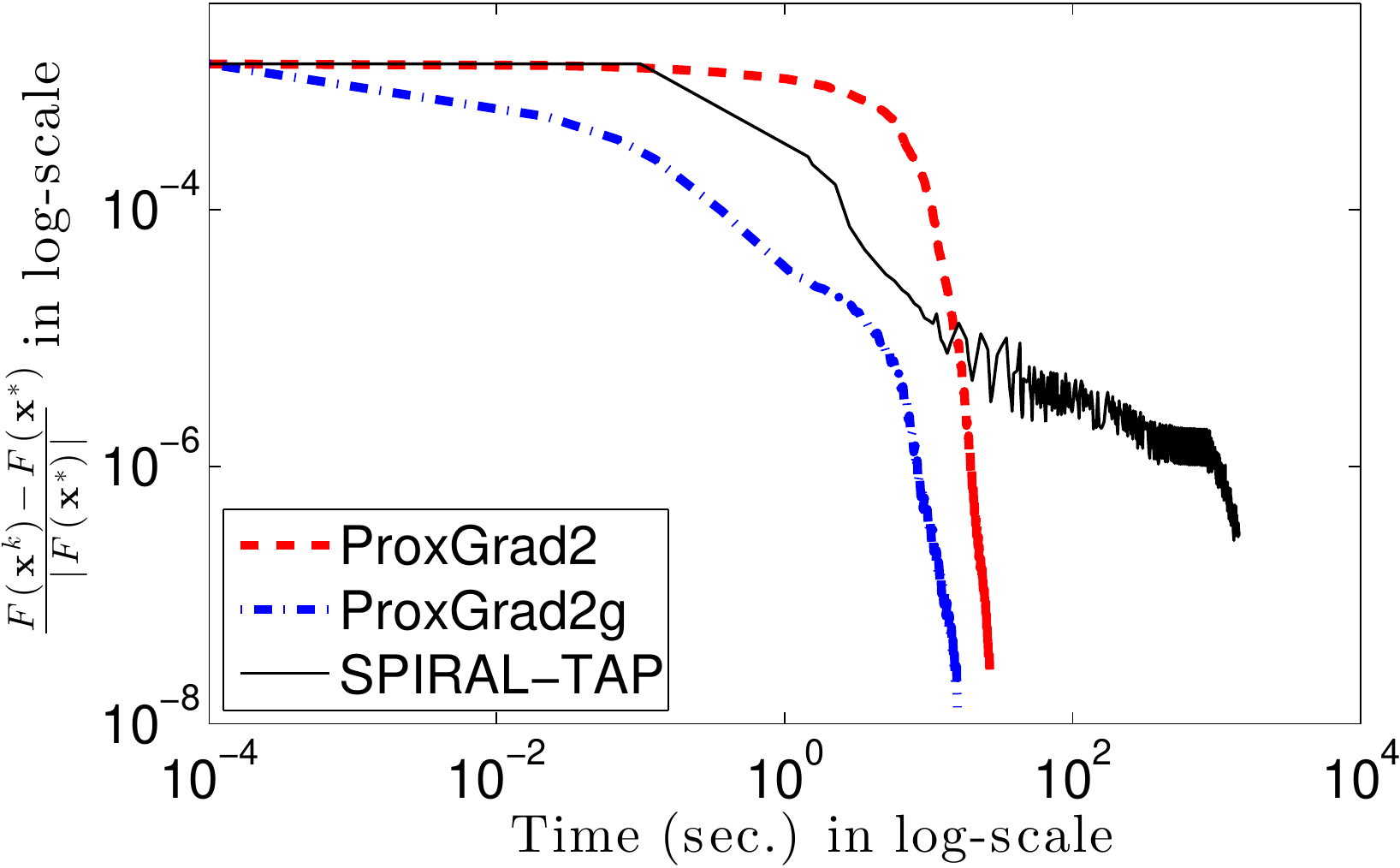}}
\vskip0.1in
\centerline{\includegraphics[width=0.5\linewidth]{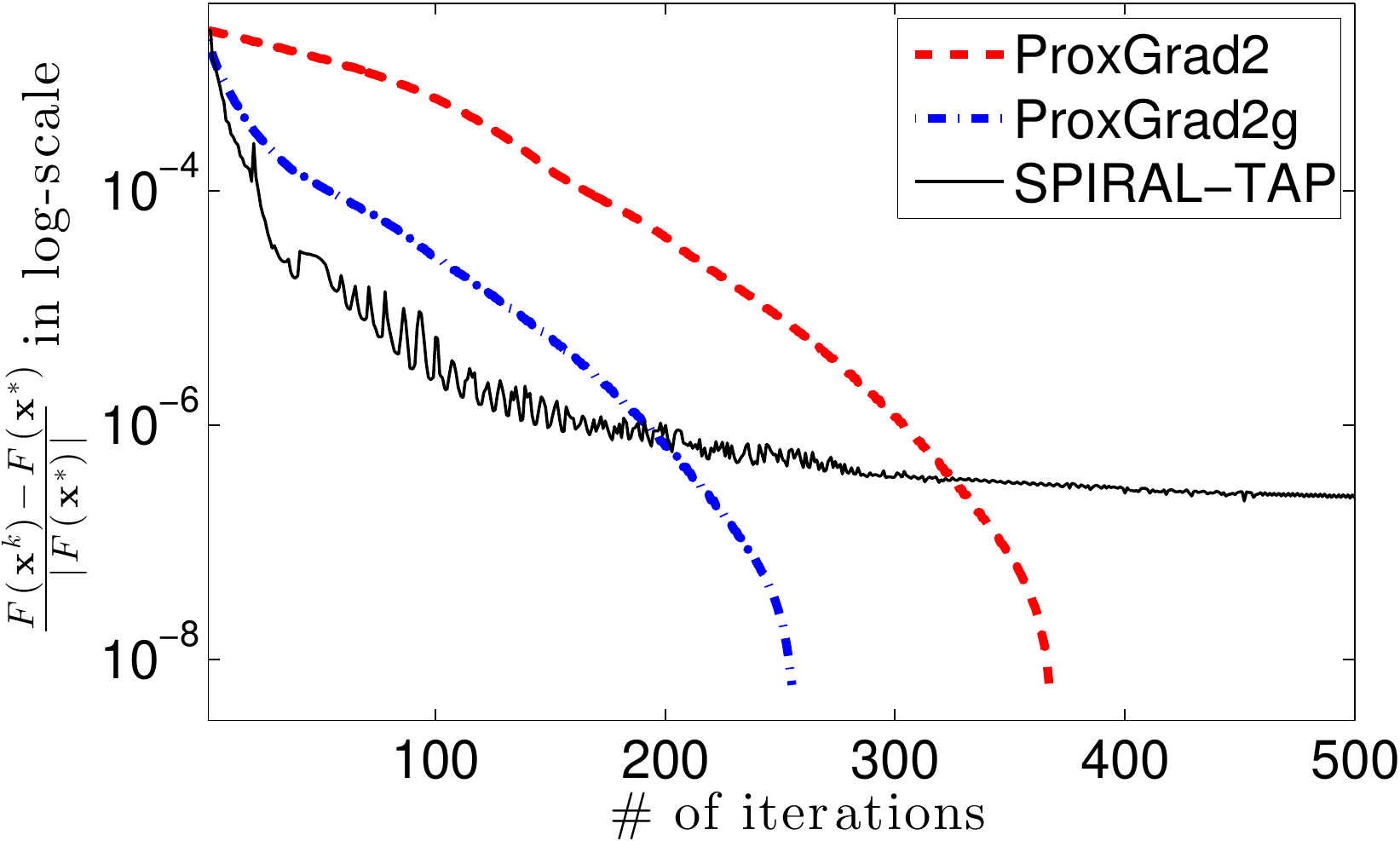}\includegraphics[width=0.49\linewidth]{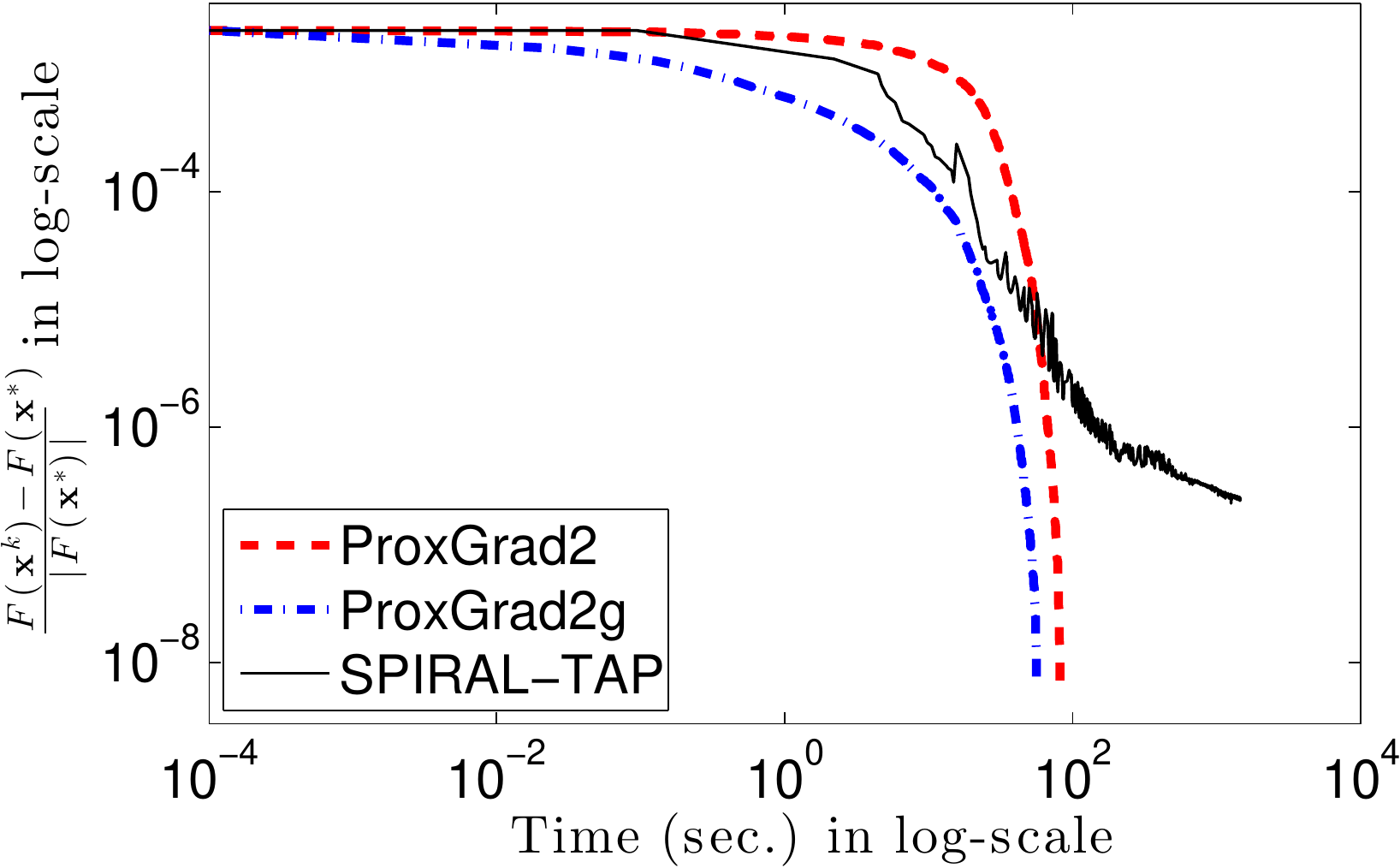}}
\vskip -0.1in
\caption{Convergence of three algorithms for \texttt{house} (top) and \texttt{cameraman} (bottom). \textbf{Left}: in iteration scale ~\textbf{Right}: in time log-scale.}\label{fig:convergence}
\end{center}
\vskip -0.3in
\end{figure} 

Overall, \texttt{ProxGrad2g} exhibits the best convergence behavior in terms of iterations and time. Due to the inaccurate solutions of the subproblem \eqref{eq:cvx_subprob_poisson},  the methods might exhibit oscillations.
Since SPIRAL-TAP employs a Barzilai-Borwein step-size and performs a line-search procedure up to very small step-size, the objective value is not sufficiently decreased; as a result of this, we observe more oscillations in the objective value. 

In stark contrast, \texttt{ProxGrad2} and \texttt{ProxGrad2g} use the Barzilai-Borwein step-size as an initial-guess for computing a search direction and then use the step-size correction procedure to ensure that the objective function decreases a certain amount at each iteration. This strategy turns out to be more effective since milder oscillations in the objective values are observed in practice (which are due to the inaccuracy of the TV-proximal operator).
\begin{figure}[!h]
\vskip-0.05in
\begin{center}
\centerline{\includegraphics[width=15.5cm, height=4.2cm]{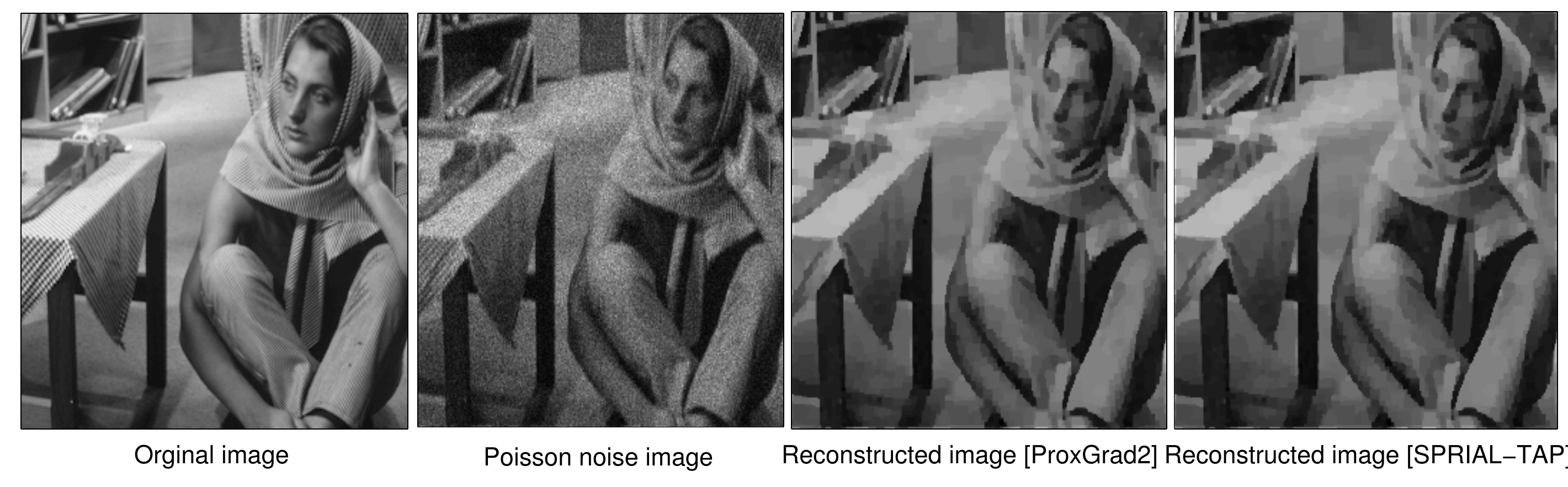}}
\vskip -0.1in
\caption{The reconstructed images for \texttt{barbara}  ($\rho = 2.5\times 10^{-5}$)}\label{fig:barbara_25}
\end{center}
\vskip -0.3in
\end{figure} 

Finally, we test the performance of \texttt{ProxGrad2}, \texttt{ProxGrad2g} and \texttt{SPIRAL-TAP} on $4$ different image cases: \texttt{barbara, cameraman, house} and \texttt{lena}. We set $\rho$ to two different values: $\rho \in \lbrace 10^{-5}, ~2.5 \cdot 10^{-5} \rbrace$. These values are chosen in order to obtain the best visual reconstructions (e.g., see Figure \ref{fig:barbara_25}) and are previously used in  \citep{Harmany2012}. The summary results reported in Table \ref{tb:real_data_frst_poisson}. 
Here, \textrm{AC} denotes the multiplicative factor in time acceleration of \texttt{ProxGrad2} as compared to \texttt{SPIRAL-TAP}, and $\Delta F$ is the difference between the corresponding obtained objective values between \texttt{ProxGrad2} and \texttt{SPIRAL-TAP} (a positive $\Delta F$ means that \texttt{SPIRAL-TAP} obtains a higher objective value at termination).
\begin{table*}[!h]
\begin{scriptsize}
\caption{\textsc{The results and performance of three algorithms}} \label{tb:real_data_frst_poisson}
\vskip0.15cm
\newcommand{\cell}[1]{{\!\!}#1{\!}}
\newcommand{\cellbf}[1]{{\!\!}{\color{blue}#1}{\!}}
\newcommand{\cellr}[1]{{\!\!}{\color{red}#1}{\!}}
\begin{center}
\scriptsize{\begin{tabular}{c|r|rrr|rrrrr|rrr} \toprule
& \multicolumn{11}{c}{\texttt{ProxGrad2g / ProxGrad2 / SPIRAL-TAP} } \\ \cmidrule{2-13}
\textsc{Image} &\multicolumn{1}{|c|}{$\rho\times10^{-5}$} & \multicolumn{3}{|c|}{\cell{\#iteration}} & \multicolumn{3}{|c}{\cell{CPU time [s]}} & \multicolumn{2}{c|}{\cell{AC}} 
& \multicolumn{1}{|c}{\cell{$F^k_{\min}$}} & \multicolumn{2}{c}{\cell{$\Delta F$}}\\ \cmidrule{1-13}
\texttt{house} 			& \cell{1.0} & \cellbf{116} & \cell{256} & \cell{500} &\cellbf{27.45} & \cell{56.95} & \cell{1658.00} & \cell{60} & \cell{29} & \cell{-10718352.93} & \cell{ 0.31} & \cell{ 0.70} \\ 
\texttt{$(256\times 256)$} 	& \cell{2.5} & \cellbf{92} & \cell{244} & \cell{500} &\cellbf{18.18} & \cell{50.26} & \cell{1431.94} & \cell{79} & \cell{28} & \cell{-10711758.80} & \cell{ 3.20} & \cell{ 3.32} \\  \midrule
\texttt{barbara}   		& \cell{1.0} & \cellbf{200} & \cell{324} & \cell{500} &\cellbf{46.92} & \cell{77.77} & \cell{1204.36} & \cell{26} & \cell{15} & \cell{-7388497.47} & \cell{ 0.05} & \cell{ 0.30} \\ 
\texttt{$(256\times 256)$} 	& \cell{2.5} & \cellbf{164} & \cell{268} & \cell{500} &\cellbf{36.45} & \cell{67.98} & \cell{1620.95} & \cell{44} & \cell{24} & \cell{-7377424.50} & \cell{ 1.90} & \cell{ 2.02} \\  \midrule
\texttt{cameraman}	 	& \cell{1.0} & \cellbf{396} & \cell{516} & \cell{500} &\cellbf{99.56} & \cell{117.75} & \cell{389.79} & \cell{ 4} & \cell{ 3} & \cell{-9186631.65} & \cell{0.19} & \cell{ 0.07} \\ 
\texttt{$(256\times 256)$} 	& \cell{2.5} & \cellbf{256} & \cell{368} & \cell{500} &\cellbf{59.75} & \cell{85.25} & \cell{1460.62} & \cell{24} & \cell{17} & \cell{-9175307.33} & \cell{ 2.29} & \cell{ 2.31} \\  \midrule
\texttt{lena} 			& \cell{1.0} & \cellbf{152} & \cell{220} & \cell{500} &\cellbf{27.43} & \cell{41.31} & \cell{1212.69} & \cell{44} & \cell{29} & \cell{-5797053.79} & \cell{ 0.10} & \cell{ 0.10} \\ 
\texttt{$(204\times 204)$} 	& \cell{2.5} & \cell{304} & \cellbf{184} & \cell{500} &\cell{59.20} & \cellbf{36.77} & \cell{1132.04} & \cell{19} & \cell{31} & \cell{-5789554.53} & \cell{ 1.52} & \cell{ 1.25} 
\\ \bottomrule
\end{tabular}}
\end{center}
\end{scriptsize}
\vskip -0.15in
\end{table*}

From Table \ref{tb:real_data_frst_poisson} we observe that both \texttt{ProxGrad2} and \texttt{ProxGrad2g} are  superior to \texttt{SPIRAL-TAP}, both in terms of CPU time and the final objective value in majority of problems. As the table shows, 
\texttt{ProxGrad2g} can be $4$ to $79$ times faster than \texttt{SPIRAL-TAP}. Moreover, it reports a better objective values in all cases.

\subsubsection{A comparison to standard gradient methods based on $\mathcal{F}_L$ assumption}
In this subsection, we use the LASSO problem \eqref{eq:unvarLASSO} with unknown variance as a simple test case to illustrate the improvements over the ``standard'' methods. 
Note that the standard Lipschitz gradient assumption  no longer holds in this example due to the log-term $\log(\sigma)$.
For this comparison, we dub our algorithm as \texttt{ProxGrad3(g)} and compare it against  a state-of-the-art TFOCS software package \citep{Becker2011}. The input data is synthetically generated based on the linear model $\yb = \mathbf{X}\boldsymbol{\beta}^{*} + \mathbf{s}$, where $\boldsymbol{\beta}$ is the true sparse parameter vector; 
$\mathbf{X}$ is a Gaussian  $n\times p$  matrix and $\mathbf{s} \sim \mathcal{N}(0,\sigma^2)$, where $\sigma = 0.01$. In TFOCS, we configure the Nesterov's accelerated algorithm with two proximal operations (TFOCS-N07) and adaptive restart as well as the standard gradient method (TFOCS-GRA). Both options use a backtracking step-size selection procedure due to the presence of the logarithmic term in the objective. 

As we can see in Figure \ref{fig:convergence} and Table \ref{tbl:scheds_5probs} that \texttt{ProxGrad3g} performs the best and manages to converge to a high accuracy solution at a linear rate in both examples. Interestingly, we find the per iteration complexity of \texttt{ProxGrad3g} is similar to \texttt{ProxGrad3} and TFOCS-GRA. In terms of per iteration cost, TFOCS-N07 is the most expensive one as it uses dual prox operations and adaptive restart, and requires more backtracking operations. Hence, while it takes less iterations as compared to the TFOCS-GRA, it performs worse in terms of timing. For illustration purposes, we ran the algorithms to high accuracy. However, if a typical stopping criteria such as $10^{-6}$ is used, our algorithm \texttt{ProxGrad3g} obtains $\times 3$ to $\times 8$ speed-ups over the standard gradient algorithm with backtracking enhancements. 
\begin{figure}[ht]
\vskip-0.05in
\begin{center}
\centerline{\includegraphics[width=0.33\linewidth]{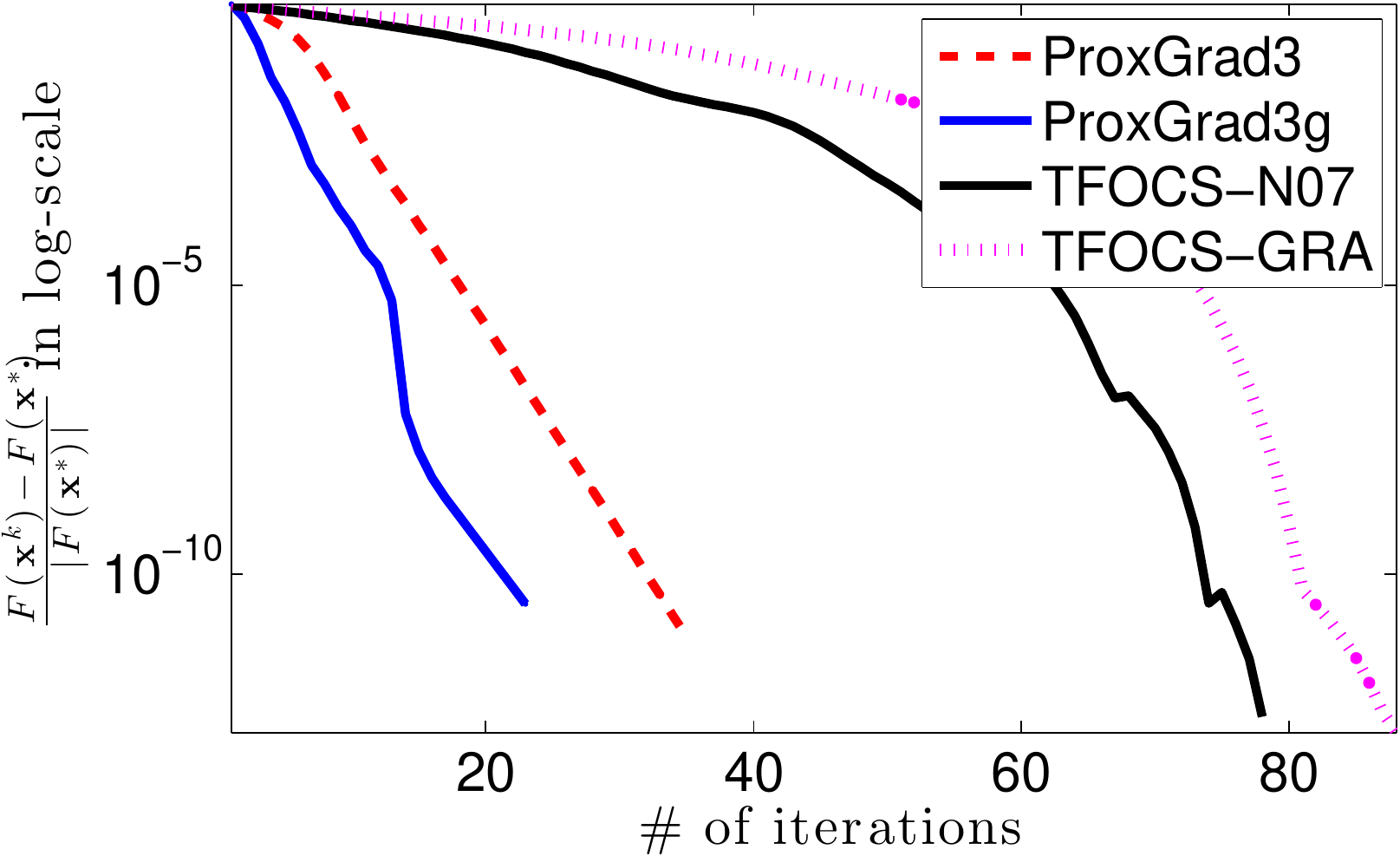} \includegraphics[width=0.33\linewidth]{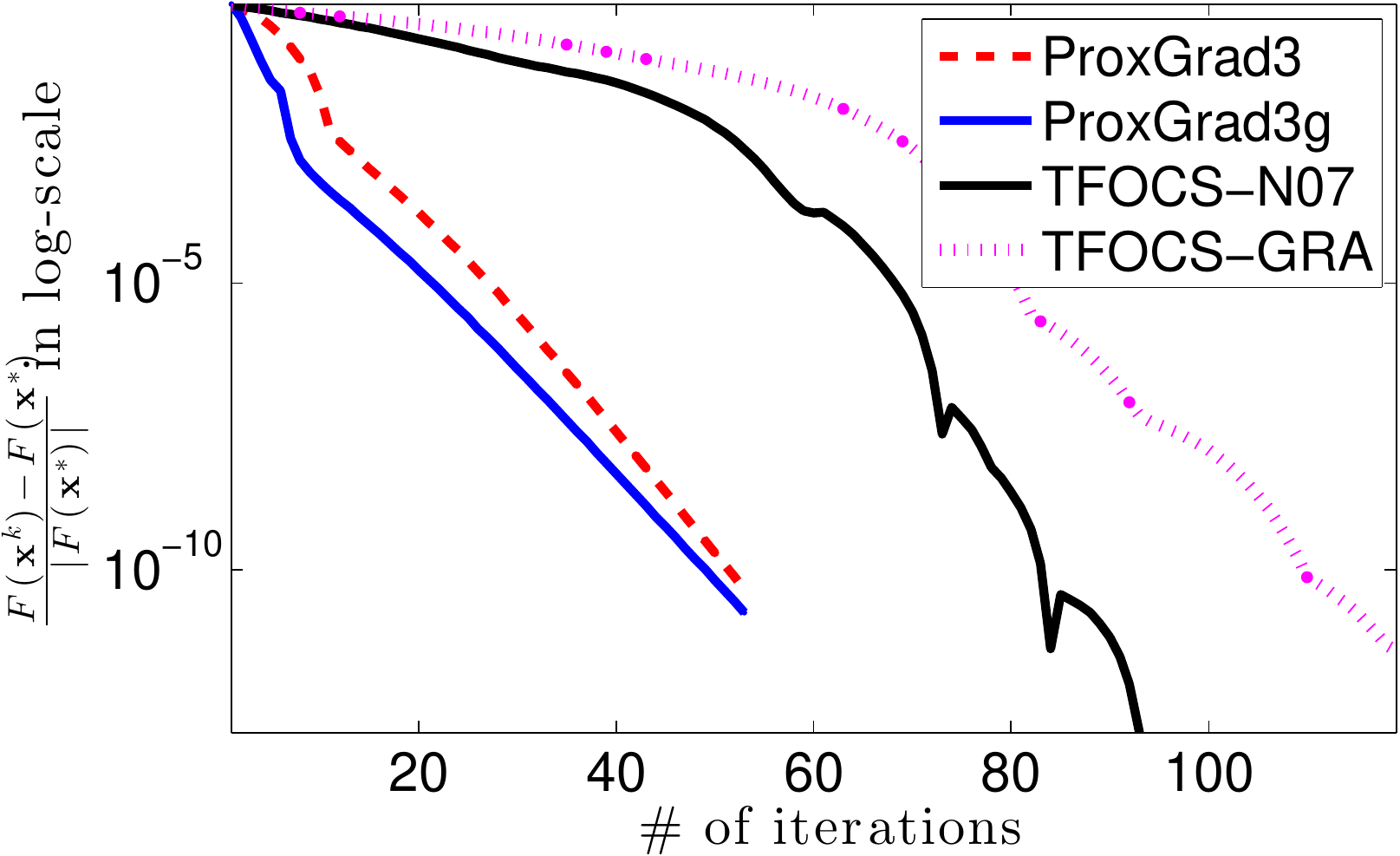} \includegraphics[width=0.33\linewidth]{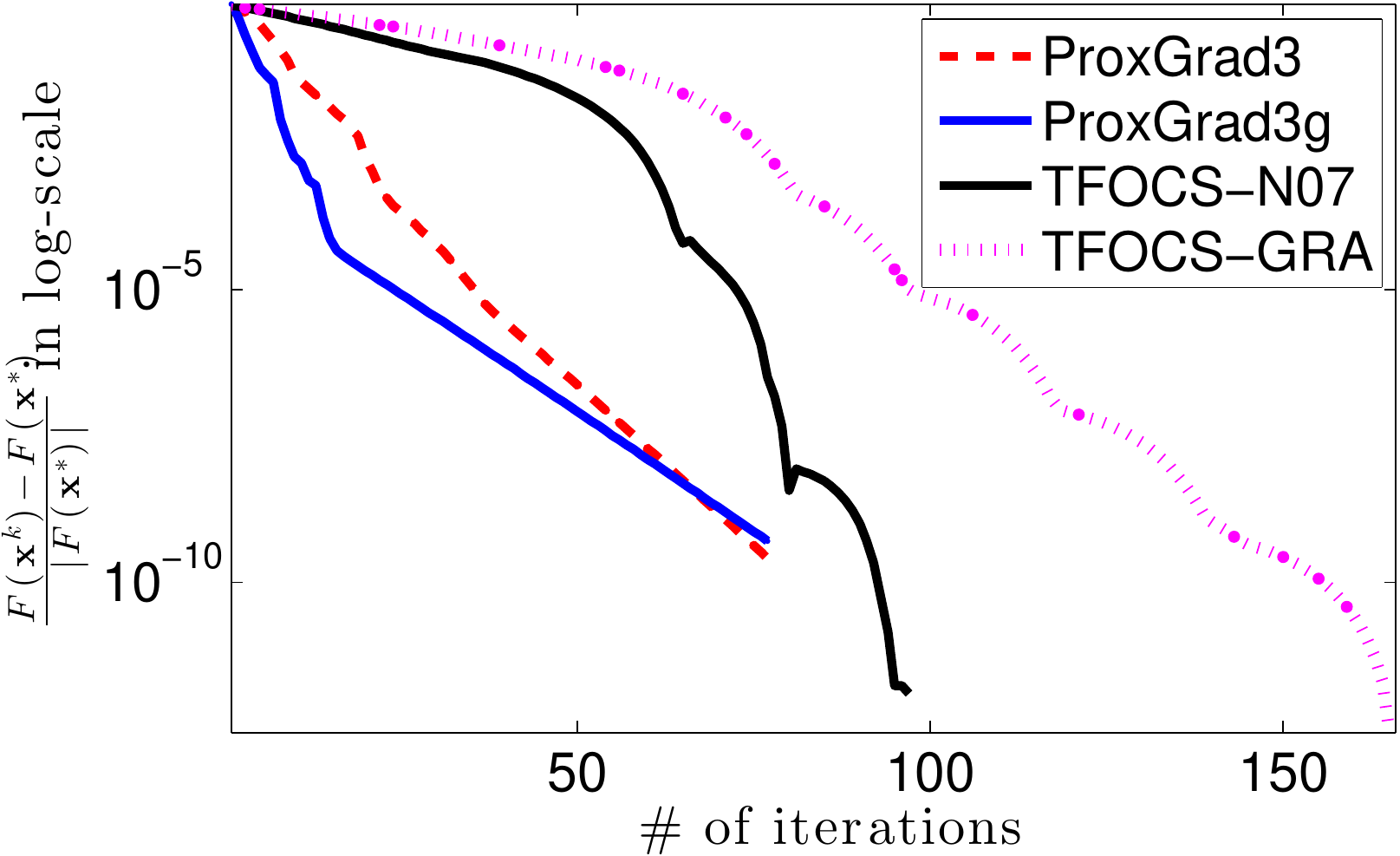}}
\vskip0.1in
\caption{Convergence plots of algorithms under comparison for $n = 3000$ and $p = 10000 $. From left to right, $\rho = 10^{-3}, \frac{2}{3}\cdot 10^{-4}, 5\cdot10^{-4}$.}\label{fig:convergence}
\end{center}
\vskip -0.3in
\end{figure} 

\begin{figure}[ht]
\vskip-0.05in
\begin{center}
\centerline{\includegraphics[width=0.33\linewidth]{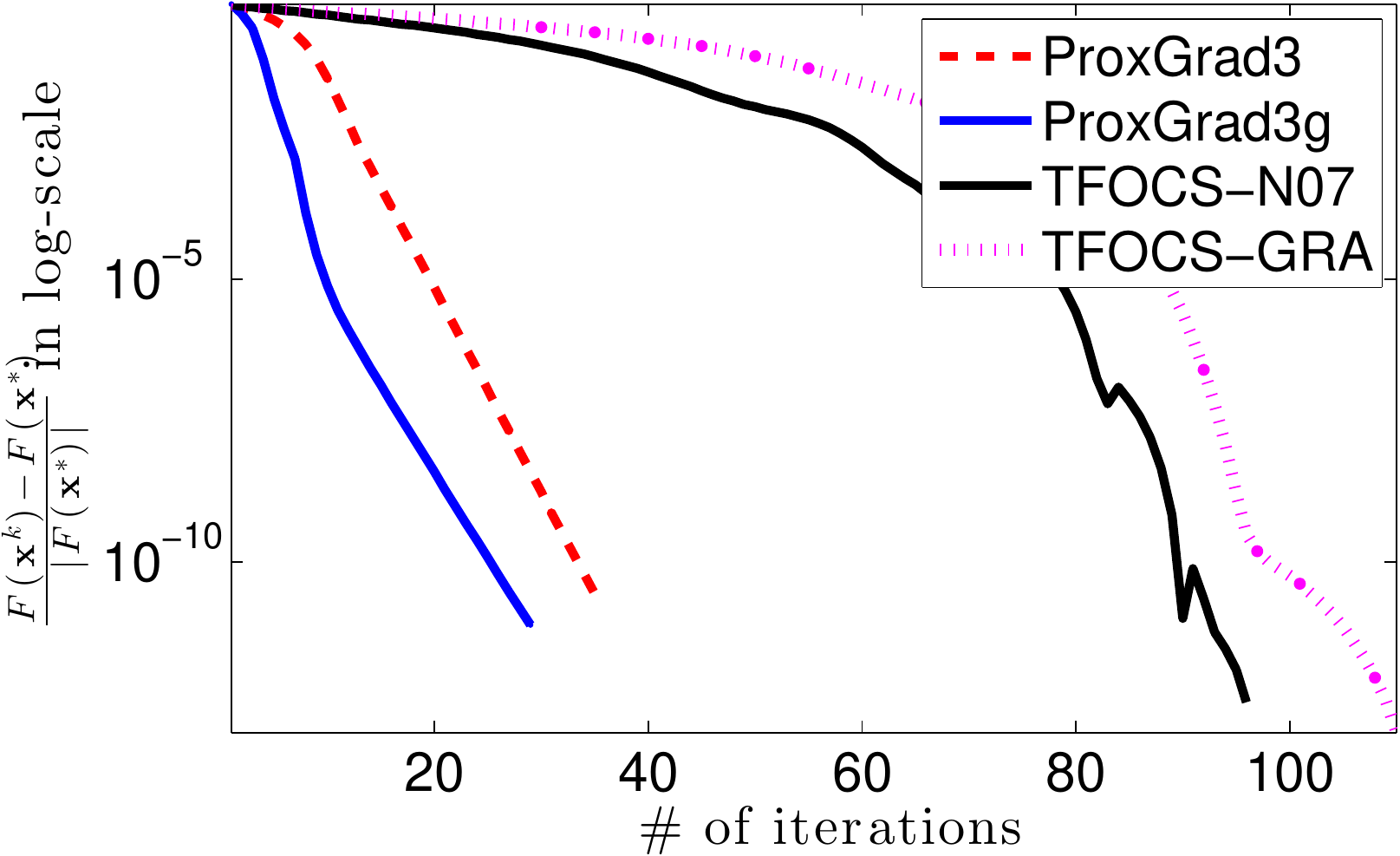} \includegraphics[width=0.33\linewidth]{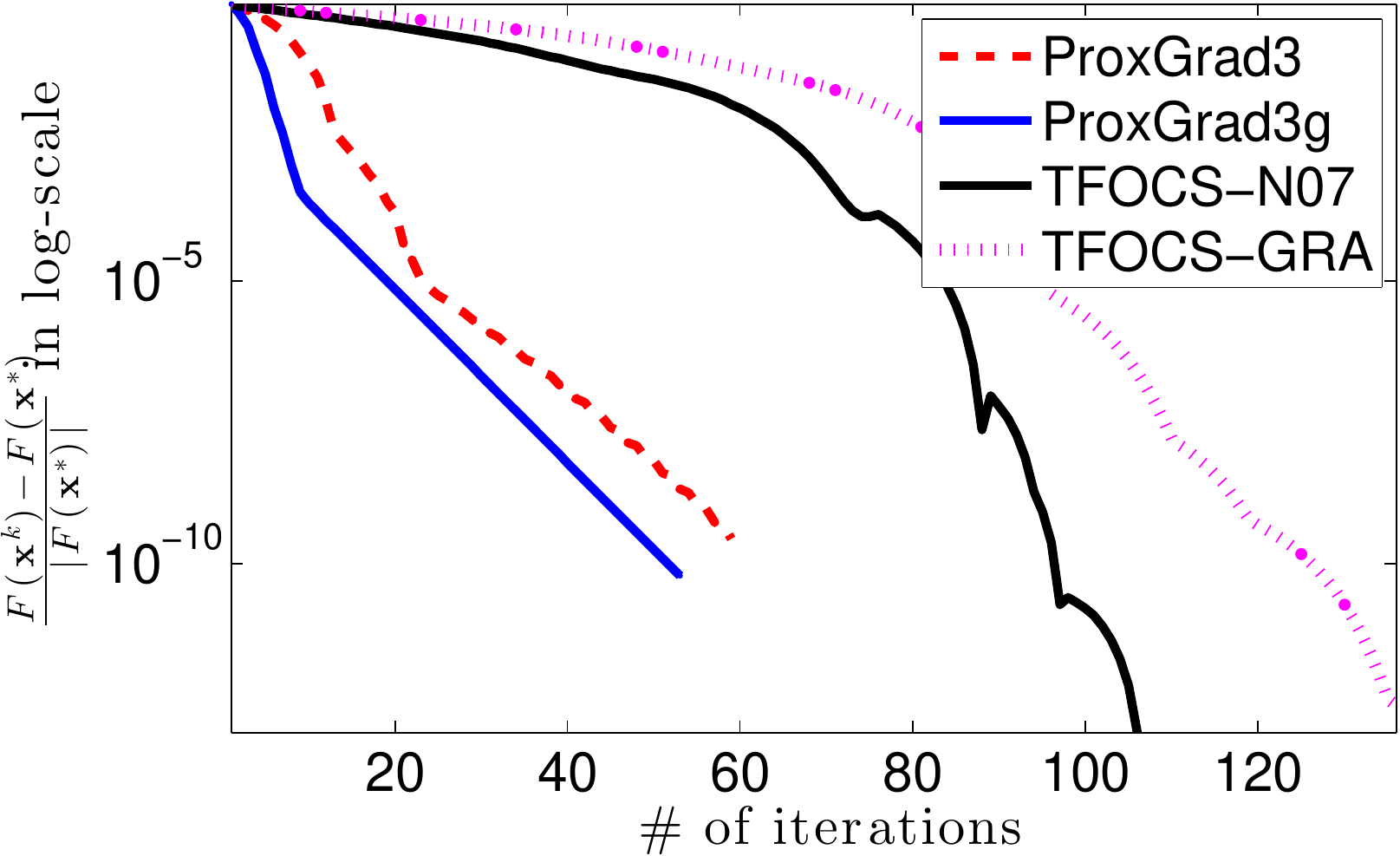} \includegraphics[width=0.33\linewidth]{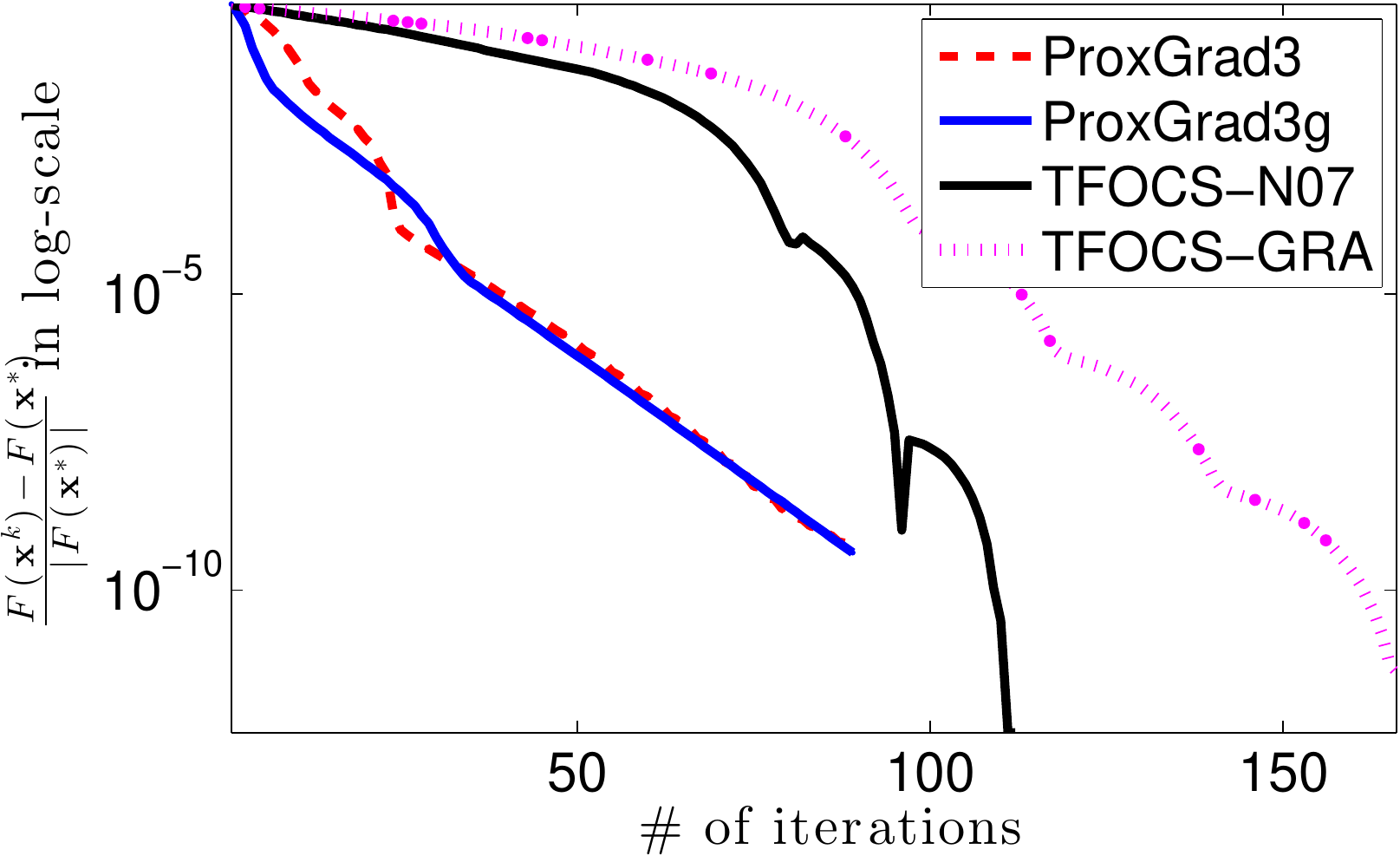}}
\vskip -0.1in
\caption{Convergence plots of algorithms under comparison for $n = 15000$ and $p = 50000$. From left to right, $\rho = 2\cdot 10^{-4}, \frac{4}{3}\cdot 10^{-4}, 10^{-4}$.}\label{fig:convergence}
\end{center}
\vskip -0.3in
\end{figure} 

\begin{table*}[!tp]
\begin{footnotesize}
\caption{\textsc{Metadata on the Lasso problem with unknown variance}} \label{tbl:scheds_5probs}
\vskip0.15cm
\newcommand{\cell}[1]{{\!\!}#1{\!}}
\newcommand{\cellbf}[1]{{\!\!}{\color{blue}#1}{\!}}
\newcommand{\cellbff}[1]{{\!\!}{\color{red}#1}{\!}}
\begin{center}
\scriptsize{\begin{tabular}{c|rrrr|rrrr|ccc	} \toprule
\textsc{Problem} & \multicolumn{11}{c}{ \texttt{ProxGrad3 / ProxGrad3g / TFOCS-N07 / TFOCS-GRA} } \\ \cmidrule{2-12}
($3000, 10000$) & \multicolumn{4}{|c|}{\cell{\#iteration}} & \multicolumn{4}{|c}{\cell{CPU time [s]}} & \multicolumn{1}{|c}{$\|\boldsymbol{\beta}\|_0$} & \multicolumn{1}{c}{$\|\widehat{\boldsymbol{\beta}}\|_0$} & \multicolumn{1}{c}{Overlap (\%)} \\ \cmidrule{1-12} 
$~\rho = 10^{-3}$ & \cell{36} & \cellbf{24} & \cell{79} & \cell{88} & \cell{1.0096} & \cellbf{0.7862} & \cell{3.2759} & \cell{1.7648} & \multirow{4}{*}{360} & \multicolumn{1}{|c}{166} & \cell{44.72} \\  
\cmidrule{1-9} \cmidrule{11-12} 
$~\rho = \frac{2}{3}\cdot 10^{-4}$ & \cellbf{54} & \cellbf{54} & \cell{94} & \cell{119} & \cell{1.2974} & \cellbf{1.2918} & \cell{3.6499} & \cell{2.4002} & & \multicolumn{1}{|c}{378} & \cell{92.22} \\ \cmidrule{1-9} \cmidrule{11-12} 
$~\rho =5\cdot 10^{-4}$ & \cellbf{78} & \cellbf{78} & \cell{97} & \cell{166} & \cellbf{1.7420} & \cell{1.7513} & \cell{3.7794} & \cell{3.3416} & & \multicolumn{1}{|c}{412} &  \cell{100} \\
\midrule \midrule
($15000, 50000$) & \multicolumn{4}{|c|}{\cell{\#iteration}} & \multicolumn{4}{|c}{\cell{CPU time [s]}} & \multicolumn{1}{|c}{$\|\boldsymbol{\beta}\|_0$} & \multicolumn{1}{c}{$\|\widehat{\boldsymbol{\beta}}\|_0$} & \multicolumn{1}{c}{Overlap (\%)}  \\ \cmidrule{1-12} 
$~\rho = 2\cdot 10^{-4}$ & \cell{36} & \cellbf{30} & \cell{99} & \cell{110} & \cell{21.7937} & \cellbf{19.3241} & \cell{82.3298} & \cell{46.0475} & \multirow{4}{*}{1800} & \multicolumn{1}{|c}{845} & \cell{44.98}  \\  
\cmidrule{1-9} \cmidrule{11-12} 
$~\rho = \frac{4}{3}\cdot 10^{-4}$ & \cell{60} & \cellbf{54} & \cell{108} & \cell{136} & \cell{31.7884} & \cellbf{29.1194} & \cell{89.4279} & \cell{57.9088}  & & \multicolumn{1}{|c}{1886} & \cell{87.91} \\ \cmidrule{1-9} \cmidrule{11-12} 
$~\rho = 10^{-4}$ & \cellbf{90} & \cellbf{90} & \cell{113} & \cell{166} & \cell{44.2692} & \cellbf{44.0611} & \cell{95.3060} & \cell{70.0946} & & \multicolumn{1}{|c}{2201} & \cell{100} \\
\bottomrule
\end{tabular}}
\end{center}
\end{footnotesize}
\vskip -0.15in
\end{table*}

\section{Conclusions}\label{sec:conclude}
We propose a variable metric method for minimizing convex functions that are compositions of proximity functions with self-concordant smooth functions. Our framework does not rely on the usual Lipschitz gradient assumption on the smooth part for its convergence theory. A highlight of this work is the new set of analytic step-size selection and correction procedures, which are best matched to the underlying problem structures. Our empirical results illustrate that the new theory leads to significant improvements in the practical performance of the algorithmic instances when tested on a variety of different applications. 

In this work, we present a convergence proof for composite minimization problems under the assumption of \emph{exact algorithmic calculations} at each step of the methods. As future research direction, an interesting problem to pursue is the extension of this analysis to include \emph{inexact calculations} and study how these errors propagate into the convergence and convergence rate guarantees \citep{Kyrillidis2014a}. We hope this paper triggers future efforts along this direction.

\section*{Acknowledgments}
This work is supported in part by the European Commission under Grant MIRG-268398, ERC Future Proof and SNF 200021-132548, SNF 200021-146750 and SNF CRSII2-147633.
The authors are also grateful to three anonymous reviewers as well as to the action editor for their thorough reviews of this work, comments and suggestions on improving the content and the presentation of this paper.

\appendix

\section{Technical proofs}\label{apdx:tech_proofs}
In this appendix, we provide the detailed proofs of the theoretical results in the main text. 
It consists of \textit{global convergence} and \textit{local convergence rate} of our algorithms and other technical proofs.

\subsection{Proof of Lemma \ref{le:unique_solution}}
Since $g$ is convex, we have $g(\yb)\geq g(\xb) + {\bf v}^T(\yb - \xb)$ for all $\vb \in\partial{g}(\xb)$.
By adding this inequality to  \eqref{eq:SC_bound1} and notting that  $F(\xb) := f(\xb) + g(\xb), ~\forall \xb$, we obtain
\begin{align}{\label{eq:00}}
F(\yb) &\geq F(\xb) + \left(\nabla{f}(\xb) + \vb\right)^T(\yb -\xb) + \omega(\norm{\yb - \xb}_{\xb}) \nonumber \\ &\geq F(\xb) - \lambda(\xb)\norm{\yb - \xb}_{\xb} + \omega(\norm{\yb - \xb}_{\xb}).
\end{align} 
Here, the last inequality is due to the generalized Cauchy-Schwartz inequality and $\lambda(\xb) := \norm{\nabla{f}(\xb) + \vb}_{\xb}^{*}$. 
Let $\mathcal{L}_F(F(\xb)) := \set{ \yb \in \dom{F} ~|~ F(\yb) \leq F(\xb)}$ be a sublevel set of $F$. Then, for any $\yb \in\mathcal{L}_F(F(\xb))$, we have $F(\yb) \leq F(\xb)$ which leads to
\begin{align}
\lambda(\xb)\norm{\yb -\xb}_{\xb} \geq \omega(\norm{\yb -\xb}_{\xb}), \nonumber
\end{align} 
due to \eqref{eq:00}.
Let $s(t) := \frac{\omega(t)}{t} = 1 - \frac{\ln(1 + t)}{t}$. The last inequality leads to $s(\norm{\yb -\xb}_{\xb}) \leq \lambda(\xb)$.
Since the equation $\ln(1 + t) = (1-\lambda(\xb))$ has unique solution $t^{*} > 0$ if $\lambda(\xb) < 1$. Moreover, the function $s$ is strictly increasing and $s(t) < 1$ for $t\geq 0$, which leads to $0 \leq t \leq t^{*}$. Since $s(\norm{\yb -\xb}_{\xb}) \leq \lambda(\xb)$, we have $\norm{\yb - \xb}_{\xb} \leq t^{\ast}$.
Thus, $\mathcal{L}_F(F(\xb))$ is bounded. Hence, $\xb^{*}$ exists due to the well-known Weierstrass theorem. 

The uniqueness of $\xb^{*}$ follows from the strict increase of $\omega(\cdot)$.
Indeed, for any $\xb\in\dom{F}$, by the convexity of $g$ we have $g(\xb) - g(\xb^{*}) \geq \vb_{*}^T(\xb - \xb^{*})$, where $\vb_{*}\in\partial{g}(\xb^{*})$.
By the self-concordant property of $f$, we also have $f(\xb) - f(\xb^{*}) \geq \nabla{f}(\xb^{*})^T(\xb - \xb^{*}) + \omega(\Vert\xb - \xb^{*}\Vert_{\xb^{*}})$. 
Adding these inequalities and using the optimality condition \eqref{eq:Fx_optimality}, i.e., $0 = \vb_{*} + \nabla{f}(\xb^{*})$, we have $F(\xb) - F(\xb^{*}) \geq \omega(\Vert\xb - \xb^{*}\Vert_{\xb^{*}})$. 
Now, let $\hat{\xb}^{*}\neq \xb^{*}$ is also an optimal solution of \eqref{eq:COP}. We have $0 = F(\hat{\xb}^{*}) - F(\xb^{*}) \geq  \omega(\Vert\xb - \xb^{*}\Vert_{\xb^{*}}) > 0$, which leads to  a contradiction. This implies that $\xb^{*}\equiv \hat{\xb}^{*}$.
\eproof

\subsection{Proofs of global convergence: Theorem \ref{th:choose_alpha} and Theorem \ref{th:global_convergence_of_grad_method}}
In this subsection, we provide the proofs of Theorem \ref{th:choose_alpha}, Lemma \ref{le:step_size} and Theorem \ref{th:global_convergence_of_grad_method} in a unified fashion. We first provide a key result quantifying the improvement of the objective as a function of the step-size $\alpha_k$.

\paragraph{Maximum decrease of the objective function:}
Let $\beta_k := \Vert \db^k \Vert_{\Hb^k}$, $\lambda_k := \Vert \db^k\Vert_{\xb^k}$ and 
\begin{equation*}
\xb^{k+1} := \xb^k + \alpha_k\db^k = (1-\alpha_k)\xb^k + \alpha_k\sb^k,
\end{equation*}
where $\alpha_k :=  \frac{\beta_k^2}{\lambda_k(\lambda_k + \beta_k^2)}\in (0, 1]$.
We will prove below that the following holds at each iteration of the algorithms 
\begin{equation}\label{eq:decrease_F_tmp}
F(\xb^{k+1}) \leq F(\xb^k) - \omega\left(\frac{\beta_k^2}{\lambda_k}\right).
\end{equation}
Moreover, the choice of $\alpha_k$ is \textit{optimal} (in the worse-case sense).

\begin{proof}
Indeed, since $g$ is convex and $\alpha_k \in (0, 1]$, we have $g(\xb^{k+1}) = g\left( (1-\alpha_k)\xb^k + \alpha_k\sb^k\right) \leq (1-\alpha_k)g(\xb^k) + \alpha_kg(\sb^k)$, which leads to\begin{align}{\label{eq:01}}
g(\xb^{k+1}) - g(\xb^k) \leq \alpha_k(g(\sb^k) - g(\xb^k)).
\end{align}
For $\xb^{k+1}\in\dom{F}$ so that $\Vert\xb^{k+1} - \xb^k\Vert_{\xb^k} < 1$, the bound \eqref{eq:SC_bound2} holds.
 Combining \eqref{eq:01} with the self-concordant property  \eqref{eq:SC_bound2} of $f$, we obtain
\begin{align}\label{eq:lm31_est4}
F(\xb^{k+1}) &\leq F(\xb^k) + \nabla{f}(\xb^k)^T(\xb^{k+1} - \xb^k) + \omega_{*}\left(\Vert\xb^{k+1} - \xb^k\Vert_{\xb^k}\right)  + \alpha_k\left(g(\mathbf{s}^k) - g(\xb^k)\right) \nonumber\\ 
&\stackrel{\eqref{eq:search_dir_dk}}{\leq} F(\xb^k) + \alpha_k\nabla{f}(\xb^k)^T\db^k + \omega_{*}\left(\alpha_k\Vert\db^k\Vert_{\xb^k}\right) + \alpha_k\left(g(\mathbf{s}^k) - g(\xb^k)\right).
\end{align}
Since $\mathbf{s}^k$ is the unique solution of \eqref{eq:cvx_subprob}, by using the optimality condition \eqref{eq:subprob_opt}, we get
\begin{align}{\label{eq:03}}
-\nabla{f}(\xb^k) - \Hb_k(\mathbf{s}^k - \xb^k) &\in \partial{g}(\mathbf{s}^k) \Rightarrow \nonumber \\ -\nabla{f}(\xb^k)^T(\mathbf{s}^k - \xb^k) - \Vert\mathbf{s}^k - \xb^k\Vert_{\Hb_k}^2 &\in (\mathbf{s}^k - \xb^k)^T\partial{g}(\mathbf{s}^k).
\end{align} 
Combining \eqref{eq:03} with $g(\xb^k) - g(\mathbf{s}^k) \geq \mathbf{v}^T(\xb^k - \mathbf{s}^k), ~\mathbf{v} \in \partial(\mathbf{s}^k)$, due to the convexity of $g(\cdot)$, we have
\begin{equation}\label{eq:lm31_est3}
g(\mathbf{s}^k) - g(\xb^k) \leq -\nabla{f}(\xb^k)^T(\mathbf{s}^k - \xb^k) - \Vert\mathbf{s}^k - \xb^k\Vert_{\Hb_k}^2.
\end{equation} 
Using \eqref{eq:lm31_est3} in \eqref{eq:lm31_est4} together with the definitions of $\beta_k$ and $\lambda_k$, we obtain
\begin{align}{\label{eq:04}}
F(\xb^{k+1}) &\overset{\eqref{eq:search_dir_dk}}{\leq} F(\xb^k) - \alpha_k\beta_k^2 + \omega_{*}\left(\alpha_k \lambda_k\right).
\end{align} 
Let us consider the function $\varphi(\alpha) := \alpha \beta_k^2 - \omega_{*}(\alpha\lambda_k)$. 
By the definition of $\omega_{\ast}(\cdot)$, we can easily show that $\varphi(\alpha)$ attains the maximum at
\begin{align}\label{eq:step_size_alpha}
\alpha_k := \frac{\beta_k^2}{\lambda_k(\lambda_k + \beta_k^2)},
\end{align} 
provided that $\alpha_k \in (0, 1]$.
We note that the choice of $\alpha_k$ as \eqref{eq:step_size_alpha} preserves the condition $\Vert\xb^{k+1} - \xb^k\Vert_{\xb^k} < 1$.
Moreover, $\varphi(\alpha_k) = \omega(\beta_k^2/\lambda_k)$, which  proves \eqref{eq:decrease_F_tmp}.
Since $\alpha_k$ maximizes $\varphi$ over $[0, 1]$, this value is optimal.
\end{proof}

\paragraph{Proof of Theorem \ref{th:choose_alpha}:}
Since $\Hb_k := \nabla^2 f(\xb^k)$, we observe $\beta_k := \Vert\db^k\Vert_{\Hb_k} \equiv \Vert\db^k\Vert_{\xb_k} =: \lambda_k$, where $\db^k \equiv \db^k_n$.
In this case, the step size $\alpha_k$ in \eqref{eq:step_size_alpha} becomes $\alpha_k = \frac{1}{1 + \lambda_k}$ which is in $(0, 1)$. 
Moreover, \eqref{eq:decrease_F_tmp} reduces to 
\begin{equation*}
F(\xb^{k+1}) \leq F(\xb^k) - \omega(\lambda_k),
\end{equation*}
which is indeed \eqref{eq:decrease_eq2}.

Finally, we assume that, for a given $\sigma \in (0, 1)$, we have $\lambda_k \geq \sigma$ for $0 \leq k \leq k_{\max} - 1$. Since $\omega$ strictly increases, it follows from \eqref{eq:decrease_eq2} by induction that
\begin{equation*}
F(\xb^{*}) \leq F(\xb^k) \leq F(\xb^0) - \sum_{j=0}^{k-1}\omega(\lambda_j) \leq F(\xb^0) - k\omega(\sigma). 
\end{equation*}
This estimate shows that the number of iterations to reach $\lambda_k < \sigma$ is at most $k_{\max} = \left\lfloor \frac{F(\xb^0) - F(\xb^{*})}{\omega(\sigma)}\right\rfloor + 1$.
\eproof

\paragraph{Proof of Lemma \ref{le:step_size}:}
Proof of Lemma \ref{le:step_size} immediately follows from \eqref{eq:decrease_F_tmp} by taking $\Hb_k \equiv \mathbf{D}_k$ and $\db^k \equiv \db^k_g$.
\eproof

\paragraph{Proof of Theorem \ref{th:global_convergence_of_grad_method}:}
We consider the sequence $\set{F(\xb^k)}_{k\geq 0}$. By Lemma \ref{le:step_size}, this sequence is nonincreasing. Moreover, $F(\xb^0) \geq F(\xb^k) \geq F(\xb^{*})$ for all $k \geq 0$. As a result, the sequence $\set{F(\xb^k)}_{k\geq 0}$ converges to a finite value $F^{*}$. By  Lemma \ref{le:step_size}, we can derive 
\begin{equation*}
\sum_{j=0}^{\infty}\omega\left( \frac{\Vert\db^j_g\Vert_{\mathbf{D}_j}^2}{\Vert\db^j_g\Vert_{\xb^j}}\right) \leq F(\xb^0) - F^{*} < +\infty.
\end{equation*}
Since the function $\omega(\tau) = \tau - \ln(1 + \tau) \geq \frac{\tau^2}{4}$ for $\tau \in (0, 1]$ is increasing, this implies that $\lim_{j\to\infty}\Vert\db^j_g\Vert_2^2/\Vert\db^j_g\Vert_{\xb^j} = 0$ due to the fact that $\mathbf{D}_k \succeq \underline{L}\mathbb{I} \succ 0$. Since $\mathcal{L}_F(F(\xb^0))$ is bounded, by applying Zangwill's convergence theorem in \citep{Zangwill1969}, we can show that  every limit point $\xb^{*}$ of the sequence $\set{\xb^k}_{k\geq 0}$ is the stationary point of \eqref{eq:Fx_optimality}. Since $\xb^{*}$ is unique, the whole sequence $\set{\xb^k}_{k\geq 0}$ converges to $\xb^{*}$.
\eproof

\subsection{Proofs of local convergence: Theorem \ref{th:quad_converg_DPNM}, Theorem \ref{th:quasi_newton} and Theorem \ref{th:convergence_of_grad_method}}
We first provide a fixed-point representation of the optimality conditions and prove some key estimates  used in the sequel.

\paragraph{Optimality conditions as fixed-point formulations:}
Let $f$ be a given standard self-concordant function, $g$ be a given proper, lower semicontinuous and convex function, and $\Hb_k$ be a given symmetric positive definite matrix. 
Besides the two key inequalities \eqref{eq:SC_bound1} and \eqref{eq:SC_bound2}, we also need the following inequality \citep[Theorem 4.1.6]{Nesterov1994,Nesterov2004} in the proofs below:
\begin{align}
&\left(1 - \norm{\yb - \xb}_{\xb}\right)^2\nabla^2{f}(\xb) \preceq \nabla^2f(\yb) \preceq \left(1 - \norm{\yb - \xb}_{\xb}\right)^{-2}\nabla^2{f}(\xb), \label{eq:self_cond3}
\end{align}
for any $\xb, \yb\in\dom{f}$ such that $\norm{\yb-\xb}_{\xb} < 1$.

Let $\xb^{*}$ be the unique solution of \eqref{eq:COP} and $\xb^{*}$ be \textit{strongly regular}, i.e., $\nabla^2f(\xb^{*}) \succ 0$. 
Then the Dikin ellipsoid $W(\xb^{*},1) := \set{ \xb \in \mathbb{R}^n ~|~ \Vert \xb - \xb^{*} \Vert_{\xb^{*}} < 1}$ also belongs to $\dom{f}$. 
Moreover, $\nabla^2f(\xb)\succ 0$ for all $\xb\in W(\xb^{*},1)$ due to \cite[Theorem 4.1.5]{Nesterov2004}. Hence, the \textit{strong regularity} assumption is sufficient to ensure that $\nabla^2f$ is positive definite in the neighborhood $W(\xb^{*},1)$.

For a fixed $\bar{\xb}\in\dom{F}$, where $F := f + g$, we redefined the following operators, based on the fixed-point characterization and \eqref{eq:cvx_subprob}:
\begin{equation}\label{eq:proof_opers}
P_{\bar{\xb}}^g(\mathbf{z}) := P_{\nabla^2 f(\bar{\xb})}^g(\mathbf{z}), 
~~ S_{\bar{\xb}}(\mathbf{z}) := \nabla^2f(\bar{\xb})\mathbf{z} - \nabla{f}(\mathbf{z}),
\end{equation}
and
\begin{equation}\label{eq:proof_opers2}
\eb_{\bar{\xb}}(\Hb_k, \mathbf{z}) := \left(\nabla^2f(\bar{\xb}) - \Hb_k\right)(\mathbf{z}  - \xb^k).
\end{equation}
Here, $P_{\bar{\xb}}^g$ and $S_{\bar{\xb}}$ can be considered as a generalized proximal operator of $g$ and the gradient step of $f$, respectively. While $\eb_{\bar{\xb}}(\Hb_k,\cdot)$ measures the error between $\nabla^2f(\bar{\xb})$ and $\Hb_k$ along the direction $z-\xb^k$.

Next, given $\sb^k$ is the unique solution of \eqref{eq:cvx_subprob}, we characterize the optimality condition of the original problem \eqref{eq:COP} and the subproblem \eqref{eq:cvx_subprob} based on the $P_{\bar{\xb}}^g$, $S_{\bar{\xb}}$ and $\eb_{\bar{\xb}}(\Hb_k,\cdot)$  operators.
From \eqref{eq:subprob_opt}, we have
\begin{equation*}
S_{\bar{\xb}}(\xb^k) + \eb_{\bar{\xb}}(\Hb_k, \sb^k) \in \nabla^2f(\bar{\xb})\sb^k + \partial{g}(\sb^k).
\end{equation*}
By the definition of $P_{\bar{\xb}}^g$ in \eqref{eq:proof_opers}, the above expression leads to
\begin{equation}\label{eq:proof_Pg}
\sb^k = P_{\bar{\xb}}^g\left(S_{\bar{\xb}}(\xb^k) + \eb_{\bar{\xb}}(\Hb_k, \sb^k)\right).
\end{equation}
By replacing $\bar{\xb}$ with $\xb^{*}$, i.e., the unique solution of \eqref{eq:COP}, into \eqref{eq:proof_Pg} we obtain
\begin{equation}\label{eq:proof_sk}
\sb^k = P_{\xb^{*}}^g\left(S_{\xb^{*}}(\xb^k) + \eb_{\xb^{*}}(\Hb_k, \sb^k)\right).
\end{equation}
Moreover, if we replace $\Hb_k$ by $\nabla^2f(\xb^{*})$  (which is assumed to be positive definite) in the fixed-point expression \eqref{eq:fixed_point}, we finally have
\begin{equation}\label{eq:proof_xstar}
\xb^{*} = P_{\xb^{*}}^g\left(S_{\xb^{*}}(\xb^{*})\right).
\end{equation}
Formulas \eqref{eq:proof_Pg} to \eqref{eq:proof_xstar} represent the fixed-point formulation of the optimality conditions.
 
\paragraph{Key estimates:}  
Let $\mathbf{r}_k := \Vert \xb^k - \xb^{*}\Vert_{\xb^{*}}$ and $\lambda_k$ be defined by \eqref{eq:prox_Newton_decrement}. \
For any $\alpha_k \in (0, 1]$, we prove the following estimates:
\begin{align}
&\Vert \sb^{k+1}_n - \sb^k_n\Vert_{\xb^k} \leq \frac{\alpha_k^2\lambda_k^2}{1 - \alpha_k\lambda_k} + \frac{2\alpha_k\lambda_k - \alpha_k^2\lambda_k^2}{(1 - \alpha_k\lambda_k)^2}\Vert\db^{k+1}\Vert_{\xb^k}, \label{eq:key_ineq1} \\
&\Vert \sb^k - \xb^{*} \Vert_{\xb^{*}} \leq \frac{\mathbf{r}_k^2}{1 - \mathbf{r}_k} + \Vert (\Hb_k - \nabla^2f(\xb^{*}))\db^{k}\Vert_{\xb^{*}}^{*},  \label{eq:key_ineq2}
\end{align}
provided that $\alpha_k\lambda_k < 1$, $\mathbf{r}_k < 1$ and the first estimate \eqref{eq:key_ineq1} requires $\Hb_k = \nabla^2f(\xb^k)$.

\begin{proof}
First, by using the nonexpansiveness of $P_{\xb^k}^g$ in Lemma  \ref{le:Pg_properties}, it follows from \eqref{eq:proof_Pg} that
\begin{align}\label{eq:th2_est1}  
\Vert\sb^{k+1} - \sb^k\Vert_{\xb^k} &= \Big\Vert P^g_{\xb^k}(S_{\xb^k}(\xb^{k+1}) + \eb_{\xb^k}(\Hb_{k+1}, \sb^{k+1})) - P^g_{\xb^k}(S_{\xb^k}(\xb^k) + \eb_{\xb^k}(\Hb_k, \sb^k))\Big\Vert_{\xb^k} \nonumber\\
&\overset{\tiny\eqref{eq:P_g_property2}}{\leq} \norm{S_{\xb^k}(\xb^{k+1}) + \eb_{\xb^k}(\Hb_k, \sb^{k}) - S_{\xb^{*}}(\xb^{*})}_{\xb^{k}}^{*}\nonumber\\
&\stackrel{(i)}{\leq} \norm{\nabla{f}(\xb^{k+1}) - \nabla{f}(\xb^k) - \nabla^2{f}(\xb^k)(\xb^{k+1} - \xb^k)}_{\xb^k}^{*}\nonumber\\
& + \norm{\eb_{\xb^k}(\Hb_{k+1},\sb^{k+1}) - \eb_{\xb^k}(\Hb_k,\sb^k)}_{\xb^k}^{*}\nonumber\\
&\stackrel{(ii)}{=}\Big\Vert\int_0^1\left(\nabla^2{f}(\xb^k + \tau(\xb^{k+1} - \xb^k)) - \nabla^2f(\xb^k)\right)(\xb^{k+1}-\xb^k)d\tau\Big\Vert_{\xb^k}^{*} \nonumber\\
& + \Big\Vert\eb_{\xb^k}(\Hb_{k+1},\sb^{k+1}) - \eb_{\xb^k}(\Hb_k, \sb^k)\Big\Vert_{\xb^k}^{*},
\end{align} 
where $(i)$ and $(ii)$ are due to the triangle inequality and the mean-value theorem, respectively.

Second, we estimate the first term in \eqref{eq:th2_est1}. For this purpose, we  define
\begin{align}{\label{eq:Sigma}}
&\mathbf{\Sigma}_{k} := \int_0^1\left( \nabla^2f(\xb^{k} + \tau(\xb^{k+1} - \xb^{k})) - \nabla^2f(\xb^{k})\right)d\tau, \nonumber\\
&\mathbf{M}_{k} := \nabla^2f(\xb^{k})^{-1/2}\mathbf{\Sigma}_{k} \nabla^2f(\xb^{k})^{-1/2}. 
\end{align}
Based on the proof of \citep[Theorem 4.1.14]{Nesterov2004}, we can show that
\begin{equation*}
\norm{\mathbf{M}_{k}}_{2} \leq \frac{\Vert\xb^{k+1} - \xb^{k}\Vert_{\xb^{k}}}{1 - \norm{\xb^{k+1} - \xb^{k}}_{\xb^{k}}}.
\end{equation*} 
Using this estimate, the definition \eqref{eq:Sigma} and noting that $\xb^{k+1} = \xb^k + \alpha_k\db^k$, we obtain
\begin{align}\label{eq:proof_term1b}
\Vert\mathbf{\Sigma}_{k}(\xb^{k+1} - \xb^{k})\Vert_{\xb^{k}}^{*} &=\big[(\xb^{k+1} - \xb^{k})^T\Sigma_k\nabla^2{f}(\xb^k)^{-1}\Sigma_k(\xb^{k+1} - \xb^{k})\big]^{1/2}\nonumber\\
&=\big[ (\xb^{k+1} - \xb^{k})^T\nabla^2{f}(\xb^k)^{1/2}\mathbf{M}_k^T\mathbf{M}_k\nabla^2{f}(\xb^k)^{1/2}(\xb^{k+1} - \xb^{k}) \big]^{1/2} \nonumber\\
&=\Vert\mathbf{M}_k\nabla^2{f}(\xb^k)^{1/2}(\xb^{k+1} - \xb^{k}) \Vert_2 \nonumber\\
&\stackrel{(i)}{\leq} \Vert\mathbf{M}_{k}\Vert_{2} \big[(\xb^{k+1} - \xb^{k})^T\nabla^2f(\xb^k)(\xb^{k+1} - \xb^{k})\big]^{1/2}  \nonumber \\ 
&=\Vert\mathbf{M}_{k}\Vert_{2}\Vert\xb^{k+1}-\xb^k\Vert_{\xb^k}\nonumber\\
&\leq  \frac{\Vert\xb^{k+1} - \xb^{k}\Vert_{\xb^{k}}^2}{1 - \Vert\xb^{k+1} - \xb^{k}\Vert_{\xb^{k}}} \nonumber \\ 
&= \frac{\alpha_k^2\Vert\db^k\Vert_{\xb^k}^2}{1 - \alpha_k\Vert\db^k\Vert_{\xb^k}},
\end{align} 
where $(i)$ is due to the Cauchy-Schwartz inequality.

Third, we consider the second term in \eqref{eq:th2_est1} for $\Hb_k \equiv \nabla^2f(\xb^k)$. 
By the definition of $\eb_{\bar{\xb}}$, it is obvious that $\eb_{\xb^k}(\nabla^2f(\xb^k), \sb^k) = 0$. Hence, we have
\begin{align}\label{eq:proof_term1c}
\mathcal{T}_2 &:= \big\Vert\eb_{\xb^k}(\nabla^2f(\xb^{k+1}), \sb^{k+1}) - \eb_{\xb^k}(\nabla^2f(\xb^k), \sb^k)\big\Vert_{\xb^k}^{*}\nonumber\\
&= \big\Vert\eb_{\xb^k}(\nabla^2f(\xb^{k+1}), \sb^{k+1})\big\Vert_{\xb^k}^{*} \\
&= \Big\Vert\big(\nabla^2f(\xb^{k+1}) - \nabla^2f(\xb^k) \big)\db^{k+1}\Big\Vert_{\xb^k}^{*}. \nonumber 
\end{align}
We now define the following quantity, whose spectral norm we bound below
\begin{equation}{\label{eq:N}}
\mathbf{N}_k := \nabla^2f(\xb^k)^{-1/2}\left(\nabla^2f(\xb^{k+1}) - \nabla^2f(\xb^k)\right)\nabla^2f(\xb^k)^{-1/2}.
\end{equation}
By applying \eqref{eq:self_cond3} with $\xb = \xb^k$ and $\yb = \xb^{k+1}$, we can bound the spectral norm of $\mathbf{N}_k$ as follows
\begin{align}\label{eq:norm_Q}
\norm{\mathbf{N}_k}_2 &\leq \max\set{1 - \left(1 - \Vert\xb^{k+1} - \xb^k\Vert_{\xb^k}\right)^2, \left(1 - \Vert\xb^{k+1} - \xb^k\Vert_{\xb^k}\right)^{-2} - 1} \nonumber\\
& = \frac{2\Vert\xb^{k+1} - \xb^k\Vert_{\xb^k} - \Vert\xb^{k+1} - \xb^k\Vert_{\xb^k}^2}{(1 - \Vert\xb^{k+1} - \xb^k\Vert_{\xb^k})^2}.
\end{align}
Therefore, from \eqref{eq:proof_term1c} we can obtain the following estimate
\begin{align}\label{eq:th2_est3}
\left(\mathcal{T}_2\right)^2 &= \eb_{\xb^{k}}(\nabla^2f(\xb^{k+1}), \sb^{k+1})^T\nabla^2f(\xb^k)^{-1}\eb_{\xb^{k}}(\nabla^2f(\xb^{k+1}), \sb^{k+1}) \nonumber\\
& = (\db^{k+1})^T~\nabla^2f(\xb^k)^{1/2}~\mathbf{N}_k^2~\nabla^2f(\xb^k)^{1/2}~\db^{k+1}\nonumber\\
&\leq \norm{\mathbf{N}_k}^2_2 \Vert\db^{k+1}\Vert_{\xb^k}^2.
\end{align}
By substituting \eqref{eq:norm_Q} into \eqref{eq:th2_est3} and noting that $\alpha_k\db^k = \xb^{k+1} - \xb^k$, we obtain
\begin{equation}\label{eq:th2_est4}
\mathcal{T}_2 \leq \frac{2\alpha_k\norm{\db^k}_{\xb^k} - \alpha_k^2\norm{\db^k}_{\xb^k}^2}{(1 - \alpha_k\norm{\db^k}_{\xb^k})^2}\Vert\db^{k+1}\Vert_{\xb^k}. 
\end{equation}
Now, by substituting \eqref{eq:proof_term1b} and \eqref{eq:th2_est4} into \eqref{eq:th2_est1} and noting that $\Hb_k \equiv \nabla^2f(\xb^k)$, $\sb^k \equiv \sb^k_n$, $\db^k \equiv \db^k_n$ and $\lambda_k \equiv \norm{\db^k_n}_{\xb^k}$, we obtain
\begin{align*}
\big\Vert\sb^{k+1}_n - \sb^k_n\big\Vert_{\xb^k} \leq \frac{\alpha_k^2\norm{\db^k_n}_{\xb^k}^2}{1 - \alpha_k\norm{\db^k_n}_{\xb^k}} + \frac{2\alpha_k\norm{\db^k_n}_{\xb^k} -
\alpha_k^2\norm{\db^k_n}_{\xb^k}^2}{(1-\alpha_k\norm{\db^k_n}_{\xb^k})^2}\Vert\db^{k+1}_n\Vert_{\xb^k}.
\end{align*}
which is indeed \eqref{eq:key_ineq1}.

Similarly to the proof of \eqref{eq:th2_est1} and \eqref{eq:proof_term1b}, we have
\begin{align}\label{eq:th2_est1a}
\Vert\sb^{k} - \xb^{*}\Vert_{\xb^{*}} &\overset{\eqref{eq:proof_xstar}}{=} \Big\Vert P^g_{\xb^{*}}(S_{\xb^{*}}(\xb^k) + \eb_{\xb^{*}}(\Hb_k, \sb^{k})) - P^g_{\xb^{*}}(S_{\xb^{*}}(\xb^{*}))\Big\Vert_{\xb^{*}} \nonumber\\
&\overset{\tiny\eqref{eq:P_g_property2}}{\leq} \norm{S_{\xb^{*}}(\xb^{k}) + \eb_{\xb^{*}}(\Hb_k, \sb^{k}) - S_{\xb^{*}}(\xb^{*})}_{\xb^{*}}^{*}\nonumber\\
&\leq \norm{\nabla{f}(\xb^{k}) - \nabla{f}(\xb^{*}) - \nabla^2{f}(\xb^{*})(\xb^{k} - \xb^{*})}_{\xb^{*}}^{*} + \norm{\eb_{\xb^{*}}(\Hb_k,\sb^k)}_{\xb^{*}}^{*}\nonumber\\
&= \norm{\int_0^1\left(\nabla^2{f}(\xb^{*} + \tau(\xb^{k} - \xb^{*})) - \nabla^2f(\xb^{*})\right)(\xb^k - \xb^{*})d\tau}_{\xb^{*}}^{*} + \norm{\eb_{\xb^{*}}(\Hb_k,\sb^k)}_{\xb^{*}}^{*}\nonumber\\
&\overset{\eqref{eq:proof_term1b}}{\leq} \frac{\Vert\xb^k - \xb^{*}\Vert_{\xb^{*}}^2}{1 - \norm{\xb^k-\xb^{*}}_{\xb^{*}}} + \big\Vert\left(\Hb_k - \nabla^2f(\xb^{*})\right)\db^k\big\Vert_{\xb^{*}}^{*},
\end{align}
which is indeed \eqref{eq:key_ineq2} since $\mathbf{r}_k = \norm{\xb^k - \xb^{*}}_{\xb^{*}}$.
\end{proof}

\paragraph{Proof of Theorem \ref{th:quad_converg_DPNM}:}
Since $\xb^k = \sb^k_n - \db^k_n$ due to \eqref{eq:DPNM}, we have $\xb^{k+1}  = \xb^k + \alpha_k\db^k_n = \sb^k_n - (1-\alpha_k)\db^k_n$, 
which leads to 
\begin{align*}
\db^{k+1}_n &= \sb^{k+1}_n - \xb^{k+1} = \sb^{k+1}_n - \sb^k_n + (1-\alpha_k)\db^k_n.
\end{align*} 
By applying the triangle inequality to the above expression, we have
\begin{align}\label{eq:dk1_est}
\Vert\db^{k+1}_n\Vert_{\xb^k} = \Vert\sb^{k+1}_n - \sb^k_n + (1-\alpha_k)\db^k_n\Vert_{\xb^k} \leq \Vert\sb^{k+1}_n - \sb^k_n\Vert_{\xb^k} + (1-\alpha_k)\Vert\db^k_n\Vert_{\xb^k}.
\end{align}
Substituting \eqref{eq:key_ineq1} into \eqref{eq:dk1_est} we obtain
\begin{align*}
\Vert \db^{k+1}_n\Vert_{\xb^k} \leq \frac{\alpha_k^2\lambda_k^2}{1 - \alpha_k\lambda_k} + \frac{2\alpha_k\lambda_k - \alpha_k^2\lambda_k^2}{(1 - \alpha_k\lambda_k)^2}\Vert\db^{k+1}\Vert_{\xb^k} + (1-\alpha_k)\lambda_k.
\end{align*}
Rearranging this inequality we get
\begin{align}\label{eq:th2_est5}
\Vert\db^{k+1}_n\Vert_{\xb^k} \leq \left(\frac{\left(1 - \alpha_k\lambda_k\right)\left(1-\alpha_k + (2\alpha_k^2-\alpha_k)\lambda_k\right)}{1 - 4\alpha_k\lambda_k +
2\alpha_k^2\lambda_k^2}\right)\lambda_k,
\end{align}
provided that $1 - 4\alpha_k\lambda_k + 2\alpha_k^2\lambda_k^2 > 0$.
Now, by applying \eqref{eq:self_cond3} with $\xb = \xb^k$ and $\yb = \xb^{k+1}$, one can show that
\begin{align}\label{eq:th2_est6}
\Vert\db^{k+1}_n\Vert_{\xb^{k+1}} \leq \frac{\Vert\db^{k+1}_n\Vert_{\xb^k}}{1 - \alpha_k\Vert\db^k_n\Vert_{\xb^k}}.
\end{align}
We note that $1 - 4\alpha_k\lambda_k + 2\alpha_k^2\lambda_k^2 > 0$ if $\alpha_k\lambda_k < 1 - 1/\sqrt{2}$.
By combining \eqref{eq:th2_est5} and \eqref{eq:th2_est6} we obtain
\begin{align*} 
\Vert\db^{k+1}_n\Vert_{\xb^{k+1}} \leq \left(\frac{1-\alpha_k + (2\alpha_k^2 - \alpha_k)\lambda_k}{1-4\alpha_k\lambda_k + 2\alpha_k^2\lambda_k^2}\right)\lambda_k,
\end{align*}
which is \eqref{eq:DPNM_estimate}.

Next, we consider the sequence $\set{\xb^k}_{k\geq 0}$ generated by damped step proximal Newton method \eqref{eq:DPNM} with the step size $\alpha_k = (1 + \lambda_k)^{-1}$. 
It is clear that \eqref{eq:DPNM_estimate} is transformed into
\begin{align}
\lambda_{k+1} \leq \frac{2\lambda_k}{1 - 2\lambda_k - \lambda_k^2}\lambda_k.
\end{align} Assuming $\lambda_k \leq \bar{\sigma} := \sqrt{5}-2$, we can easily deduce that $\frac{2\lambda_k}{1-2\lambda_k - \lambda_k^2} \leq 1$ and thus, $\lambda_{k+1} \leq \lambda_k$. 
By induction, if $\lambda_0 \leq \bar{\sigma}$ then, $\lambda_{k+1} \leq \lambda_k$ for all $k \geq 0$. Moreover, we have $\lambda_{k+1} \leq \frac{2}{1 - 2\bar{\sigma} - \bar{\sigma}^2}\lambda_k^2$, which shows that the sequence $\set{\lambda_k}_{k\geq 0}$ converges to zero at a quadratic rate, which completes the proof of part b).

Now, since $\alpha_k = 1$, the estimate \eqref{eq:DPNM_estimate} reduces to $\lambda_{k+1} \leq \frac{\lambda_k^2}{1 - 4\lambda_k + 2\lambda_k^2}$. By the same argument as in the proof of part b), we can show that the sequence $\set{\lambda_k}_{k\geq 0}$ converges to zero at a quadratic rate.

Finally, we prove the last statement in Theorem \ref{th:quad_converg_DPNM}.
By substituting $\Hb_k := \nabla^2{f}(\xb^k)$ into \eqref{eq:key_ineq1}, we obtain
\begin{equation}\label{eq:th5a_est1}
\Vert \sb^k - \xb^{*} \Vert_{\xb^{*}} \leq \frac{\mathbf{r}_k^2}{1 - \mathbf{r}_k} + \Vert (\nabla^2{f}(\xb^k) - \nabla^2f(\xb^{*}))\db^{k}\Vert_{\xb^{*}}^{*}.
\end{equation}
Let $\mathcal{T}_3 := \Vert (\nabla^2{f}(\xb^k) - \nabla^2f(\xb^{*}))\db^{k}\Vert_{\xb^{*}}^{*}$. 
Similarly to the proof of \eqref{eq:th2_est4}, we can show that
\begin{equation}\label{eq:th5a_est2}
\mathcal{T}_3 \leq \left[\frac{2\Vert\xb^{k} - \xb^{*}\Vert_{\xb^{*}} - \Vert\xb^{k} - \xb^{*}\Vert_{\xb^{*}}^2}{(1 - \Vert\xb^{k} - \xb^{*}\Vert_{\xb^{*}})^2}\right]\Vert\db^k\Vert_{\xb^{*}} \leq \alpha_k\frac{(2-\rb_k)\rb_k}{(1-\rb_k)^2}(\rb_{k+1} + \rb_k).
\end{equation}
Here the second inequality follows from the fact that $\Vert\db^k\Vert_{\xb^{*}} = \alpha_k\Vert\xb^{k+1} - \xb^k\Vert_{\xb^{*}} \leq \alpha_k[\Vert\xb^{k+1}-\xb^{*}\Vert_{\xb^{*}} + \Vert\xb^k - \xb^{*}\Vert_{\xb^{*}}] = \alpha_k(\rb_{k+1} + \rb_k)$. We also have $\rb_{k+1} = \Vert\xb^{k+1} - \xb^{*}\Vert_{\xb^{*}} = \Vert(1-\alpha_k)\xb^k + \alpha_k\sb^k - \xb^{*}\Vert_{\xb^{*}} \leq (1-\alpha_k)\rb_k + \alpha_k\Vert\sb_k - \xb^{*}\Vert_{\xb^{*}}$. Using these inequalities, \eqref{eq:th5a_est2} and \eqref{eq:th5a_est1} we get
\begin{align}\label{eq:thm5_est3}
\rb_{k+1} \leq (1-\alpha_k)\rb_k + \alpha_k\frac{\rb_k^2}{1-\rb_k} + \alpha_k^2\frac{(2-\rb_k)\rb_k}{(1-\rb_k)^2}(\rb_{k+1} + \rb_k).
\end{align}
Rearranging this inequality to obtain
\begin{align}\label{eq:thm5_est4}
\rb_{k+1} \leq \left(\frac{1 - \alpha_k + (2\alpha_k^2 + 3\alpha_k - 2)\rb_k + (1-\alpha_k - \alpha_k^2)\rb_k^2}{1 - 2(1 + \alpha_k^2)\rb_k + (1+\alpha_k^2)\rb_k^2}\right)\rb_k.
\end{align}
We consider two cases:

\noindent\textbf{Case 1}: $\alpha_k = 1$: We have $\rb_{k+1} \leq \frac{3 - \rb_k}{1 - 4\rb_k + 2\rb_k^2}\rb_k^2$.  Hence, if $\rb_k < 1-1/\sqrt{2}$ then $1 - 4\rb_k + 2\rb_k^2 > 0$. Moreover, $\rb_{k+1} \leq \rb_k$ if $3\rb_k - \rb_k^2 < 1 - 4\rb_k + 2\rb_k^2$, which is satisfied if $\rb_k < (7 - \sqrt{37})/6\approx 0.152873$.
Now, if we assume that $\rb_0 \leq \sigma \in (0, (7 - \sqrt{37})/6)$, then, by induction, we have $\rb_{k+1} \leq \frac{3-\sigma}{1 - 4\sigma + 2\sigma^2}\rb_k^2$. This shows that $\set{\rb_k}_{k\geq 0}$ locally converges to $0^{+}$ at a quadratic rate. Since $\rb_k := \norm{\xb^k - \xb^{*}}_{\xb^{*}}$, we can conclude that $\xb^k \to\xb^{*}$ at a quadratic rate as $k\to\infty$.

\noindent\textbf{Case 2}: $\alpha_k = (1 + \lambda_k)^{-1}$: Since $\lambda_k = \norm{\xb^{k+1} - \xb^k}_{\xb^k} \leq \frac{\norm{\xb^{k+1} - \xb^{*}}_{\xb^{*}} + \norm{\xb^{k} - \xb^{*}}_{\xb^{*}}}{1 - \norm{\xb^k - \xb^{*}}_{\xb^{*}}} = \frac{\rb_{k+1} + \rb_k}{1 - \rb_k}$. We have $1-\alpha_k \leq \frac{\rb_{k+1} + \rb_k}{(1+\lambda_k)(1 - \rb_k)} \leq \frac{\rb_{k+1} + \rb_k}{1 - \rb_k}$. Substituting this into \eqref{eq:thm5_est3} and using the fact that $\alpha_k \leq 1$, we have 
\begin{equation*}
\rb_{k+1} \leq \frac{(\rb_{k+1} + \rb_k)\rb_k}{1 - \rb_k} + \frac{\rb_k^2}{1-\rb_k} + \frac{(2-\rb_k)\rb_k}{(1-\rb_k)^2}(\rb_{k+1} + \rb_k).
\end{equation*}
Rearranging this inequality, we finally get
\begin{equation}\label{eq:thm5_est6}
\rb_{k+1} \leq \frac{4-3\rb_k}{1 - 5\rb_k + 3\rb_k^2}\rb_k^2.
\end{equation}
Since $1 - 5\rb_k + 3\rb_k^2 > 0$ if $\rb_k < (5-\sqrt{13})/6$, we can see from \eqref{eq:thm5_est6} that $\rb_k < (9-\sqrt{57})/12 \approx 0.120847$ then $\rb_{k+1} \leq \rb_k$. By induction, if we choose $\rb_0 \leq \bar{\sigma} \in (0, (9-\sqrt{57})/12)$ then $\rb_{k+1} \leq \frac{4-3\bar{\sigma}}{1 - 5\bar{\sigma} + 3\bar{\sigma}^2}\rb_k^2$, which shows that $\set{\rb_k}_{k\geq 0}$ converges to $0^{+}$ at a quadratic rate. Consequently, the sequence $\set{\xb^k}_{k\geq 0}$ locally converges to $\xb^{*}$ at a quadratic rate.
\eproof

\paragraph{Proof of Theorem \ref{th:quasi_newton}:}
We first prove the statement (a). Since $\xb^{k+1} \equiv \sb^k_q$ due to \eqref{eq:PFQNM}, from \eqref{eq:key_ineq2} we have
\begin{equation}\label{eq:thm10_main_est}
\mathbf{r}_{k+1} \leq \frac{\mathbf{r}_k^2}{1 - \mathbf{r}_k} + \Big\Vert\left(\Hb_k - \nabla^2f(\xb^{*})\right)(\xb^{k+1} - \xb^{k})\Big\Vert_{\xb^{*}}^{*}.
\end{equation}
Now, by using the condition \eqref{eq:Dennis_More_cond}, we can easily show that the sequence $\set{\xb^k}_{k\geq 0}$ converges super-linearly to $\xb^{*}$ provided that $\norm{\xb^0 - \xb^{*}}_{\xb^{*}} \leq \rho_0 < 1$.

Next, we prove the statement (b).
It is well-known (see, e.g., \cite{Nocedal2006}) that if matrix $\Hb_k$ is positive definite and $(\yb^k)^T(\zb^k) > 0$ then the matrix $\Hb_{k+1}$ updated by \eqref{eq:bfgs_formula} is also positive definite. Indeed, we have $(\yb^k)^T(\zb^k) = \int_0^1(\zb^k)^T\nabla^2f(\xb^k + t\zb^k)\zb^k dt$. Therefore, under the condition $\norm{\zb^k}_{\xb^k} < 1$, we can show that $(\yb^k)^T(\zb^k) \geq (\zb^k)^T\nabla^2f(\xb^k)\zb^k = \norm{\zb^k}_{\xb^k}^2 > 0$. By multiplying \eqref{eq:bfgs_formula} by $\zb^k$ we can easily see that $\Hb_{k+1}$ satisfies the secant equation \eqref{eq:scant_eq}. 

Finally, we estimate $\norm{\yb^k - \nabla^2f(\xb^{*})\zb^k}_{\xb^{*}}^{*}$ as follows
\begin{equation}\label{eq:thm10_main_est2}
\Vert\yb^k - \nabla^2f(\xb^{*})\zb^k\Vert_{\xb^{*}}^{*} \leq \frac{\mathbf{r}_k + \mathbf{r}_{k+1}}{(1-\mathbf{r}_k)(1-\mathbf{r}_{k+1})} \Vert\zb^k\Vert_{\xb^{*}}.
\end{equation}
Now, by assumption that $\sum_{k=0}^{\infty} \mathbf{r}_k < +\infty$, we obtain from \eqref{eq:thm10_main_est2} that $\sum_{k=0}^{\infty}\varepsilon_k <+\infty$, where $\varepsilon_k := \frac{\mathbf{r}_k + \mathbf{r}_{k+1}}{(1-\mathbf{r}_k)(1-\mathbf{r}_{k+1})}$. 
By applying \cite[Theorem 3.2.]{Byrd1989}, we can show that the Dennis-Mor\'{e} condition \eqref{eq:Dennis_More_cond} is satisfied. This implies that the sequence $\set{\xb^k}_{k\geq 0}$ generated by scheme \eqref{eq:PFQNM} converges super-linearly to $\xb^{*}$.
\eproof

\paragraph{Proof of Theorem \ref{th:convergence_of_grad_method}:}
For $\norm{\xb^k - \xb^{*}}_{\xb^{*}} < 1$, from \eqref{eq:key_ineq2}, we have
\begin{equation}\label{eq:th9_est1}
\Vert \sb^k_g - \xb^{*}\Vert_{\xb^{*}} \leq \frac{\Vert \xb^k - \xb^{*}\Vert_{\xb^{*}}^2}{1 - \Vert \xb^k - \xb^{*}\Vert_{\xb^{*}}} + \norm{\left(\mathbf{D}_k - \nabla^2f(\xb^{*})\right)\db^k}_{\xb^{*}}^{*}.
\end{equation}
Now, using the condition $\norm{\left(\mathbf{D}_k - \nabla^2f(\xb^{*})\right)\db^k}_{\xb^{*}}^{*} \leq \frac{1}{2}\Vert\db^k_g\Vert_{\xb^{*}}$, \eqref{eq:th9_est1} implies
\begin{align*}
\Vert \sb^k_g - \xb^{*}\Vert_{\xb^{*}} &\leq \frac{\Vert \xb^k - \xb^{*}\Vert_{\xb^{*}}^2}{1 - \Vert \xb^k - \xb^{*}\Vert_{\xb^{*}}} + \gamma\Vert\db^k_g\Vert_{\xb^{*}} \\
&\leq \frac{\Vert \xb^k - \xb^{*}\Vert_{\xb^{*}}^2}{1 - \Vert \xb^k - \xb^{*}\Vert_{\xb^{*}}} + \gamma\Vert\sb^k_g - \xb^{*}\Vert_{\xb^{*}} + \gamma\Vert\xb^k - \xb^{*}\Vert_{\xb^{*}},
\end{align*}
where $\gamma \in (0, 1/2)$.
Rearranging this inequality, we obtain
\begin{align}\label{eq:th9_est2}
\Vert \sb^k_g - \xb^{*}\Vert_{\xb^{*}} &\leq \frac{1}{1-\gamma}\left(\gamma + \frac{\Vert \xb^k - \xb^{*}\Vert_{\xb^{*}}}{1 - \Vert \xb^k - \xb^{*}\Vert_{\xb^{*}}}\right)\Vert\xb^k - \xb^{*}\Vert_{\xb^{*}}.
\end{align}
Now, since $\xb^{k+1} = \xb^k + \alpha_k\db^k_g = (1-\alpha_k)\xb^k + \alpha_k\sb^k_g$, we can further estimate from \eqref{eq:th9_est2} as
\begin{align}\label{eq:th9_est3}
\Vert \xb^{k+1} - \xb^{*}\Vert_{\xb^{*}} &\leq (1-\alpha_k)\Vert \xb^k - \xb^{*}\Vert_{\xb^{*}} + \alpha_k\Vert \sb^k_g - \xb^{*}\Vert_{\xb^{*}} \nonumber\\
&\leq \left[ 1-\alpha_k + \frac{\alpha_k}{1-\gamma}\left(\gamma + \frac{\Vert \xb^k - \xb^{*}\Vert_{\xb^{*}}}{1 - \Vert \xb^k - \xb^{*}\Vert_{\xb^{*}}}\right)\right]\Vert \xb^k - \xb^{*}\Vert_{\xb^{*}}.
\end{align}
Let us define $\tilde{\psi}_k := (1-\alpha_k) + \frac{\alpha_k}{1-\gamma}\left(\gamma + \frac{\norm{\xb^{k} - \xb^{\ast}}_{\xb^{\ast}}}{1 - \norm{\xb^{k} - \xb^{\ast}}_{\xb^{\ast}}}\right)$. Then, for $\gamma < \frac{1}{2}$, $\tilde{\psi}_k < 1$ if  $\norm{\xb^{k} - \xb^{\ast}}_{\xb^{\ast}} < \frac{1 - 2\gamma}{2(1-\gamma)}$. Therefore, by induction, if we choose $\norm{\xb^0 - \xb^{*}}_{\xb^{*}} < \frac{1 - 2\gamma}{2(1-\gamma)}$, then $\norm{\xb^{k} - \xb^{\ast}}_{\xb^{\ast}} < \frac{1 - 2\gamma}{2(1-\gamma)}$ for all $k\geq 0$. Moreover, $\norm{\xb^{k+1} - \xb^{\ast}}_{\xb^{\ast}} \leq \tilde{\psi}_k\norm{\xb^{k} - \xb^{\ast}}_{\xb^{\ast}}$ for $k\geq 0$ and $\tilde{\psi}_k \in [0, 1)$. This implies that $\set{\norm{\xb^{k} - \xb^{\ast}}_{\xb^{\ast}}}_{k\geq 0}$ linearly converges to zero with the factor $\tilde{\psi}_k$.

Finally, we assume that $\mathbf{D}_k := L_k\mathbb{I}$, the quantity in \eqref{eq:N} satisfies
\begin{align}\nonumber
\mathbf{N}_{*} := \nabla^2{f}(\xb^{*})^{-1/2}\left(\nabla^2{f}(\xb^{*}) - \Hb_k\right) \nabla^2{f}(\xb^{*})^{-1/2} = \mathbb{I} -  L_k\nabla^2{f}(\xb^{*})^{-1}. 
\end{align} 
Then, we can easily observe that
\begin{equation}\label{eq:proof_Nest}
\norm{\mathbf{N}_{*}}_{2} = \norm{\mathbb{I} - L_k\nabla^2{f}(\xb^{*})^{-1}}_2 \leq \max\set{\Big\vert 1 - \frac{L_k}{\sigma_{\min}^{*}}\Big\vert, \Big\vert 1 - \frac{L_k}{\sigma_{\max}^{*}}\Big\vert} := \gamma_{*},
\end{equation} where $\sigma_{\min}^{*}$ (respectively, $\sigma_{\max}^{*}$) is the smallest (respectively, largest) eigenvalue of $\nabla^2f(\xb^{*})$. 
Using the estimate \eqref{eq:proof_Nest}, we can derive 
\begin{align*}
\big\Vert\left(\mathbf{D}_k - \nabla^2f(\xb^{*})\right)\db^k_g\big\Vert_{\xb^{*}}^{*} &\stackrel{\eqref{eq:proof_Nest}}{\leq} \Vert\mathbf{N}_{*}\Vert_{2} \Vert\sb^k - \xb^k\Vert_{\xb^{*}} \nonumber\\
&\leq \gamma_{*}\Vert\db^k_g\Vert_{\xb^{*}},
\end{align*}
which proves the last conclusion of Theorem \ref{th:convergence_of_grad_method}.
\eproof

\bibliographystyle{acm}
\bibliography{tran_bibtex_new}
\end{document}